\documentclass{article}

\usepackage[main, final]{neurips_2026}

\usepackage{amsmath}

\usepackage{xcolor}
\usepackage{booktabs}
\usepackage{multirow}
\usepackage[T1]{fontenc}
\usepackage{FiraMono}

\usepackage{microtype}
\usepackage[normalem]{ulem}

\usepackage[export]{adjustbox}

\usepackage{natbib}
\usepackage{xurl}
\definecolor{mydarkred}{rgb}{0.6,0,0}
\definecolor{mydarkgreen}{rgb}{0,0.6,0}
\usepackage[colorlinks,
linkcolor=mydarkred,
citecolor=mydarkgreen]{hyperref}
\usepackage{graphicx}
\usepackage{subcaption}

\usepackage{booktabs}
\usepackage{multirow}

\usepackage{algorithm}
\usepackage{comment}

\usepackage{math}
\usepackage{enumitem}
\usepackage[capitalize, noabbrev]{cleveref}

\theoremstyle{plain}
\newtheorem{theorem}{Theorem}[section]
\newtheorem{proposition}[theorem]{Proposition}
\newtheorem{lemma}[theorem]{Lemma}
\newtheorem{corollary}[theorem]{Corollary}
\theoremstyle{definition}
\newtheorem{definition}[theorem]{Definition}

\theoremstyle{remark}
\newtheorem{remark}[theorem]{Remark}

\newcommand{\MO}{{\mathrm{MM}}}

\newcommand{\bag}{\mathbf{B}}
\newcommand{\prop}{\alpha}
\newcommand{\propvec}{\valpha}
\newcommand{\classifier}{{\boldsymbol{f}}}
\newcommand{\Loss}{\mathcal{L}} %
\newcommand{\M}{\mathbf{M}} %
\newcommand{\Mhat}{\widehat{\M}} %
\newcommand{\E}{\mathbb{E}} %
\newcommand{\Rademacher}{\mathfrak{R}} %

\newcommand{\indicator}[1]{\mathbb{I}\braces{#1}} %
\newcommand{\order}[1]{^{(#1)}}
\newcommand{\simplex}[1]{\Delta^{#1 - 1}} %

\title{Rethinking Learning from Label Proportions \\ via Moment Matching}

\author{%
  \textbf{Tianhao Ma}$^{1}$, \textbf{Wei Wang}$^{2}$, \textbf{Yivan Zhang}$^{1,2}$, \textbf{Dong-Dong Wu}$^{1,2}$,\\
  \textbf{Takashi Ishida}$^{2,1}$, \textbf{Gang Niu}$^{2}$, \textbf{Masashi Sugiyama}$^{2,1}$\\
  $^{1}$The University of Tokyo, $^{2}$RIKEN AIP
}

\begin{document}

\maketitle

\begin{abstract}
Learning from label proportions (LLP) is a weakly supervised setting where training data are grouped into bags with only bag-level label proportions observed, and the goal is to learn a classifier that predicts labels for individual instances.
Recently, a large body of methods has emerged; however, most of them rely on the assumption that instance--label pairs are sampled i.i.d.~and randomly assembled into bags, an assumption often violated in real-world scenarios.
Our experimental results indicate that these methods perform suboptimally under non-random settings.
In this work, we adopt a more realistic assumption: bags are sampled i.i.d., and instance labels are conditionally independent given the instances.
Under this setting, we show that the label counts follow a Poisson multinomial distribution.
Motivated by this observation, we propose LLP via moment matching~(LLP-MM), a simple yet effective approach which leverages multi-order factorial moments as training objectives, encouraging the classifier predictions to match these moments and thereby more fully exploit the underlying statistical structure of the data.
Extensive experiments on benchmark datasets under various bag construction strategies demonstrate the effectiveness of our approach while maintaining high computational efficiency.
Our code is available at {\hypersetup{urlcolor=black}\urlstyle{same}\url{https://github.com/TianhaoMa5/LLP-MM}}.

\end{abstract}

\section{Introduction}
Recently, the use of large-scale data has pushed machine learning and artificial intelligence to unprecedented levels of performance.
However, in many real-world scenarios the training data are imperfect: labels may be ambiguous, weak, or even incorrect.
These issues have motivated a popular machine learning paradigm, weakly supervised learning~(WSL)~\citep{sugiyama2022machine}, which aims to train robust models from such imperfect supervision. Representative forms of WSL include semi-supervised learning~\citep{li2021semi,lai2022smoothed}, noisy-label learning~\citep{patrini2017making}, partial-label learning~\citep{lv2020progressive,wang2025rethinking}, and unlabeled-unlabeled learning~\citep{lu2021binary,wei2024consistent}.

Learning from label proportions~(LLP) is an important WSL problem that has recently attracted a lot of attention~\citep{quadrianto2009estimating,tsai2020learning,zhang2022learning,ma2025forming,ma2026learning}.
It naturally arises in many real-world applications, such as privacy-sensitive scenarios~\citep{diemert2022lessons,apple2025skan4,google2024privateaggregation}, as well as in image analysis tasks where instance-level annotations are difficult or costly to obtain~\citep{lai2014video,tokunaga2020negative}.
In these scenarios, only label proportions are available for training instead of instance-level annotations, while the goal remains to learn a classifier that predicts labels for individual instances.

In recent years, many LLP methods with strong performance as well as theoretical guarantees have been proposed~\citep{busa-fekete2023easy,li2024optimistic,busa2025nearly}. These methods are often based on the \emph{random-bag assumption} about the generation process of LLP data. First, instance--label pairs are sampled independently from the same joint density. Then, the bag is constructed by grouping instances uniformly at random. Finally, the label proportion is calculated as the fraction of instances from each class. However, in many real-world applications, bags are rarely formed uniformly at random. For example, election polling data is a practical application of LLP where instance-level labels are confidential while the results of regional support rates can be considered as label proportions~\citep{sun2017probabilistic,ma2026learning}. Here, voters within the same region often exhibit correlated voting behaviors, so the resulting groups are affected by geographic and demographic properties rather than random assignment.
Similar structured-bag grouping mechanisms also arise in online advertising and privacy-preserving measurement, where instances are often grouped according to pre-defined categorical attributes or platform-defined grouping keys, rather than being grouped uniformly at random~\citep{diemert2022lessons,brahmbhatt2024llp}. 
\begin{figure}[t]
  \centering
\includegraphics[width=1\linewidth]{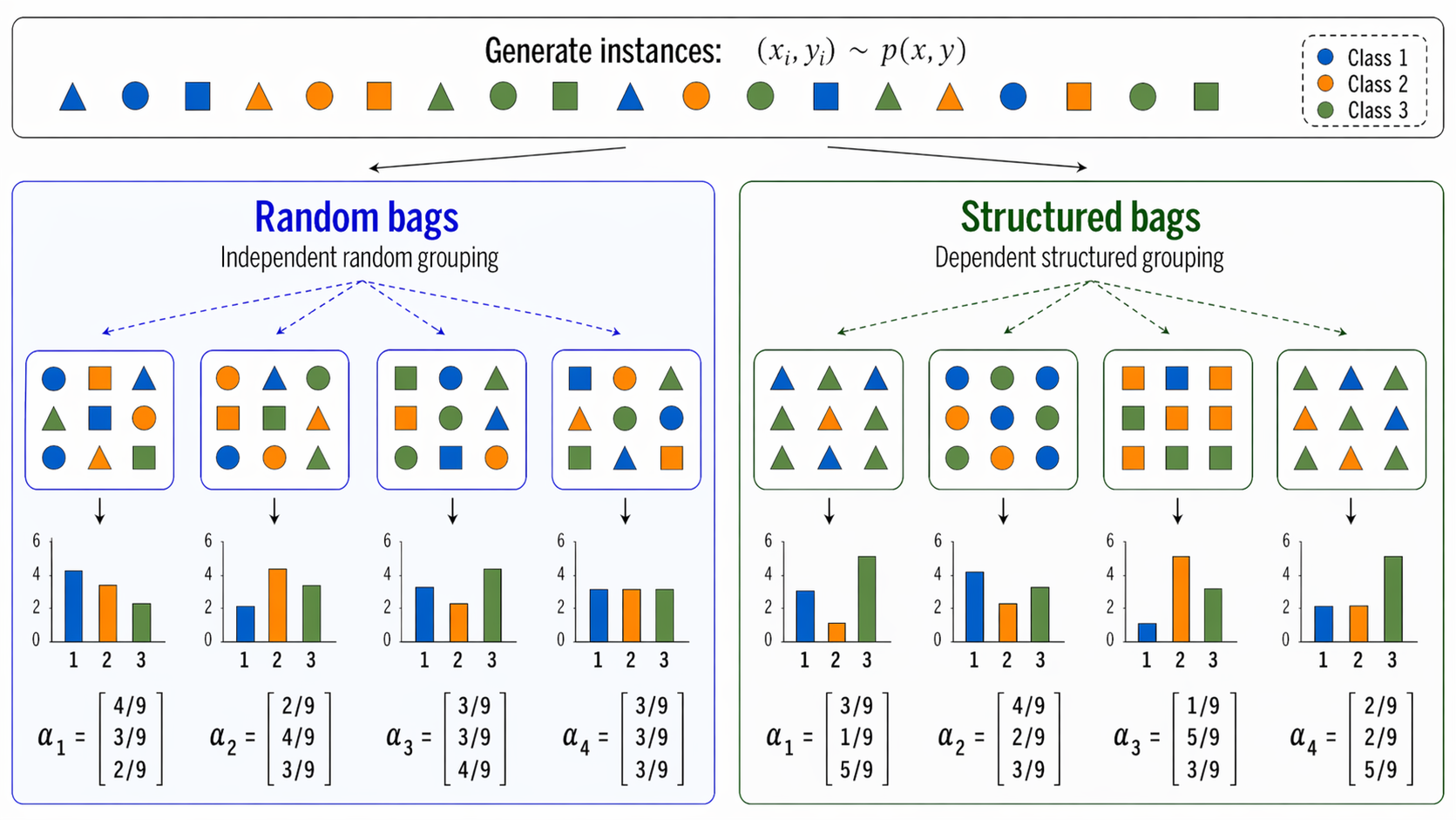}
\caption{An illustration of structured bag construction based on instance features. A common example of a structured-bag assumption is to group instances with similar feature patterns into the same bag; for example, as shown in the figure, instances sharing the same shape feature are assigned to one bag.}\label{fig:bag_con}
\end{figure}

To address this problem, we adopt a more realistic data generation assumption, referred to as the \emph{structured-bag assumption}, as illustrated in Fig.~\ref{fig:bag_con}. In this setting, we allow instance correlations within the bag and assume that the grouping strategy is dependent on the individual examples. Besides, we assume that the instance labels are conditionally independent given the bag. Based on this assumption, we observe that the bag-level label counts correspond to the sum of independent but non-identically distributed categorical random variables, and therefore follow a Poisson multinomial distribution~(PMD)~\citep{daskalakis2015structure}. Consequently, the bag-level label statistics can be considered as following a normalized version of the PMD. This PMD perspective motivates the design of LLP algorithms that exploit the distributional information contained in the observed bag proportions. 

Although it is natural to consider maximum likelihood estimation for LLP data, the likelihood of the PMD is difficult to compute, as its complexity grows exponentially with the number of classes~\citep{lin2023computing}. Therefore, we propose \emph{LLP via Moment Matching~(LLP-MM)}, a simple yet effective approach that trains the classifier by matching classifier-induced multi-order factorial moments to the corresponding empirical factorial moments derived from the observed bag proportions. We also find that the classical proportion matching strategy~\citep{yu2014learning,ardehaly2017co} is a special case of our framework that only uses the first-order empirical moment as the target for the classifier-induced first-order moments. Theoretically, we show that considering higher orders of factorial moments introduces richer statistical information and yields a tighter solution space to facilitate classifier training, which is in accord with our empirical results. Extensive comparative experiments show that moment matching remains effective under low computational cost, achieves state-of-the-art performance across various benchmark datasets, and is robust to different bag generation strategies.

Our contributions can be summarized as follows:
\begin{itemize}[leftmargin=1em, itemsep=1pt, topsep=0pt, parsep=0pt]
  \item We investigate the structured-bag assumption where the grouping strategy is dependent on examples. This assumption is more practical than the existing LLP data generation processes where bags are formed uniformly at random. Based on this assumption, we show that the bag-level label statistics follow the PMD, which offers a fresh perspective about LLP data.  
  \item Motivated by the PMD structure, we propose moment matching, a simple and general framework that trains classifiers by matching classifier-induced moments with empirical multi-order factorial moments derived from label proportions. Our framework includes proportion matching, a widely used LLP strategy, as the special case of the first-order moments.
  \item Extensive experiments demonstrate that moment matching achieves strong performance on different benchmark datasets. Besides, moment matching is robust to different bag generation strategies and attains state-of-the-art results under practical computational budgets.
\end{itemize}

\section{Preliminaries}
\label{sec:preliminaries}
In this section, we introduce the notation and background of multi-class classification and learning from label proportions.

\subsection{Ordinary Multi-class Classification}

Let $\sX$ denote the input space and $\sY \defeq \set{1, \ldots, q}$ denote the label space with $q$ class labels.
We assume that examples are sampled with  probability density $p(x, y)$ over $\sX \times \sY$.
The goal of multi-class classification is to learn a \emph{score-based predictor} $\classifier \in \sF$ in a hypothesis class $\sF$, where $\classifier: \sX \to \simplex{q}$ maps each input to the $(q-1)$-dimensional simplex
\[
\simplex{q}
\defeq
\set*{(p_1, \ldots, p_q) \in \R^q \given p_k \ge 0,\ \sum_{k=1}^q p_k = 1}.
\]
Here, $\classifier_k(x)$ can be interpreted as the predicted probability of class $k$ for input $x$.

Since direct minimization of the empirical $0$-$1$ loss is computationally intractable for many natural hypothesis classes~\citep{ben2003difficulty}, we often instead use a \emph{classification-calibrated surrogate loss} $\ell: \simplex{q} \times \sY \to \R_+$~\citep{bartlett2006convexity} and an empirical approximation of the following classification risk:
\begin{equation}
\label{eq:std-risk}
R(\classifier)
\defeq
\E_{(x, y) \sim p(x, y)} [\ell(\classifier(x), y)].
\end{equation}
Under proper losses such as the cross-entropy loss or the mean squared error loss, the population-risk minimizer recovers the class-posterior distribution. We denote this Bayes-optimal predictor by $\classifier^*$, where
\begin{equation}
\label{eq:bayes-optimal}
\classifier^{*}_k(x)
=
p(y=k \mid x),
\quad
\forall k \in \sY,\ \forall x \in \sX .
\end{equation}
\begin{lemma}
According to \citet{yu2018learning}, if $\ell$ is the cross-entropy loss or the mean squared error loss, then any optimal predictor
\(
\classifier^\star \in \operatorname*{arg\,min}_{\classifier:\sX \to \simplex{q}} R(\classifier)
\)
satisfies $\classifier^\star = \classifier^*$.
\end{lemma}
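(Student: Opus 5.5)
The plan is to reduce the population risk minimization to a family of independent pointwise problems, one for each input $x$, and to solve each one explicitly. By the tower property,
\[
R(\classifier) = \E_{x \sim p(x)}\Bigl[\sum_{k=1}^q p(y=k\mid x)\,\ell(\classifier(x),k)\Bigr] \eqdef \E_{x}\bigl[C(\classifier(x); \classifier^*(x))\bigr],
\]
where for $u,\eta \in \simplex{q}$ we define the conditional risk $C(u;\eta) \defeq \sum_k \eta_k \ell(u,k)$. The minimization is over all (measurable) maps $\classifier:\sX\to\simplex{q}$, so the value $\classifier(x)$ can be chosen separately at each $x$. Hence it suffices to show that for every $\eta\in\simplex{q}$ the function $u\mapsto C(u;\eta)$ has the unique minimizer $u=\eta$. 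Then $\classifier^*$ attains $\inf R$, and any minimizer must agree with it.

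For the \textbf{mean squared error} $\ell(u,k)=\|u-e_k\|_2^2$, I would expand the conditional risk using $\sum_k \eta_k e_k=\eta$:
\[
C(u;\eta)=\|u\|^2-2\langle u,\eta\rangle+1 = \|u-\eta\|^2 + 1 - \|\eta\|^2 .
\]
This is strictly convex in $u$, and its unique minimizer over $\simplex{q}$ (indeed over $\R^q$) is $u=\eta$.

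For the \textbf{cross-entropy} $\ell(u,k)=-\log u_k$, I would write
\[
C(u;\eta)=-\sum_k\eta_k\log u_k = H(\eta)+\mathrm{KL}(\eta\,\|\,u).
\]
Gibbs' inequality gives $\mathrm{KL}(\eta\|u)\ge 0$, with equality iff $u=\eta$. This can be proved via Jensen's inequality applied to the strictly concave $\log$, restricted to the coordinates where $\eta_k>0$. I would adopt the conventions $0\log 0=0$ and $-\log 0=+\infty$, so that $C(u;\eta)$ is well defined and $u=\eta$ is the unique minimizer.

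The main obstacle is the passage from pointwise optimality to uniqueness of the global minimizer. Suppose $\classifier^\star$ is optimal. Since $C(\classifier^\star(x);\classifier^*(x)) \ge C(\classifier^*(x);\classifier^*(x))$ pointwise and the expectations of the two sides are equal, the nonnegative gap between them has zero expectation. It therefore vanishes $p(x)$-almost everywhere, and the pointwise uniqueness then forces $\classifier^\star(x)=\classifier^*(x)$ almost everywhere. So the equality $\classifier^\star=\classifier^*$ holds in the $p(x)$-a.e.\ sense, and I would state this qualification explicitly. For cross-entropy one also needs $R(\classifier^*)<\infty$ so that the comparison is meaningful; this holds because $C(\eta;\eta)=H(\eta)\le\log q$.
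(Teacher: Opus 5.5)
Your proof is correct and complete. The paper does not prove this lemma; it defers to \citet{yu2018learning}. Your argument is a self-contained version of the standard strict-properness argument behind that citation: reduce to the conditional risk $C(u;\eta)=\sum_k \eta_k\,\ell(u,k)$, use the identities $C(u;\eta)=\|u-\eta\|_2^2+1-\|\eta\|_2^2$ and $C(u;\eta)=H(\eta)+\mathrm{KL}(\eta\,\|\,u)$, and conclude from the nonnegative gap with zero expectation.

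Two refinements in your version are worth keeping, since the paper's statement glosses over them. First, the conclusion should read $\classifier^\star=\classifier^*$ $p(x)$-almost everywhere. Changing a predictor on a $p(x)$-null set leaves $R$ unchanged, so everywhere equality cannot hold as literally stated. Second, for cross-entropy, the bound $R(\classifier^*)\le\log q$ is what makes comparing $R(\classifier^\star)$ with $R(\classifier^*)$ legitimate.
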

\subsection{Learning from Label Proportions}

The learning objective in LLP is the same as that in ordinary multi-class classification: we still aim to learn an instance-level predictor $\classifier: \sX \to \simplex{q}$.
The difference is that supervision is provided only at the bag level.
Instead of observing a label for each individual example, the training data consist of groups of examples, referred to as \emph{bags}, together with the class proportions of the labels in each bag~\citep{yu2014learning}.
Following~\citet{busa-fekete2023easy,busa2025nearly}, we assume for simplicity that every bag contains the same number $m$ of examples.
Formally, a \emph{bag} is denoted by $\bag=(x_1,\dots,x_m)\in\sX^m$, where each $x_j$ is an unlabeled example. We assume that each instance-label pair $(x_i, y_i)$ is drawn i.i.d.\ with  probability density $p(x,y)$. These instances are then assembled into bags by an unknown bag-construction mechanism, which may depend on the instance features or other latent grouping factors and is not restricted to uniform random assignment. We assume that, conditional on the bag $\bag=(x_1,\ldots,x_m)$, the labels $y_1,\ldots,y_m$ are mutually independent, with $p(y_j\mid\bag)=p(y_j\mid x_j)$ for every $j\in[m]$. Under this assumption, the conditional probability of the hidden label sequence given the bag can be factorized as 
\begin{equation}
\label{eq:seq-post}
p(\vy\mid\bag)
=
\prod_{j=1}^m p(y_j\mid\bag)
=
\prod_{j=1}^m p(y_j\mid x_j).
\end{equation} 
Instead of observing individual labels, we observe only the \emph{class-proportion vector} $\propvec = (\prop_1, \ldots, \prop_q)$.
For each class $k \in \sY$, $\prop_k \defeq \frac{1}{m} \sum_{j = 1}^m \indicator{y_j = k}$ is the \emph{proportion} of examples in the bag that belong to class $k$. 
Here, $\indicator{\cdot}$ denotes the indicator function, which equals $1$ if the condition is true and $0$ otherwise. This induces an unknown density $p(\bag,\alpha)$ over bags, and we model the observed bags as i.i.d. draws from this bag distribution. 
The posterior probability of $\propvec$ is obtained by marginalizing over all label sequences consistent with $\propvec$:
\begin{equation}
\label{eq:prop-post}
p(\propvec \mid \bag)
= \sum_{\vy \in \sY^{m/\propvec}} p(\vy \mid \bag),
\end{equation}
where $\sY^{m/\propvec}$ denotes the set of all label sequences satisfying the proportion constraints:
\begin{equation}
\sY^{m/\propvec}
=
\set*{
\vy \in \sY^m
\given
\forall k \in \sY,\;
\frac1m \sum_{j = 1}^m \indicator{y_j = k} = \prop_k
}.
\end{equation}
Accordingly, an LLP training dataset $\set{(\bag_i, \propvec_i)}_{i=1}^n$ consists of $n$ i.i.d.~examples of bag-proportion pairs drawn from the induced joint  density $p(\bag, \propvec)$. A common way to learn from such bag-proportion supervision is proportion matching~\citep{yu2014learning,ardehaly2017co}:
given a predictor $\classifier$, one averages its instance-level predictions over each bag
and encourages the resulting predicted class proportion to match the observed proportion
$\alpha$.

\section{Methodology}
\label{sec:method}
In this section, we introduce the proposed moment matching approach. We first establish the connection between LLP and the PMD, and show that the popular proportion matching approach can be considered as the first-order moment matching. Then, we present the second-order and $k$-th-order moment matching. Finally, we present the theoretical analysis for the proposed approach.

\subsection{Rethinking Label Proportions: A PMD Perspective}

We show the connection between LLP and the PMD under certain assumptions about the data generation process. We denote by $\ve^{y_j} \in \set{0, 1}^q$ the one-hot representation of label $y_j$.
Conditioned on $x_j$, this one-hot vector can be equivalently viewed as a draw from a categorical distribution with parameter $p(\cdot \mid x_j)$, namely,
$
\ve^{y_j} \sim \mathrm{Categorical}(p(\cdot \mid x_j)).
$
We further define the bag-level label count vector by
$
\vz \defeq m \propvec = \sum_{j=1}^m \ve^{y_j}.
$
Since $\ve^{y_j}$ are independent categorical random vectors conditioned on $\bag = (x_1, \dots, x_m)$, the count vector $\vz$ follows a PMD~\citep{daskalakis2015structure}.
A straightforward strategy is to directly maximize the likelihood of the PMD in \cref{eq:prop-post}.
However, exact computation of the likelihood is computationally challenging.
In particular, even the state-of-the-art method in computational statistics~\citep{lin2023computing} has a time complexity of $O(m^{q-1} \log m)$.
In typical LLP experimental settings, e.g., $m=64$ and $q=10$~\citep{ma2026learningfrom}, this complexity is already astronomically large and therefore impractical for training.
Instead of maximum likelihood, we leverage moment-based information and train the classifier by matching classifier-induced moments to empirical multi-order factorial moments.
\subsection{Label Proportions Beyond First-Order Moments: A Second-Order Illustration}
The classical algorithm Proportion Matching~\citep{yu2014learning,ardehaly2017co} can be viewed as a special case of our framework that performs first-order moment matching, and therefore fails to exploit higher-order information; see Appendix~\ref{first_order_matching} for details. Under the PMD, we provide an effective solution for exploiting richer information in label proportions through higher-order moments. As an illustration, we consider the following \emph{second-order empirical moment matrix}:
\begin{equation}
\label{eq:second-order}
\Mhat\order{2}_\propvec
=
\frac{1}{m(m - 1)}
\sum_{i \neq j}
\ve^{y_i}(\ve^{y_j})^\top,
\end{equation}
where each entry $\bracks*{\Mhat\order{2}_\propvec}_{c,d}$ represents the normalized number of ordered instance index pairs $(i,j)$ with $i \neq j$, $y_i = c$, and $y_j = d$.
Since the number of instances in class $c$ is $m\prop_c$, this quantity can be expressed entirely in terms of $\propvec$: when $c \neq d$, there are $m\prop_c\, m\prop_d$ such ordered pairs, while when $c=d$, there are $m\prop_c(m\prop_c-1)$, since the same instance cannot be counted twice.
This yields
\begin{equation}
\bracks*{\Mhat\order{2}_\propvec}_{c,d}
=
\frac{m\prop_c\parens*{m\prop_d - \indicator{c=d}}}{m(m - 1)},
\qquad c,d \in \set{1, \dots, q}.
\end{equation}
The detailed derivation is provided in Appendix~\ref{app:second-order-derivation}.
For a bag $\bag$, the classifier-induced second-order moment matrix is given by
\begin{equation}
\Mhat\order{2}_\classifier(\bag)
=
\frac{1}{m(m-1)}
\sum_{i \neq j}
\classifier(x_i) \classifier(x_j)^\top.
\label{eq:second-order-model}
\end{equation}
Accordingly, $\Mhat\order{2}_\propvec$ can serve as the target for the classifier-induced second-order moment. Following the objective of first-order moment matching, we minimize the \emph{second-order moment matching risk}
\begin{equation}
\label{eq:second-order-risk}
\E_{p(\bag, \propvec)}
\bracks*{
\Loss\order{2}
\!\parens*{
\Mhat\order{2}_\propvec,
\Mhat\order{2}_\classifier(\bag)
}
},
\end{equation}
where $\mathcal T_r$ denotes the set of order-$r$ tensors with nonnegative entries summing to one, and $
\Loss\order{2}:
\mathcal T_2\times\mathcal T_2
\to
[0,+\infty]
$ measures the discrepancy between the second-order empirical moment and the second-order classifier-induced moment.

\subsection{Generalized Multi-Order Moment Matching}
After the illustrative second-order example above, we now generalize the construction to arbitrary orders $r \le m$, and describe how such multi-order information can be derived from label proportions and exploited for training.
Following \citet{JohnsonKotzBalakrishnan1997} and \citet{NgTianTang2011}, we introduce the normalized $r$-th order factorial moment tensor
$\Mhat\order{r}_\propvec \in (\R^q)^{\otimes r}$ as
\begin{equation}
\label{eq:rth-factorial-tensor-def}
\Mhat\order{r}_\propvec
\defeq
\frac{1}{(m)_r}
\sum_{(j_1, \dots,j_r) \in [m]_{\neq}^r}
\ve^{y_{j_1}} \otimes \ve^{y_{j_2}} \otimes \cdots \otimes \ve^{y_{j_r}},
\end{equation}
where $(m)_r \defeq m(m-1)\cdots(m-r+1)$ is the falling factorial,
$
[m]_{\neq}^r
\defeq
\set{(j_1, \dots, j_r) \in [m]^r \given j_a \neq j_b\,,\forall a \neq b}
$
denotes the set of ordered $r$-tuples of distinct instance indices, and
$\otimes$ denotes the tensor product, so that
$\ve^{y_{j_1}} \otimes \cdots \otimes \ve^{y_{j_r}}$
is an order-$r$ tensor in $(\R^q)^{\otimes r}$.
Notably, the second-order moment matrix in \cref{eq:second-order} is exactly the special case of \cref{eq:rth-factorial-tensor-def} with $r=2$.
Although the individual labels are not observed, the normalized $r$-th order factorial moment tensor can still be computed entirely from the label proportion vector $\propvec$.
To see this, consider any index tuple $(i_1,\dots,i_r)\in[q]^r$. Define
\(
\bm{k}=(k_1,\dots,k_q)\in\mathbb{N}^q,
k_c \defeq \#\{\ell\in[r]\mid i_\ell=c\}, c=1,\dots,q.
\)
Then \(\bm{k}\) records the multiplicity of each class in \((i_1,\dots,i_r)\), and satisfies
\(
\sum_{c=1}^q k_c = r.
\)
In other words, $k_c$ records how many times class $c$ appears in the tuple $(i_1, \dots, i_r)$.
Then the $(i_1, \dots, i_r)$-th entry of $\Mhat\order{r}_\propvec$ is given by
\begin{equation}
\label{eq:rth-factorial-tensor-alpha-form}
\bracks*{\Mhat\order{r}_\propvec}_{i_1, \dots, i_r}
=
\frac{1}{(m)_r}\prod_{c=1}^q (z_c)_{k_c}
=
\frac{m^r}{(m)_r}\prod_{c=1}^q \prod_{t=0}^{k_c-1}
\parens*{\prop_c-\frac{t}{m}}.
\end{equation}
Given a bag $\bag$, for a classifier $\classifier$, we denote by $p_j \defeq \classifier(x_j)\in\simplex{q}$ the predicted class-probability vector of instance $x_j$.
The classifier-induced normalized $r$-th order factorial moment tensor is then defined as
\begin{equation}
\label{eq:model-induced-rth-tensor}
\Mhat\order{r}_\classifier(\bag)
\defeq
\frac{1}{(m)_r}
\sum_{(j_1, \dots, j_r) \in [m]_{\neq}^r}
p_{j_1}\otimes p_{j_2}\otimes\cdots\otimes p_{j_r}.
\end{equation}
Therefore, $\Mhat\order{r}_\propvec$ computed from the observed label proportions can serve as the target tensor for $\Mhat\order{r}_\classifier(\bag)$, yielding a natural multi-order matching objective.
In our work, we adopt a simple and direct strategy: we use the factorial moment tensor computed from the observed label proportions as the target, and encourage the classifier-induced moment tensor to match it at each order.
Accordingly, the expected multi-order matching risk can be defined as
\begin{equation}
\label{eq:multi-order-risk}
R_\MO\order{s}(\classifier)
\defeq
\E_{p(\bag, \propvec)}
\bracks*{
\sum_{r=1}^{s}
\lambda_r\,
\Loss\order{r}
\!\parens*{
\Mhat\order{r}_\propvec,
\Mhat\order{r}_\classifier(\bag)
}
},
\end{equation}
where $s\in\set{1, \dots, m}$ is the maximum order of moments used for training, $\lambda_r > 0$ is the weight associated with the $r$-th order moment, and $\sum_{r=1}^s \lambda_r = 1$.
It is immediate that the PM risk in \cref{eq:pm-risk} is recovered as the special case $s=1$ of the above objective.
Here, for each $r\in\set{1, \dots, s}$, the loss function
$
\Loss\order{r}:
\mathcal T_r\times\mathcal T_r
\to
[0,+\infty]
$
measures the discrepancy between the target $r$-th order factorial moment tensor $\Mhat\order{r}_\propvec$ and the classifier-induced tensor $\Mhat\order{r}_\classifier(\bag)$.
The empirical counterpart of $R_\MO\order{s}(\classifier)$ on a dataset
$\set{(\bag_i, \propvec_i)}_{i=1}^n$ is given by
\begin{equation}
\label{eq:multi-order-empirical-risk}
\widehat{R}_\MO\order{s}(\classifier)
\defeq
\frac1n \sum_{i=1}^n
\bracks*{
\sum_{r=1}^s
\lambda_r\,
\Loss\order{r}
\!\parens*{
\Mhat\order{r}_{\propvec_i},
\Mhat\order{r}_\classifier(\bag_i)
}
}.
\end{equation}
Since the normalized factorial moment tensor has nonnegative entries that sum
to one, it can be viewed, after vectorization, as a probability distribution
over ordered class tuples \((c_1,\dots,c_r)\in\set{1,\dots,q}^r\). Therefore,
matching two order-\(r\) factorial moment tensors is equivalent to matching two
distributions over such class tuples, and \(\Loss\order{r}\) can be
instantiated by standard distributional discrepancies, such as mean squared
error or cross-entropy.

\subsection{Theoretical Analysis}
For each bag $\bag$, let
$
\Mhat\order{r}_{\classifier^*}(\bag)
$
denote the $r$-th order classifier-induced moment tensor associated with the Bayes-optimal classifier $\classifier^*$.
Let $\sF_\MO\order{s}$ denote the set of minimizers of \cref{eq:multi-order-risk}.
We then have the following proposition:
\begin{proposition}
\label{prop:mo-nesting}
Assume that the hypothesis class \(\sF\) contains the Bayes-optimal classifier
\(\classifier^*\). For each \(r \in \set{1,\dots,s_2}\), assume that for any
\(\mathcal T_r\)-valued random variable \(Z\), its expectation \(\E[Z]\)
is the unique solution of
\(
\min_{v \in \mathcal T_r}
\E\bracks*{\Loss\order{r}(Z,v)}
\). Assume further that \(\lambda_r>0\) for all \(r \in \set{1,\dots,s_2}\).
Then, for any \(1 \le s_1 < s_2 \le m\), we have
\(
\sF_\MO\order{s_2}
\;\subseteq\;
\sF_\MO\order{s_1}.
\)
That is, using more orders of factorial moments yields a tighter minimizer set.
\end{proposition}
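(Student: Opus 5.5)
The plan is to show that the Bayes-optimal classifier $\classifier^*$ minimizes every per-order term of the risk at once. Once that is established, a minimizer of the $s_2$-order risk must minimize every term with $r\le s_2$. In particular it minimizes the first $s_1$ terms, so it belongs to $\sF_\MO\order{s_1}$.

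\textbf{Step 1: conditional expectation of the target tensor.} Fix a bag $\bag=(x_1,\dots,x_m)$. Under the factorization in \cref{eq:seq-post}, the labels are conditionally independent with $p(y_j\mid\bag)=p(y_j\mid x_j)$. Hence, for any ordered tuple of distinct indices $(j_1,\dots,j_r)\in[m]^r_{\neq}$,
\[
\E\bracks*{\ve^{y_{j_1}}\otimes\cdots\otimes\ve^{y_{j_r}} \mid \bag}
= \classifier^*(x_{j_1})\otimes\cdots\otimes\classifier^*(x_{j_r}).
\]
Here distinctness of the indices is exactly what allows the expectation to factorize. Averaging over $[m]^r_{\neq}$ in \cref{eq:rth-factorial-tensor-def} and comparing with \cref{eq:model-induced-rth-tensor} gives
\[
\E\bracks*{\Mhat\order{r}_\propvec\mid\bag}=\Mhat\order{r}_{\classifier^*}(\bag)\qquad\text{for every } r\le m.
\]
Both tensors lie in $\mathcal T_r$: they are nonnegative with entries summing to one.

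\textbf{Step 2: pointwise optimality of $\classifier^*$.} Write each per-order risk by conditioning on the bag:
\[
\E_{p(\bag,\propvec)}\bracks*{\Loss\order{r}\parens*{\Mhat\order{r}_\propvec,\Mhat\order{r}_\classifier(\bag)}}
=\E_{\bag}\,\E\bracks*{\Loss\order{r}\parens*{\Mhat\order{r}_\propvec,v}\mid\bag}\Big|_{v=\Mhat\order{r}_\classifier(\bag)}.
\]
Apply the uniqueness assumption to $Z=\Mhat\order{r}_\propvec$ under the conditional law given $\bag$. This shows that the inner conditional expectation is minimized over $v\in\mathcal T_r$ uniquely at $\E[Z\mid\bag]$, which by Step 1 equals $\Mhat\order{r}_{\classifier^*}(\bag)$. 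Call this minimal value $c_r(\bag)$. Then, for every $\classifier\in\sF$ and every $r$,
\[
\text{(per-order risk of }\classifier) \ \ge\ \E_\bag[c_r(\bag)],
\]
with equality for $\classifier^*$. Since $\classifier^*\in\sF$, the value $\E_\bag[c_r(\bag)]$ is the minimum of every per-order risk simultaneously. Because all $\lambda_r>0$, it follows that $\min_\sF R_\MO\order{s}=\sum_{r\le s}\lambda_r\E_\bag[c_r(\bag)]$ for each $s\le s_2$.

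\textbf{Step 3: the nesting.} Let $\classifier\in\sF_\MO\order{s_2}$. Then
\[
\sum_{r\le s_2}\lambda_r\parens*{\text{risk}_r(\classifier)-\E_\bag[c_r(\bag)]}=0,
\]
and every summand is nonnegative with $\lambda_r>0$. Hence $\text{risk}_r(\classifier)=\E_\bag[c_r(\bag)]$ for every $r\le s_2$, and in particular for every $r\le s_1$. Therefore $R_\MO\order{s_1}(\classifier)$ attains $\min_\sF R_\MO\order{s_1}$, so $\classifier\in\sF_\MO\order{s_1}$. (One can also record that uniqueness forces $\Mhat\order{r}_\classifier(\bag)=\Mhat\order{r}_{\classifier^*}(\bag)$ almost surely, but this is not needed for the inclusion.)

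\textbf{Main obstacle.} The delicate step is Step 2: passing from the assumption about expectations of a generic random variable to the conditional version given $\bag$. This requires applying the assumption to the regular conditional distribution of $\propvec$ given $\bag$. It also requires integrability, so that exchanging the minimization with the outer expectation is legitimate, possibly with values in $[0,+\infty]$. I would handle this by noting that $\propvec$ takes finitely many values given $m$, so the conditional law is a finite mixture and the assumption applies bagwise. A further measurability point is that the optimum is attained by $\classifier^*$ itself, so no measurable-selection argument is needed.
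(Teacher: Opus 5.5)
Your proposal is correct and follows the paper's route. The paper likewise shows $\E[\Mhat\order{r}_\propvec \mid \bag] = \Mhat\order{r}_{\classifier^*}(\bag)$, applies the loss assumption to the conditional law given $\bag$ so that $\classifier^*$ minimizes every order at once, and concludes from the vanishing of a nonnegative weighted sum of gaps. The paper then uses uniqueness to get almost-sure equality $\Mhat\order{r}_\classifier(\bag)=\Mhat\order{r}_{\classifier^*}(\bag)$ before reading off membership in $\sF_\MO\order{s_1}$; you stay at the level of per-order risk values, which makes that final step explicit and shows uniqueness is not needed for the inclusion itself.
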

\cref{prop:mo-nesting} partially explains the advantage of our method.
By incorporating multi-order factorial moments as supervision, we impose progressively stronger constraints on the feasible minimizers.
As a result, the solution set becomes tighter while still containing the Bayes-optimal classifier $\classifier^*$.
\begin{theorem}[Generalization bound]
\label{thm:mo-generalization}
Assume that the bag coordinates are exchangeable. Assume that, for each $r \in \set{1, \dots, s}$, the loss $\Loss\order{r}(A, \cdot)$ is $L_r$-Lipschitz with respect to the Frobenius norm, uniformly over all admissible target tensors $A$, and satisfies $0 \le \Loss\order{r}(A, B) \le B_r$ for all admissible $A,B$. Then, for any $\delta \in (0, 1)$, with probability at least $1 - \delta$,
\begin{equation}
\sup_{\classifier \in \sF}
\abs{
  R_\MO\order{s}(\classifier)
- \widehat{R}_\MO\order{s}(\classifier)
}
\le
2\sqrt{2m}\,
\parens*{\sum_{r=1}^{s}\lambda_r L_r\, r}
\parens*{\sum_{c=1}^q \Rademacher_n(\sF_c)}
+
\parens*{\sum_{r=1}^{s}\lambda_r B_r}
\sqrt{\frac{\log(2/\delta)}{2n}}.
\end{equation}
\end{theorem}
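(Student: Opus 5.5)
The plan is the standard Rademacher-complexity argument, followed by a vector-contraction step that pushes the complexity of the tensor-valued loss down to the coordinate classes $\sF_c$.

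\emph{Step 1: concentration and symmetrization.} Write $g_\classifier(\bag,\propvec)\defeq\sum_{r=1}^s\lambda_r\Loss\order{r}(\Mhat\order{r}_\propvec,\Mhat\order{r}_\classifier(\bag))$. By the boundedness assumption, $g_\classifier$ takes values in $[0,\sum_r\lambda_rB_r]$. Since the pairs $(\bag_i,\propvec_i)$ are i.i.d., McDiarmid's inequality applies to $\sup_\classifier(R_\MO\order{s}-\widehat R_\MO\order{s})$ and to its negative. Each has bounded differences $(\sum_r\lambda_rB_r)/n$. A union bound over the two sides gives the $\sqrt{\log(2/\delta)/(2n)}$ term. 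Symmetrization then bounds the expected supremum by $2\Rademacher_n(g\circ\sF)$. By subadditivity of Rademacher complexity, this is at most $2\sum_r\lambda_r\Rademacher_n(\Loss\order{r}\circ\Mhat\order{r}\circ\sF)$.

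\emph{Step 2: Lipschitz contraction.} For each fixed $r$, $\Loss\order{r}(A,\cdot)$ is $L_r$-Lipschitz in Frobenius norm, uniformly in the target $A$. By Maurer's vector-contraction inequality, $\Rademacher_n(\Loss\order{r}\circ\Mhat\order{r}\circ\sF)\le\sqrt2L_r\,\E\sup_\classifier\frac1n\sum_i\langle\boldsymbol\epsilon_i,\mathrm{vec}\,\Mhat\order{r}_\classifier(\bag_i)\rangle$, where the $\boldsymbol\epsilon_i$ are independent Rademacher vectors of dimension $q^r$. This is why the data-dependent target $A=\Mhat\order{r}_{\propvec_i}$ causes no trouble: Maurer's inequality allows a different Lipschitz function for each sample.

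\emph{Step 3: from the tensor map back to instance predictions.} This is the main obstacle, and it must produce the factor $r\sqrt m$. I would first show that $\Mhat\order{r}_\classifier(\bag)$ is Lipschitz as a function of the stacked predictions $(p_1,\dots,p_m)\in(\simplex{q})^m$, with constant of order $r/\sqrt m$ in Frobenius-to-Euclidean norm.

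The argument runs as follows. Perturbing $p_j$ changes only the tuples that contain $j$. There are $r(m-1)_{r-1}$ such tuples, which after normalization by $(m)_r$ gives weight $r/m$. Every other factor in these tuples is a probability vector with $\|p\|_2\le1$, so $\|p\otimes\cdots\otimes p\|_F\le1$. Summing over $j$ and applying Cauchy--Schwarz gives
\[
\|\Delta\Mhat\order{r}\|_F\le\frac rm\sum_j\|\Delta p_j\|_2\le\frac r{\sqrt m}\Big(\sum_j\|\Delta p_j\|_2^2\Big)^{1/2}.
\]
Composing this with $L_r$ and applying Maurer's inequality again, this time over the $nm$ instance coordinates, gives $\sqrt2L_r\frac r{\sqrt m}\,\E\sup_\classifier\frac1n\sum_{i,j,c}\epsilon_{ijc}\classifier_c(x_{ij})$.

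The remaining step splits the sum over $c$ and uses exchangeability of the bag coordinates. Exchangeability ensures that each coordinate $x_{ij}$ has the same marginal, so the inner sum over $j$ can be charged as $m$ copies of an $n$-sample Rademacher average of $\sF_c$. This yields $m\sum_c\Rademacher_n(\sF_c)$. Combining with the factor $r/\sqrt m$ gives $\sqrt m\,r\sum_c\Rademacher_n(\sF_c)$, and multiplying by $2\sqrt2\lambda_r$ and summing over $r$ gives the stated first term.

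If the exact constants do not close cleanly, the fallback is to bound the supremum over $j$ by a sum over $j$ and absorb the discrepancy into the $\sqrt{2m}$ factor. The exchangeability step is where I would check carefully that the coordinate samples may be treated as drawn from a common marginal, despite being dependent within a bag.
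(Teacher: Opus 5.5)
Your proposal is correct and follows the paper's proof essentially step for step. The shared steps are McDiarmid plus symmetrization for the aggregated loss, and the $r/\sqrt{m}$ Frobenius-to-Euclidean Lipschitz bound on the moment map as a function of $(p_1,\dots,p_m)$. Next come a single Maurer contraction applied to the composite $L_r r/\sqrt{m}$-Lipschitz map $(p_1,\dots,p_m)\mapsto\Loss\order{r}(\Mhat\order{r}_{\propvec_i},\Mhat\order{r}(p_1,\dots,p_m))$, and subadditivity over $(j,c)$. Finally, exchangeability replaces each $\Rademacher_n(\sF_{j,c})$ by $\Rademacher_n(\sF_c)$; as you suspected, this only needs equal coordinate marginals, because $\Rademacher_n(\sF_{j,c})$ involves coordinate $j$ across independent bags.

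One thing to fix is your Step 2 contraction to the tensor level. It is a detour, and chaining a second Maurer step after it would not reproduce the constant, since the vector Rademacher variables there have dimension $q^r$. Drop it and apply Maurer once to the composed map, as you effectively do at the end of Step 3.
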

Under the current bag-i.i.d. assumption, our analysis establishes the optimality of matching the Bayes-optimal classifier-induced moments.
Although this result does not by itself yield a full classifier-level consistency guarantee of the form \(\bm f \to \bm f^*\), it provides a theoretical justification for the proposed moment-matching principle and, together with the empirical results in Section~\ref{sec:experiments}, suggests that the proposed approach is effective in practice.
Establishing such consistency remains challenging for real-world structured bags~\citep{bortsova2018deep,sun2017probabilistic,pacheco2024vertex}, such as those in blood-cell detection, where shared, unobserved physiological factors may induce donor-specific correlations among cells, making the within-bag dependence structure difficult to characterize. We leave this problem for future work.

For computing the loss, a naive approach would explicitly construct the full
order-\(r\) moment tensor, whose number of entries scales as \(q^r\). To avoid
this dependence, we use a state-compressed dynamic programming method that
reduces both time and memory to depend on the number of compressed histogram
states. For example, in the second-order case, the moment tensor is a symmetric
matrix, and therefore one only needs to compute its distinct entries rather than
all \(q^2\) ordered entries. The details are provided in
Appendix~\ref{Computation_anl}. The resulting complexity is summarized below.
\begin{restatable}[Complexity of the state-compressed computation]{proposition}{Complexity}
\label{thm:dp-complexity-final}
Fix a bag of size $m$ and a maximum order $s\le m$. The compressed computation of all moments up to order $s$ requires
$O\!\parens*{m q \binom{q+s}{s}}$
time and
$O\!\parens*{\binom{q+s}{s}}$
memory per bag.
\end{restatable}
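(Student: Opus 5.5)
The plan is to use the generating-polynomial view of factorial moments and compute, by dynamic programming over instances, the coefficients of a truncated multivariate polynomial. For a bag with predictions $p_1,\dots,p_m\in\simplex{q}$, consider
\[
G(\vt)\defeq\prod_{j=1}^m\Bigl(1+\sum_{c=1}^q p_{j,c}t_c\Bigr).
\]
Expanding the product, the coefficient of the monomial $\vt^{\bm k}=\prod_c t_c^{k_c}$ with $\sum_c k_c=r$ equals the sum, over $r$-element subsets of instances, of the products of probabilities of the assigned classes realizing the histogram $\bm k$. This is the elementary-symmetric analogue of the factorial moment. Multiplying by the number of orderings of an $r$-subset gives the unnormalized sum over ordered distinct tuples with a fixed class tuple. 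Concretely, an entry $(i_1,\dots,i_r)$ of $\Mhat\order{r}_\classifier(\bag)$ depends only on its histogram $\bm k$ (by symmetry of the sum over $[m]^r_{\neq}$) and equals
\[
\frac{\prod_c k_c!}{(m)_r}\,[\vt^{\bm k}]G .
\]
I will verify this combinatorial identity first; it is routine, since each ordered distinct tuple with class sequence $(i_1,\dots,i_r)$ corresponds to a subset together with an assignment of its members to the $r!/\prod_c k_c!$ positions consistent with the class labels. The same compression applies to $\Mhat\order{r}_\propvec$ via \cref{eq:rth-factorial-tensor-alpha-form}. Hence all moments up to order $s$ are determined by the coefficients indexed by
\[
\mathcal K_s\defeq\set*{\bm k\in\mathbb N^q\given \textstyle\sum_c k_c\le s}.
\]
A stars-and-bars count (add a slack coordinate) gives $\abs{\mathcal K_s}=\binom{q+s}{s}$.

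The dynamic program maintains a table $D_j(\bm k)$ for $\bm k\in\mathcal K_s$, holding the coefficients of the partial product over the first $j$ instances truncated to total degree $\le s$. It is initialized with $D_0(\bm 0)=1$ and $D_0\equiv0$ elsewhere, and updated by
\[
D_j(\bm k)=D_{j-1}(\bm k)+\sum_{c:\,k_c\ge1}p_{j,c}\,D_{j-1}(\bm k-\ve^c).
\]
Truncation is exact because multiplication never lowers degree. Each update touches $\binom{q+s}{s}$ states with $O(q)$ work each, so all $m$ steps cost $O(mq\binom{q+s}{s})$. If the states are processed in decreasing total degree, the update can be done in place, as in a knapsack DP. A single table of size $\binom{q+s}{s}$ then suffices, plus $O(q)$ for the current $p_j$. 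The target coefficients from $\propvec$ cost no more, since each state needs an $O(q)$-length product of falling factorials.

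The main obstacle is the fine print of the memory bound. If indexing the states needs an explicit rank/unrank map of histograms, I will either compute ranks on the fly with combinatorial number formulas or store the neighbor indices, which costs $O(q\binom{q+s}{s})$. To keep memory at $O(\binom{q+s}{s})$, I will argue that neighbor ranks can be obtained in $O(1)$ amortized time when states are enumerated in a fixed lexicographic order. If that fails, I will state memory up to this lower-order caveat. Finally, since the loss is evaluated on compressed states weighted by the multiplicity $r!/\prod_c k_c!$, the loss computation itself adds only $O(\binom{q+s}{s})$ time, which the stated time bound already absorbs.
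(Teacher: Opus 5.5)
Your proposal is correct and follows essentially the paper's own argument: the same truncated generating polynomial $\prod_{j}(1+\sum_c p_{j,c}u_c)$, the same recurrence over histogram states, the state count $\sum_{r=0}^{s}\binom{q+r-1}{r}=\binom{q+s}{s}$, and an $O(q)$ closed-form cost per target state. The paper differs only in minor ways: it counts just the $\binom{q+s-1}{s-1}$ states of degree below $s$ as sources before relaxing the bound, and it does not discuss in-place updates or state indexing. Your indexing worry is unnecessary at that level of abstraction, and any neighbor-index table depends only on $(q,s)$ and can be shared across bags, so it does not count toward per-bag memory. Note, though, that such a table is a factor-$q$ overhead rather than a lower-order term, and that the correction factor in your coefficient-to-entry identity is $\prod_c k_c!$, as you display it, not the $r!$ orderings your preceding sentence mentions.
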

Although this complexity still grows combinatorially with the maximum order
\(s\), in practice only a small number of low-order moments is sufficient to
achieve strong performance.
\section{Experiments}
\label{sec:experiments}
\begin{table*}[!t]
    \caption{Best test accuracy under Random Bag and Cluster Bag constructions (mean $\pm$ std over three runs, \%). The best result for each bag size and average is bold. Ranks use average accuracy with standard competition ranking. $\dagger$ indicates severe overfitting caused by negative loss values, with predictions collapsing to a single class.}
    \label{tab:cifar_random_cluster}
    \centering
    \fontsize{7pt}{7pt}\selectfont
    \setlength\tabcolsep{5pt}
    \renewcommand\arraystretch{1.0}
    \begin{tabular}{clcccccc}
    \toprule
    \textbf{Dataset} & \textbf{Method} & \textbf{16} & \textbf{32} & \textbf{64} & \textbf{128} & \textbf{Average} & \textbf{Rank} \\
    \midrule

    \multicolumn{8}{c}{\textbf{Random Bag}} \\
    \midrule

    \multirow{11}{*}{CIFAR-10}
    & PM                   & 93.51 $\pm$ 0.11 & 90.05 $\pm$ 0.22 & 75.23 $\pm$ 0.70 & 60.02 $\pm$ 0.33 & 79.70 & 4 \\
    & DSQ                  & 93.52 $\pm$ 0.15 & 91.39 $\pm$ 0.16 & 79.86 $\pm$ 0.77 & 61.90 $\pm$ 1.35 & 81.67 & 3 \\
    & LLP-PVC              & 91.95 $\pm$ 0.07 & 90.55 $\pm$ 0.08 & 83.79 $\pm$ 0.57 & 64.44 $\pm$ 1.45 & 82.68 & 2 \\
    & LLP-FC               & 75.29 $\pm$ 0.72 & 61.06 $\pm$ 0.87 & 53.89 $\pm$ 0.61 & 60.01 $\pm$ 0.24 & 62.56 & 7 \\
    & ROT                  & 93.21 $\pm$ 0.09 & \textbf{92.04 $\pm$ 0.10} & 80.52 $\pm$ 0.49 & 35.03 $\pm$ 1.25 & 75.20 & 5 \\
    & EasyLLP              & 32.99 $\pm$ 0.80 & 31.98 $\pm$ 2.69 & 26.08 $\pm$ 0.21 & 26.40 $\pm$ 0.98 & 29.36 & 11 \\
    & EasyLLP-ABS          & 57.26 $\pm$ 2.36 & 50.54 $\pm$ 2.77 & 41.86 $\pm$ 3.53 & 33.80 $\pm$ 1.78 & 45.87 & 9 \\
    & GeneralUPM           & 45.43 $\pm$ 1.05 & 36.98 $\pm$ 3.42 & 29.80 $\pm$ 1.21 & 27.20 $\pm$ 0.68 & 34.85 & 10 \\
    & GeneralUPM-ABS       & 61.42 $\pm$ 1.33 & 52.29 $\pm$ 2.76 & 42.43 $\pm$ 3.18 & 33.08 $\pm$ 2.24 & 47.30 & 8 \\
    & FlowLLP              & 92.65 $\pm$ 0.10 & 88.82 $\pm$ 0.54 & 72.65 $\pm$ 0.69 & 40.12 $\pm$ 6.48 & 73.56 & 6 \\
    & \textbf{LLP-MM}      & \textbf{93.97 $\pm$ 0.12} & 91.99 $\pm$ 0.08 & \textbf{84.13 $\pm$ 0.57} & \textbf{66.74 $\pm$ 0.20} & \textbf{84.21} & \textbf{1} \\
    \midrule

    \multirow{11}{*}{CIFAR-100}
    & PM                   & 71.12 $\pm$ 0.26 & 66.62 $\pm$ 0.05 & 54.34 $\pm$ 0.87 & 17.69 $\pm$ 0.57 & 52.44 & 3 \\
    & DSQ                  & 71.01 $\pm$ 0.34 & 65.78 $\pm$ 0.26 & 45.95 $\pm$ 0.86 & \textbf{25.70 $\pm$ 0.80} & 52.11 & 4 \\
    & LLP-PVC              & 63.53 $\pm$ 0.21 & 61.73 $\pm$ 0.47 & 49.78 $\pm$ 2.78 & 12.13 $\pm$ 1.00 & 46.79 & 6 \\
    & LLP-FC               & 71.77 $\pm$ 0.08 & 67.10 $\pm$ 0.18 & 54.84 $\pm$ 0.84 & 17.39 $\pm$ 0.55 & 52.77 & 2 \\
    & ROT                  & 72.02 $\pm$ 0.13 & 64.42 $\pm$ 0.56 & 14.81 $\pm$ 1.46 & 5.26 $\pm$ 0.15 & 39.13 & 7 \\
    & EasyLLP              & 2.43 $\pm$ 0.44 & 1.95 $\pm$ 0.27 & 1.34 $\pm$ 0.20 & 1.34 $\pm$ 0.05 & 1.77 & 11 \\
    & EasyLLP-ABS          & 17.66 $\pm$ 1.73 & 12.39 $\pm$ 1.44 & 9.62 $\pm$ 0.01 & 6.50 $\pm$ 0.41 & 11.55 & 9 \\
    & GeneralUPM           & 12.68 $\pm$ 0.33 & 8.36 $\pm$ 0.30 & 5.65 $\pm$ 1.05 & 4.07 $\pm$ 0.13 & 7.69 & 10 \\
    & GeneralUPM-ABS       & 24.00 $\pm$ 1.57 & 16.51 $\pm$ 1.61 & 11.33 $\pm$ 0.47 & 7.23 $\pm$ 0.40 & 14.77 & 8 \\
    & FlowLLP              & 71.49 $\pm$ 0.33 & 66.76 $\pm$ 0.06 & 51.68 $\pm$ 0.91 & 12.04 $\pm$ 2.05 & 50.49 & 5 \\
    & \textbf{LLP-MM}      & \textbf{72.06 $\pm$ 0.10} & \textbf{67.14 $\pm$ 0.05} & \textbf{57.79 $\pm$ 0.64} & 23.82 $\pm$ 0.54 & \textbf{55.20} & \textbf{1} \\
    \midrule

    \multirow{11}{*}{miniImageNet}
    & PM                   & 64.76 $\pm$ 0.40 & 56.14 $\pm$ 0.35 & 38.31 $\pm$ 1.06 & 10.01 $\pm$ 2.04 & 42.30 & 3 \\
    & DSQ                  & 63.38 $\pm$ 0.13 & 53.73 $\pm$ 0.40 & 31.45 $\pm$ 0.58 & \textbf{14.81 $\pm$ 0.81} & 40.84 & 4 \\
    & LLP-PVC              & 57.71 $\pm$ 0.14 & 54.63 $\pm$ 0.34 & 24.19 $\pm$ 1.57 & 5.21 $\pm$ 0.43 & 35.44 & 6 \\
    & LLP-FC               & 65.75 $\pm$ 0.05 & 58.13 $\pm$ 0.17 & 39.40 $\pm$ 1.75 & 11.05 $\pm$ 1.72 & 43.58 & 2 \\
    & ROT                  & \textbf{67.57 $\pm$ 0.82} & 41.50 $\pm$ 6.10 & 7.21 $\pm$ 0.41 & 2.64 $\pm$ 0.57 & 29.73 & 7 \\
    & EasyLLP              & 1.92 $\pm$ 0.13 & 1.59 $\pm$ 0.15 & 1.51 $\pm$ 0.08 & 1.36 $\pm$ 0.09 & 1.59 & 11 \\
    & EasyLLP-ABS          & 11.88 $\pm$ 0.57 & 8.15 $\pm$ 0.25 & 5.35 $\pm$ 0.41 & 3.57 $\pm$ 0.28 & 7.24 & 9 \\
    & GeneralUPM           & 9.44 $\pm$ 0.11 & 6.08 $\pm$ 0.46 & 4.00 $\pm$ 0.25 & 2.91 $\pm$ 0.33 & 5.61 & 10 \\
    & GeneralUPM-ABS       & 19.61 $\pm$ 0.40 & 11.81 $\pm$ 0.77 & 7.79 $\pm$ 0.23 & 4.29 $\pm$ 0.52 & 10.87 & 8 \\
    & FlowLLP              & 62.83 $\pm$ 0.86 & 52.84 $\pm$ 1.46 & 28.13 $\pm$ 0.94 & 7.20 $\pm$ 0.77 & 37.75 & 5 \\
    & \textbf{LLP-MM}      & 65.94 $\pm$ 0.03 & \textbf{59.79 $\pm$ 0.14} & \textbf{44.76 $\pm$ 0.29} & 8.49 $\pm$ 1.08 & \textbf{44.74} & \textbf{1} \\
    \midrule

    \multicolumn{8}{c}{\textbf{Cluster Bag}} \\
    \midrule

    \multirow{11}{*}{CIFAR-10}
    & PM & 93.31 $\pm$ 0.19 & 91.29 $\pm$ 0.11 & 86.30 $\pm$ 0.45 & 72.94 $\pm$ 1.64 & 85.96 & 2 \\
    & DSQ & 93.62 $\pm$ 0.19 & 91.97 $\pm$ 0.44 & 87.22 $\pm$ 0.53 & 64.65 $\pm$ 2.56 & 84.37 & 4 \\
    & LLP-PVC & 92.30 $\pm$ 0.06 & 90.46 $\pm$ 0.24 & 85.29 $\pm$ 0.69 & 64.09 $\pm$ 2.06 & 83.04 & 6 \\
    & LLP-FC & 56.22 $\pm$ 1.47 & 39.49 $\pm$ 0.69 & 34.29 $\pm$ 0.19 & 72.65 $\pm$ 1.04 & 50.66 & 7 \\
    & ROT & 93.53 $\pm$ 0.07 & 92.05 $\pm$ 0.10 & 87.14 $\pm$ 0.15 & 59.96 $\pm$ 2.21 & 83.17 & 5 \\
    & EasyLLP & 10.90 $\pm$ 1.25\,$\dagger$ & 10.90 $\pm$ 1.26\,$\dagger$ & 10.87 $\pm$ 1.20\,$\dagger$ & 10.78 $\pm$ 1.16\,$\dagger$ & 10.86\,$\dagger$ & 10 \\
    & EasyLLP-ABS & 32.47 $\pm$ 1.31 & 26.93 $\pm$ 1.30 & 23.88 $\pm$ 0.35 & 19.73 $\pm$ 1.65 & 25.76 & 8 \\
    & GeneralUPM & 10.90 $\pm$ 1.25\,$\dagger$ & 10.90 $\pm$ 1.26\,$\dagger$ & 10.87 $\pm$ 1.20\,$\dagger$ & 10.78 $\pm$ 1.16\,$\dagger$ & 10.86\,$\dagger$ & 10 \\
    & GeneralUPM-ABS & 32.47 $\pm$ 1.31 & 26.93 $\pm$ 1.30 & 22.33 $\pm$ 1.84 & 20.25 $\pm$ 1.33 & 25.50 & 9 \\
    & FlowLLP & 93.16 $\pm$ 0.16 & 91.20 $\pm$ 0.17 & 83.96 $\pm$ 0.28 & 69.56 $\pm$ 1.10 & 84.47 & 3 \\
    & \textbf{LLP-MM} & \textbf{94.04 $\pm$ 0.20} & \textbf{92.33 $\pm$ 0.10} & \textbf{89.12 $\pm$ 0.43} & \textbf{75.19 $\pm$ 2.14} & \textbf{87.67} & \textbf{1} \\
    \midrule
    \multirow{11}{*}{CIFAR-100}
    & PM & 71.35 $\pm$ 0.20 & 66.21 $\pm$ 0.68 & 57.78 $\pm$ 0.72 & 46.06 $\pm$ 1.02 & 60.35 & 3 \\
    & DSQ & \textbf{72.33 $\pm$ 0.15} & 67.59 $\pm$ 1.04 & 54.33 $\pm$ 2.31 & 35.59 $\pm$ 1.86 & 57.46 & 5 \\
    & LLP-PVC & 65.03 $\pm$ 0.55 & 61.13 $\pm$ 0.68 & 50.67 $\pm$ 0.69 & 29.77 $\pm$ 0.74 & 51.65 & 7 \\
    & LLP-FC & 71.44 $\pm$ 0.51 & 66.59 $\pm$ 0.50 & 58.44 $\pm$ 1.02 & \textbf{46.19 $\pm$ 0.66} & 60.66 & 2 \\
    & ROT & 72.06 $\pm$ 0.10 & 66.68 $\pm$ 0.28 & 48.45 $\pm$ 0.12 & 29.43 $\pm$ 1.12 & 54.16 & 6 \\
    & EasyLLP & 0.90 $\pm$ 0.02\,$\dagger$ & 0.90 $\pm$ 0.02\,$\dagger$ & 0.90 $\pm$ 0.02\,$\dagger$ & 0.90 $\pm$ 0.02\,$\dagger$ & 0.90\,$\dagger$ & 10 \\
    & EasyLLP-ABS & 7.07 $\pm$ 0.29 & 4.95 $\pm$ 0.41 & 3.22 $\pm$ 0.35 & 2.54 $\pm$ 0.27 & 4.44 & 8 \\
    & GeneralUPM & 0.90 $\pm$ 0.02\,$\dagger$ & 0.90 $\pm$ 0.02\,$\dagger$ & 0.90 $\pm$ 0.02\,$\dagger$ & 0.90 $\pm$ 0.02\,$\dagger$ & 0.90\,$\dagger$ & 10 \\
    & GeneralUPM-ABS & 7.14 $\pm$ 0.36 & 4.74 $\pm$ 0.06 & 3.26 $\pm$ 0.29 & 2.56 $\pm$ 0.25 & 4.42 & 9 \\
    & FlowLLP & 70.41 $\pm$ 0.11 & 65.12 $\pm$ 0.07 & 55.15 $\pm$ 0.39 & 42.62 $\pm$ 0.52 & 58.32 & 4 \\
    & \textbf{LLP-MM} & 72.24 $\pm$ 0.67 & \textbf{67.89 $\pm$ 0.67} & \textbf{58.84 $\pm$ 0.65} & 45.66 $\pm$ 0.50 & \textbf{61.16} & \textbf{1} \\
    \midrule
    \multirow{11}{*}{miniImageNet}
    & PM & 65.21 $\pm$ 0.76 & 56.37 $\pm$ 0.10 & 44.06 $\pm$ 1.78 & 30.18 $\pm$ 0.90 & 48.95 & 3 \\
    & DSQ & 66.03 $\pm$ 0.27 & 55.79 $\pm$ 1.36 & 37.02 $\pm$ 2.75 & 19.56 $\pm$ 0.15 & 44.60 & 4 \\
    & LLP-PVC & 58.77 $\pm$ 0.39 & 52.50 $\pm$ 1.10 & 27.91 $\pm$ 8.11 & 15.31 $\pm$ 1.02 & 38.62 & 7 \\
    & LLP-FC & 65.53 $\pm$ 0.62 & 56.41 $\pm$ 0.61 & 44.26 $\pm$ 1.01 & \textbf{30.30 $\pm$ 1.07} & 49.13 & 2 \\
    & ROT & \textbf{67.29 $\pm$ 0.40} & 58.16 $\pm$ 0.87 & 33.94 $\pm$ 1.89 & 17.83 $\pm$ 1.32 & 44.30 & 5 \\
    & EasyLLP & 1.04 $\pm$ 0.05\,$\dagger$ & 1.01 $\pm$ 0.01\,$\dagger$ & 1.01 $\pm$ 0.01\,$\dagger$ & 1.01 $\pm$ 0.01\,$\dagger$ & 1.01\,$\dagger$ & 10 \\
    & EasyLLP-ABS & 5.31 $\pm$ 0.07 & 3.38 $\pm$ 0.33 & 2.86 $\pm$ 0.12 & 2.27 $\pm$ 0.05 & 3.45 & 8 \\
    & GeneralUPM & 1.04 $\pm$ 0.05\,$\dagger$ & 1.01 $\pm$ 0.01\,$\dagger$ & 1.01 $\pm$ 0.01\,$\dagger$ & 1.01 $\pm$ 0.01\,$\dagger$ & 1.01\,$\dagger$ & 10 \\
    & GeneralUPM-ABS & 5.31 $\pm$ 0.07 & 3.46 $\pm$ 0.26 & 2.78 $\pm$ 0.19 & 2.19 $\pm$ 0.07 & 3.44 & 9 \\
    & FlowLLP & 61.86 $\pm$ 0.88 & 51.56 $\pm$ 0.43 & 37.44 $\pm$ 0.45 & 25.67 $\pm$ 0.53 & 44.13 & 6 \\
    & \textbf{LLP-MM} & 66.99 $\pm$ 0.68 & \textbf{59.65 $\pm$ 0.97} & \textbf{45.91 $\pm$ 1.63} & 28.36 $\pm$ 0.71 & \textbf{50.23} & \textbf{1} \\
    \bottomrule
    \end{tabular}
\end{table*}
\begin{table*}[t]
    \caption{Best test accuracy under $\propvec$-First Bag construction (mean $\pm$ std over three runs, \%). The best result for each bag size and average is bold. Ranks use average accuracy with standard competition ranking.}
    \label{tab:cifar_alphafirst}
    \centering
    \fontsize{7pt}{7pt}\selectfont
    \setlength\tabcolsep{5pt}
    \renewcommand\arraystretch{1.0}
    \begin{tabular}{clcccccc}
    \toprule
    \textbf{Dataset} & \textbf{Method} & \textbf{16} & \textbf{32} & \textbf{64} & \textbf{128} & \textbf{Average} & \textbf{Rank} \\
    \midrule

    \multirow{11}{*}{CIFAR-10}
    & PM                   & 91.00 $\pm$ 0.40 & 70.37 $\pm$ 0.32 & 65.38 $\pm$ 1.14 & 61.35 $\pm$ 0.52 & 72.03 & 4 \\
    & DSQ                  & 90.98 $\pm$ 0.16 & 74.24 $\pm$ 0.57 & 73.78 $\pm$ 1.11 & 68.83 $\pm$ 1.43 & 76.96 & 3 \\
    & LLP-PVC              & 91.85 $\pm$ 0.13 & 89.18 $\pm$ 0.11 & \textbf{81.44 $\pm$ 0.61} & 66.78 $\pm$ 0.54 & 82.31 & 2 \\
    & LLP-FC               & 58.53 $\pm$ 0.36 & 49.40 $\pm$ 0.64 & 49.00 $\pm$ 0.42 & 64.55 $\pm$ 1.15 & 55.37 & 7 \\
    & ROT                  & 92.93 $\pm$ 0.07 & \textbf{90.71 $\pm$ 0.11} & 61.53 $\pm$ 3.70 & 38.24 $\pm$ 1.22 & 70.85 & 5 \\
    & EasyLLP              & 47.72 $\pm$ 1.40 & 37.40 $\pm$ 0.61 & 28.82 $\pm$ 2.44 & 24.09 $\pm$ 1.58 & 34.51 & 10 \\
    & EasyLLP-ABS          & 44.97 $\pm$ 0.77 & 36.84 $\pm$ 0.93 & 38.03 $\pm$ 0.35 & 39.59 $\pm$ 1.07 & 39.86 & 8 \\
    & GeneralUPM           & 41.79 $\pm$ 0.78 & 31.35 $\pm$ 2.18 & 29.76 $\pm$ 0.92 & 28.91 $\pm$ 1.56 & 32.95 & 11 \\
    & GeneralUPM-ABS       & 43.53 $\pm$ 1.03 & 36.56 $\pm$ 0.08 & 37.56 $\pm$ 0.59 & 37.93 $\pm$ 4.25 & 38.89 & 9 \\
    & FlowLLP              & 90.63 $\pm$ 0.08 & 72.68 $\pm$ 1.60 & 60.78 $\pm$ 0.28 & 55.39 $\pm$ 3.44 & 69.87 & 6 \\
    & \textbf{LLP-MM}      & \textbf{93.54 $\pm$ 0.12} & 88.25 $\pm$ 0.58 & 76.83 $\pm$ 0.94 & \textbf{70.65 $\pm$ 0.69} & \textbf{82.32} & \textbf{1} \\
    \midrule

    \multirow{11}{*}{CIFAR-100}
    & PM                   & 70.81 $\pm$ 0.23 & 67.01 $\pm$ 0.14 & 54.53 $\pm$ 0.02 & 5.23 $\pm$ 0.34 & 49.40 & 3 \\
    & DSQ                  & 70.96 $\pm$ 0.45 & 66.07 $\pm$ 0.47 & 46.56 $\pm$ 0.68 & \textbf{13.60 $\pm$ 0.25} & 49.30 & 4 \\
    & LLP-PVC              & 64.54 $\pm$ 0.34 & 61.43 $\pm$ 0.29 & 49.81 $\pm$ 1.90 & 4.41 $\pm$ 0.53 & 45.05 & 6 \\
    & LLP-FC               & 71.41 $\pm$ 0.13 & 66.93 $\pm$ 0.16 & 57.69 $\pm$ 0.43 & 4.75 $\pm$ 0.10 & 50.20 & 2 \\
    & ROT                  & \textbf{72.14 $\pm$ 0.23} & 63.96 $\pm$ 0.40 & 16.58 $\pm$ 0.66 & 2.76 $\pm$ 0.04 & 38.86 & 7 \\
    & EasyLLP              & 1.95 $\pm$ 0.25 & 2.43 $\pm$ 0.63 & 2.13 $\pm$ 0.43 & 1.73 $\pm$ 0.16 & 2.06 & 11 \\
    & EasyLLP-ABS          & 17.51 $\pm$ 0.83 & 12.50 $\pm$ 0.36 & 9.39 $\pm$ 0.32 & 4.32 $\pm$ 0.66 & 10.93 & 9 \\
    & GeneralUPM           & 12.27 $\pm$ 0.43 & 8.87 $\pm$ 0.76 & 5.77 $\pm$ 0.18 & 2.63 $\pm$ 0.30 & 7.38 & 10 \\
    & GeneralUPM-ABS       & 23.98 $\pm$ 0.88 & 16.93 $\pm$ 0.54 & 11.20 $\pm$ 0.04 & 6.36 $\pm$ 0.37 & 14.61 & 8 \\
    & FlowLLP              & 70.90 $\pm$ 0.24 & 65.85 $\pm$ 0.46 & 50.37 $\pm$ 1.17 & 2.79 $\pm$ 0.53 & 47.48 & 5 \\
    & \textbf{LLP-MM}      & 72.01 $\pm$ 0.29 & \textbf{67.36 $\pm$ 0.25} & \textbf{58.02 $\pm$ 0.35} & 7.18 $\pm$ 0.20 & \textbf{51.14} & \textbf{1} \\
    \midrule

    \multirow{11}{*}{miniImageNet}
    & PM                   & 63.76 $\pm$ 0.06 & 55.47 $\pm$ 0.29 & 36.44 $\pm$ 0.80 & 3.62 $\pm$ 0.09 & 39.82 & 4 \\
    & DSQ                  & 63.14 $\pm$ 0.33 & 55.09 $\pm$ 0.60 & 33.03 $\pm$ 0.57 & \textbf{8.72 $\pm$ 0.69} & 40.00 & 3 \\
    & LLP-PVC              & 57.47 $\pm$ 0.37 & 54.74 $\pm$ 0.49 & 31.01 $\pm$ 1.47 & 2.89 $\pm$ 0.26 & 36.53 & 6 \\
    & LLP-FC               & 64.23 $\pm$ 0.32 & 54.71 $\pm$ 0.48 & 38.82 $\pm$ 0.38 & 3.80 $\pm$ 0.27 & 40.39 & 2 \\
    & ROT                  & 65.87 $\pm$ 0.48 & 54.29 $\pm$ 1.00 & 11.19 $\pm$ 0.20 & 2.33 $\pm$ 0.28 & 33.42 & 7 \\
    & EasyLLP              & 1.62 $\pm$ 0.21 & 1.61 $\pm$ 0.08 & 1.71 $\pm$ 0.13 & 1.48 $\pm$ 0.08 & 1.60 & 11 \\
    & EasyLLP-ABS          & 11.47 $\pm$ 0.20 & 8.56 $\pm$ 0.18 & 5.49 $\pm$ 0.66 & 2.83 $\pm$ 1.04 & 7.09 & 9 \\
    & GeneralUPM           & 9.85 $\pm$ 0.13 & 6.89 $\pm$ 0.54 & 4.30 $\pm$ 0.21 & 2.41 $\pm$ 0.19 & 5.86 & 10 \\
    & GeneralUPM-ABS       & 20.73 $\pm$ 1.05 & 13.60 $\pm$ 0.78 & 7.70 $\pm$ 0.09 & 4.03 $\pm$ 0.30 & 11.51 & 8 \\
    & FlowLLP              & 62.76 $\pm$ 0.21 & 52.79 $\pm$ 1.01 & 30.89 $\pm$ 1.59 & 2.67 $\pm$ 0.33 & 37.28 & 5 \\
    & \textbf{LLP-MM}      & \textbf{66.37 $\pm$ 0.16} & \textbf{59.96 $\pm$ 0.19} & \textbf{44.91 $\pm$ 0.45} & 3.43 $\pm$ 0.09 & \textbf{43.67} & \textbf{1} \\
    \bottomrule
    \end{tabular}
\end{table*}
\subsection{Experimental Setup}
\subsubsection{Datasets and Bag Construction}
Following prior work~\citep{busa-fekete2023easy,busa2025nearly}, we construct synthetic LLP datasets on three standard benchmarks: CIFAR-10, CIFAR-100~\citep{krizhevsky2009learning}, and miniImageNet~\citep{vinyals2016matching}.
Following~\citet{ma2026learningfrom}, we evaluate three bag generation strategies, namely random bags, cluster bags, and $\propvec$-first bags, with bag sizes $m \in \set{16,32,64,128}$.
Existing LLP benchmarks are mainly designed for binary classification~\citep{brahmbhatt2024llp,franco2023dependence}, and are therefore not well aligned with our multi-class classification setting.
The detailed construction procedures are provided in Appendix~\ref{app:bag-construction}.
\subsubsection{Baselines and Implementation Details}
We compare LLP-MM with representative LLP baselines, including PM~\citep{yu2014learning,ardehaly2017co}, DSQ~\citep{li2024optimistic}, LLP-PVC~\citep{ma2026learningfrom}, LLP-FC~\citep{zhang2022learning}, ROT~\citep{dulac2019deep}, EasyLLP~\citep{busa-fekete2023easy}, GeneralUPM~\citep{applebaum2025optimal}, EasyLLP-ABS, GeneralUPM-ABS, and FlowLLP~\citep{rakotomamonjy2026particle}. EasyLLP-ABS and GeneralUPM-ABS take the absolute value of the original loss to reduce the effect of negative loss values.
For the main benchmark comparisons, LLP-MM uses \(s=8\) for
CIFAR-10, and \(s=3\) for CIFAR-100 and miniImageNet. For each order \(r\), we
use the cross-entropy loss as \(\Loss\order{r}\), and set the weight
\(\lambda_r\) to \(1/s\).
Further implementation details are provided in Appendix~\ref{app:implementation-details}.
We also evaluate on patient-grouped blood cell data in Section~\ref{sec:semi-real-world}.

\subsection{Comparison with Baselines}

LLP-MM ranks first in average accuracy across all nine dataset--bag-construction settings in Tables~\ref{tab:cifar_random_cluster} and~\ref{tab:cifar_alphafirst}. DSQ and LLP-FC perform best in some larger-bag settings. EasyLLP and GeneralUPM can suffer from negative empirical losses during optimization. LLP-PVC is competitive on CIFAR-10 but has lower average accuracy on CIFAR-100 and miniImageNet. This gap may partly reflect its one-vs-rest decomposition, which creates increasingly imbalanced binary tasks as the number of classes grows and does not directly model the joint multiclass PMD. FFT-based evaluation of very small Poisson-binomial probabilities may also affect likelihood optimization through roundoff and cancellation errors~\citep{peres2020exactly}.

\subsection{Semi-Real-World Evaluation}
\label{sec:semi-real-world}
\label{app:semi-real-world}

Inspired by the blood-cell LLP setting of \citet{pacheco2024vertex}, whose real data are not publicly available, we construct a semi-real-world LLP benchmark on the public 13-class KU-Optofil PBC dataset~\citep{yarikan2026large}. We retain its official patient-separated split, merge training and validation cells, and form bags by patient ID, with synthetic subbags for missing IDs. Bag proportions are computed from cell labels. Table~\ref{tab:ku-optofil-semi-real} reports three-run results, selecting each run's checkpoint by the highest test Macro-F1. Details are in Appendix~\ref{app:ku-implementation-details}.

\begin{table}[!t]
  \centering
  \setlength{\belowcaptionskip}{3pt}
  \caption{Test performance on KU-Optofil PBC (mean $\pm$ std over three runs, \%). The best result in each column is bold.}
  \label{tab:ku-optofil-semi-real}
  \footnotesize
  \setlength{\tabcolsep}{4pt}
  \renewcommand{\arraystretch}{0.95}
  \begin{tabular*}{0.96\linewidth}{@{\extracolsep{\fill}}lcccc@{}}
    \toprule
    \textbf{Method} & \textbf{Accuracy} $\uparrow$ & \textbf{Macro-F1} $\uparrow$ & \textbf{Balanced Acc.} $\uparrow$ & \textbf{Weighted-F1} $\uparrow$ \\
    \midrule
    PM             & $77.35\pm13.10$ & $29.11\pm10.74$ & $32.70\pm10.01$ & $74.46\pm15.73$ \\
    DSQ            & $92.39\pm1.33$ & $56.89\pm7.65$ & $60.08\pm8.30$ & $92.70\pm1.57$ \\
    LLP-PVC        & $92.82\pm1.94$ & $58.42\pm4.31$ & $61.63\pm4.27$ & $91.58\pm2.33$ \\
    LLP-FC         & $75.11\pm9.57$ & $26.26\pm4.28$ & $27.12\pm4.46$ & $72.46\pm11.60$ \\
    ROT            & $\mathbf{94.05\pm0.65}$ & $56.11\pm7.56$ & $56.82\pm7.74$ & $93.35\pm1.17$ \\
    EasyLLP        & $37.98\pm8.32$ & $11.51\pm1.77$ & $14.50\pm0.85$ & $39.28\pm3.11$ \\
    EasyLLP-ABS    & $38.33\pm8.83$ & $11.02\pm1.59$ & $13.85\pm1.81$ & $40.48\pm5.11$ \\
    GeneralUPM     & $66.33\pm0.22$ & $12.48\pm1.28$ & $14.16\pm1.32$ & $55.19\pm2.51$ \\
    GeneralUPM-ABS & $66.33\pm0.22$ & $12.48\pm1.28$ & $14.16\pm1.32$ & $55.19\pm2.51$ \\
    FlowLLP        & $61.97\pm4.34$ & $25.67\pm2.34$ & $31.11\pm1.90$ & $55.75\pm4.85$ \\
    \midrule
    LLP-MM ($s=3$) & $91.88\pm1.20$ & $51.79\pm5.80$ & $53.23\pm7.41$ & $91.88\pm0.48$ \\
    LLP-MM ($s=5$) & $93.40\pm0.84$ & $66.40\pm2.22$ & $69.81\pm0.91$ & $94.20\pm0.55$ \\
    \textbf{LLP-MM ($s=8$)} & $93.93\pm0.58$ & $\mathbf{69.75\pm1.47}$ & $\mathbf{70.73\pm2.74}$ & $\mathbf{94.57\pm0.22}$ \\
    \bottomrule
  \end{tabular*}
\end{table}

LLP-MM ($s=8$) leads in Macro-F1, Balanced Accuracy, and Weighted-F1, exceeding LLP-PVC by 11.33 Macro-F1 points; ROT has the highest Accuracy. Increasing $s$ from 3 to 5 to 8 raises Macro-F1 from $51.79\%$ to $66.40\%$ to $69.75\%$.

\begin{figure}[!t]
    \centering
    \includegraphics[width=\linewidth]{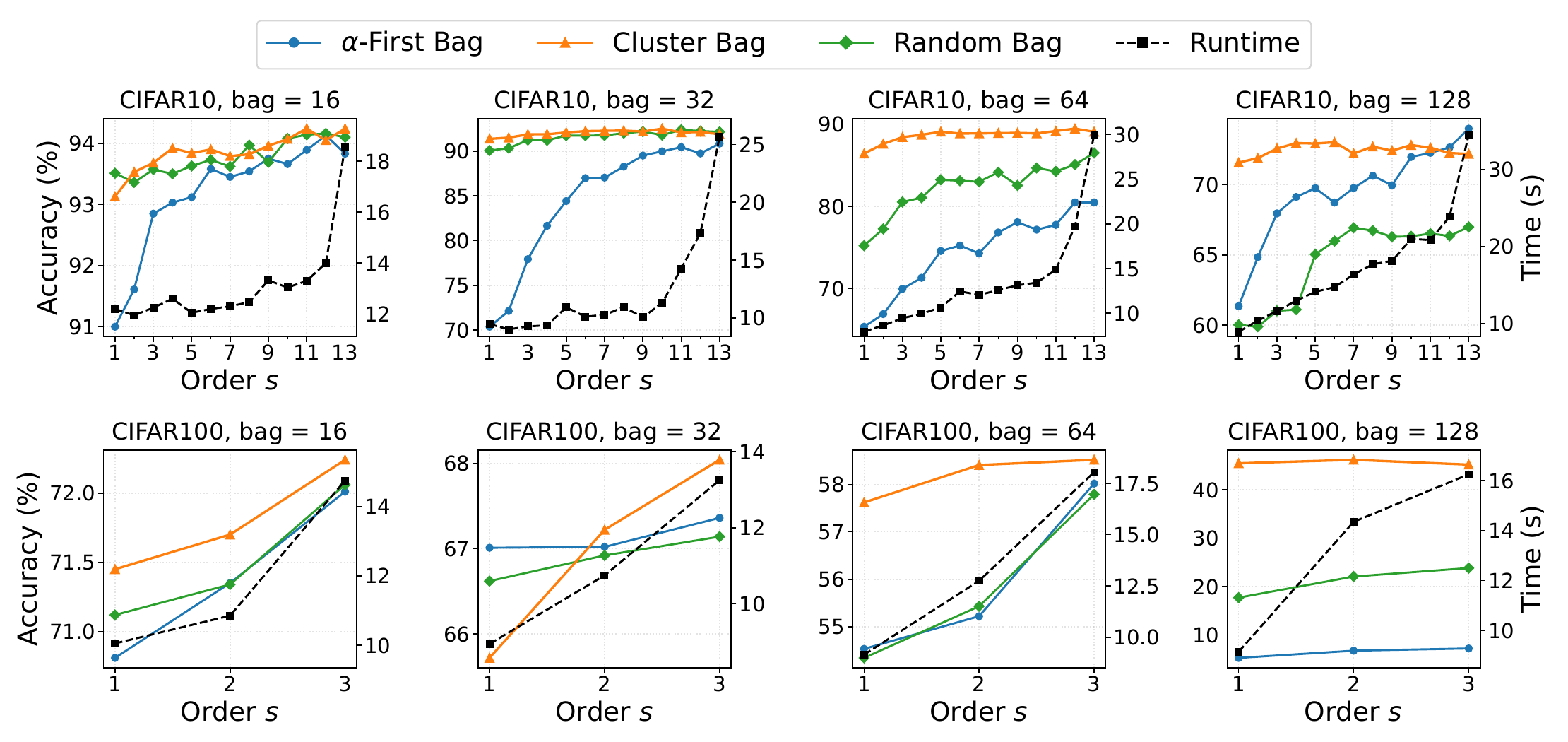}
    \caption{Effect of the order parameter on prediction performance and runtime per epoch across different bag construction strategies and bag sizes. Runtime was measured on a single NVIDIA A100-SXM4-40GB GPU with an AMD EPYC 7742 64-Core Processor.}
    \label{fig:order-effect-runtime-allbags}
\end{figure}

\subsection{Effect of the Order Parameter}

Figure~\ref{fig:order-effect-runtime-allbags} shows the accuracy--runtime trade-off as $s$ increases. Higher orders tighten the minimizer set toward $\bm f^*$ (Proposition~\ref{prop:mo-nesting}) but increase computation. We report CIFAR-100 only up to $s=3$, since fourth-order terms are prohibitively expensive with 100 classes. As the order increases from low values, prediction performance generally improves, suggesting that higher-order moments provide useful supervision beyond mean class proportions. At higher orders, performance tends to stabilize, and further increases bring only limited gains. Since computation continues to grow with the order, moderate orders offer a practical balance between predictive performance and computational cost.

\section{Conclusion}
We proposed LLP-MM, a moment-matching framework for learning from label proportions. 
By connecting proportions to the Poisson multinomial distribution, LLP-MM exploits multi-order factorial moments derived from observed label proportions and uses them as training supervision. 
Theoretical analysis shows that higher-order moments impose stronger constraints on the solution space, and experiments demonstrate that LLP-MM achieves robust performance across various settings while maintaining practical computational efficiency.
A limitation is the scarcity of standardized real-world LLP datasets; beyond the semi-real-world study in Section~\ref{sec:semi-real-world}, future work will develop broader realistic benchmarks. We will also design multiclass LLP benchmarks and evaluation metrics to compare different methods.

\begin{ack}
Part of the experiments was conducted using the Miyabi and Wisteria supercomputing systems at the Joint Center for Advanced High Performance Computing (JCAHPC) and the Information Technology Center, The University of Tokyo, respectively.

YZ was supported by JSPS KAKENHI Grant Number JP26K21299. DDW was supported by the RIKEN Junior Research Associate Program. TI was supported by KAKENHI Grant Number 22K17946. MS was supported by JST ASPIRE Grant Number JPMJAP25B1.

\end{ack}

\bibliography{references}

\begin{thebibliography}{46}
\providecommand{\natexlab}[1]{#1}
\providecommand{\url}[1]{\texttt{#1}}
\expandafter\ifx\csname urlstyle\endcsname\relax
  \providecommand{\doi}[1]{doi: #1}\else
  \providecommand{\doi}{doi: \begingroup \urlstyle{rm}\Url}\fi

\bibitem[{Apple}(2025)]{apple2025skan4}
{Apple}.
\newblock {SKAdNetwork} 4 release notes.
\newblock
  \url{https://developer.apple.com/documentation/storekit/skadnetwork/skadnetwork_release_notes/skadnetwork_4_release_notes},
  2025.
\newblock Apple Developer Documentation.

\bibitem[Applebaum et~al.(2025)Applebaum, Dick, Gentile, Kaplan, and
  Koren]{applebaum2025optimal}
Lorne Applebaum, Travis Dick, Claudio Gentile, Haim Kaplan, and Tomer Koren.
\newblock Optimal learning from label proportions with general loss functions.
\newblock \emph{arXiv preprint arXiv:2509.15145}, 2025.

\bibitem[Ardehaly and Culotta(2017)]{ardehaly2017co}
Ehsan~Mohammady Ardehaly and Aron Culotta.
\newblock Co-training for demographic classification using deep learning from
  label proportions.
\newblock In \emph{IEEE International Conference on Data Mining Workshops},
  pages 1017--1024, 2017.

\bibitem[Bartlett et~al.(2006)Bartlett, Jordan, and
  McAuliffe]{bartlett2006convexity}
Peter~L Bartlett, Michael~I Jordan, and Jon~D McAuliffe.
\newblock Convexity, classification, and risk bounds.
\newblock \emph{Journal of the American Statistical Association}, 101\penalty0
  (473):\penalty0 138--156, 2006.
\newblock URL \url{https://doi.org/10.1198/016214505000000907}.

\bibitem[Ben-David et~al.(2003)Ben-David, Eiron, and Long]{ben2003difficulty}
Shai Ben-David, Nadav Eiron, and Philip~M Long.
\newblock On the difficulty of approximately maximizing agreements.
\newblock \emph{Journal of Computer and System Sciences}, 66\penalty0
  (3):\penalty0 496--514, 2003.
\newblock URL \url{https://doi.org/10.1016/S0022-0000(03)00038-2}.

\bibitem[Bortsova et~al.(2018)Bortsova, Dubost, {\O}rting, Katramados, Hogeweg,
  Thomsen, Wille, and de~Bruijne]{bortsova2018deep}
Gerda Bortsova, Florian Dubost, Silas {\O}rting, Ioannis Katramados, Laurens
  Hogeweg, Laura Thomsen, Mathilde Wille, and Marleen de~Bruijne.
\newblock Deep learning from label proportions for emphysema quantification.
\newblock In \emph{Proceedings of the 21st International Conference on Medical
  Image Computing and Computer Assisted Intervention}, pages 768--776.
  Springer, 2018.

\bibitem[Brahmbhatt et~al.(2024)Brahmbhatt, Pokala, Saket, and
  Raghuveer]{brahmbhatt2024llp}
Anand Brahmbhatt, Mohith Pokala, Rishi Saket, and Aravindan Raghuveer.
\newblock {{LLP-Bench}}: A large scale tabular benchmark for learning from
  label proportions.
\newblock In \emph{Proceedings of the 33rd ACM International Conference on
  Information and Knowledge Management}, pages 4374--4381, 2024.

\bibitem[Busa-Fekete et~al.(2023)Busa-Fekete, Choi, Dick, Gentile, and
  Medina]{busa-fekete2023easy}
R{\'{o}}bert Busa-Fekete, Heejin Choi, Travis Dick, Claudio Gentile, and
  Andr{\'{e}}s~Mu{\~{n}}oz Medina.
\newblock Easy learning from label proportions.
\newblock In \emph{Advances in Neural Information Processing Systems}, pages
  14957--14968, 2023.

\bibitem[Busa-Fekete et~al.(2025)Busa-Fekete, Dick, Gentile, Kaplan, Koren, and
  Stemmer]{busa2025nearly}
Robert Busa-Fekete, Travis Dick, Claudio Gentile, Haim Kaplan, Tomer Koren, and
  Uri Stemmer.
\newblock Nearly optimal sample complexity for learning with label proportions.
\newblock \emph{arXiv preprint arXiv:2505.05355}, 2025.

\bibitem[Daskalakis et~al.(2015)Daskalakis, Kamath, and
  Tzamos]{daskalakis2015structure}
Constantinos Daskalakis, Gautam Kamath, and Christos Tzamos.
\newblock On the structure, covering, and learning of {Poisson} multinomial
  distributions.
\newblock In \emph{2015 IEEE 56th annual symposium on foundations of computer
  science}, pages 1203--1217, 2015.

\bibitem[Diemert et~al.(2022)Diemert, Fabre, Gilotte, Jia, Leparmentier, Mary,
  Qu, Tanielian, and Yang]{diemert2022lessons}
Eustache Diemert, Romain Fabre, Alexandre Gilotte, Fei Jia, Basile
  Leparmentier, J{\'e}r{\'e}mie Mary, Zhonghua Qu, Ugo Tanielian, and Hui Yang.
\newblock Lessons from the {AdKDD}’21 privacy-preserving {ML} challenge.
\newblock In \emph{Proceedings of the ACM Web Conference 2022}, pages
  2026--2035, 2022.

\bibitem[Dulac-Arnold et~al.(2019)Dulac-Arnold, Zeghidour, Cuturi, Beyer, and
  Vert]{dulac2019deep}
Gabriel Dulac-Arnold, Neil Zeghidour, Marco Cuturi, Lucas Beyer, and
  Jean-Philippe Vert.
\newblock Deep multi-class learning from label proportions.
\newblock \emph{arXiv preprint arXiv:1905.12909}, 2019.

\bibitem[Franco et~al.(2023)Franco, Crovella, and
  Comarela]{franco2023dependence}
Gabriel Franco, Mark Crovella, and Giovanni Comarela.
\newblock Dependence and model selection in {LLP}: the problem of variants.
\newblock In \emph{Proceedings of the 29th ACM SIGKDD Conference on Knowledge
  Discovery and Data Mining}, pages 470--481, 2023.

\bibitem[{Google}(2024)]{google2024privateaggregation}
{Google}.
\newblock Private aggregation {API} of {Chrome} privacy sandbox.
\newblock
  \url{https://developer.chrome.com/docs/privacy-sandbox/aggregation-service/},
  2024.
\newblock Chrome Privacy Sandbox documentation.

\bibitem[Johnson et~al.(1997)Johnson, Kotz, and
  Balakrishnan]{JohnsonKotzBalakrishnan1997}
Norman~L. Johnson, Samuel Kotz, and Narayanaswamy Balakrishnan.
\newblock \emph{Discrete Multivariate Distributions}.
\newblock John Wiley \& Sons, 1997.
\newblock ISBN 9780471128441.

\bibitem[Krizhevsky and Hinton(2009)]{krizhevsky2009learning}
Alex Krizhevsky and Geoffrey~E. Hinton.
\newblock Learning multiple layers of features from tiny images.
\newblock Technical report, University of Toronto, 2009.

\bibitem[Lai et~al.(2014)Lai, Yu, Chen, and Chang]{lai2014video}
Kuan-Ting Lai, Felix~X Yu, Ming-Syan Chen, and Shih-Fu Chang.
\newblock Video event detection by inferring temporal instance labels.
\newblock In \emph{Proceedings of the IEEE Conference on Computer Vision and
  Pattern Recognition}, pages 2243--2250, 2014.

\bibitem[Lai et~al.(2022)Lai, Wang, Gunawan, Cheung, and
  Chuah]{lai2022smoothed}
Zhengfeng Lai, Chao Wang, Henrry Gunawan, Sen-Ching~S. Cheung, and Chen-Nee
  Chuah.
\newblock Smoothed adaptive weighting for imbalanced semi-supervised learning:
  Improve reliability against unknown distribution data.
\newblock In \emph{International Conference on Machine Learning}, pages
  11828--11843, 2022.

\bibitem[Li et~al.(2021)Li, Li, and Ouyang]{li2021semi}
Changchun Li, Ximing Li, and Jihong Ouyang.
\newblock Semi-supervised text classification with balanced deep representation
  distributions.
\newblock In \emph{Association for Computational Linguistics and the
  International Joint Conference on Natural Language Processing}, pages
  5044--5053, 2021.

\bibitem[Li et~al.(2024)Li, Chen, Javanmard, and Mirrokni]{li2024optimistic}
Gene Li, Lin Chen, Adel Javanmard, and Vahab Mirrokni.
\newblock Optimistic rates for learning from label proportions.
\newblock In \emph{Annual Conference on Learning Theory}, pages 3437--3474,
  2024.

\bibitem[Lin et~al.(2023)Lin, Wang, and Hong]{lin2023computing}
Zhengzhi Lin, Yueyao Wang, and Yili Hong.
\newblock The computing of the poisson multinomial distribution and
  applications in ecological inference and machine learning.
\newblock \emph{Computational Statistics}, 38\penalty0 (4):\penalty0
  1851--1877, 2023.

\bibitem[Liu et~al.(2019)Liu, Wang, Qi, Tian, and Shi]{Liu2019Learning}
Jiabin Liu, Bo~Wang, Zhiquan Qi, Yingjie Tian, and Yong Shi.
\newblock Learning from label proportions with generative adversarial networks.
\newblock In \emph{Advances in Neural Information Processing Systems}, pages
  7167--7177, 2019.

\bibitem[Lu et~al.(2021)Lu, Lei, Niu, Sato, and Sugiyama]{lu2021binary}
Nan Lu, Shida Lei, Gang Niu, Issei Sato, and Masashi Sugiyama.
\newblock Binary classification from multiple unlabeled datasets via surrogate
  set classification.
\newblock In \emph{International Conference on Machine Learning}, pages
  7134--7144, 2021.

\bibitem[Lv et~al.(2020)Lv, Xu, Feng, Niu, Geng, and
  Sugiyama]{lv2020progressive}
Jiaqi Lv, Miao Xu, Lei Feng, Gang Niu, Xin Geng, and Masashi Sugiyama.
\newblock Progressive identification of true labels for partial-label learning.
\newblock In \emph{International Conference on Machine Learning}, pages
  6500--6510, 2020.

\bibitem[Ma et~al.(2025)Ma, Chen, Hu, Zhu, and Li]{ma2025forming}
Tianhao Ma, Han Chen, Juncheng Hu, Yungang Zhu, and Ximing Li.
\newblock Forming auxiliary high-confident instance-level loss to promote
  learning from label proportions.
\newblock In \emph{Proceedings of the Computer Vision and Pattern Recognition
  Conference}, pages 20592--20601, 2025.

\bibitem[Ma et~al.(2026{\natexlab{a}})Ma, Li, Li, and Guan]{ma2026learning}
Tianhao Ma, Ximing Li, Changchun Li, and Renchu Guan.
\newblock Learning from label proportions with dual-proportion constraints.
\newblock \emph{arXiv preprint arXiv:2603.21153}, 2026{\natexlab{a}}.

\bibitem[Ma et~al.(2026{\natexlab{b}})Ma, Wang, Li, Niu, and
  Sugiyama]{ma2026learningfrom}
Tianhao Ma, Wei Wang, Ximing Li, Gang Niu, and Masashi Sugiyama.
\newblock Learning from label proportions via proportional value
  classification.
\newblock In \emph{The Fourteenth International Conference on Learning
  Representations}, 2026{\natexlab{b}}.

\bibitem[Maurer(2016)]{maurer2016vector}
Andreas Maurer.
\newblock A vector-contraction inequality for {Rademacher} complexities.
\newblock In \emph{Proceedings of the 27th International Conference on
  Algorithmic Learning Theory}, pages 3--17, 2016.

\bibitem[Mohri et~al.(2018)Mohri, Rostamizadeh, and
  Talwalkar]{mohri2018foundations}
Mehryar Mohri, Afshin Rostamizadeh, and Ameet Talwalkar.
\newblock \emph{Foundations of machine learning}.
\newblock MIT press, 2018.

\bibitem[Ng et~al.(2011)Ng, Tian, and Tang]{NgTianTang2011}
Kai~Wang Ng, Guo-Liang Tian, and Man-Lai Tang.
\newblock \emph{Dirichlet and Related Distributions: Theory, Methods and
  Applications}.
\newblock John Wiley \& Sons, 2011.
\newblock ISBN 9780470688199.

\bibitem[Pacheco et~al.(2024)Pacheco, Yellin, Vidal, and
  Haeffele]{pacheco2024vertex}
Carolina Pacheco, Florence Yellin, Ren{\'e} Vidal, and Benjamin Haeffele.
\newblock Vertex proportion loss for multi-class cell detection from label
  proportions.
\newblock In \emph{Medical Image Computing and Computer Assisted Intervention},
  pages 366--376. Springer Nature Switzerland, 2024.
\newblock URL \url{https://papers.miccai.org/miccai-2024/829-Paper3997.html}.

\bibitem[Patrini et~al.(2017)Patrini, Rozza, Menon, Nock, and
  Qu]{patrini2017making}
Giorgio Patrini, Alessandro Rozza, Aditya~Krishna Menon, Richard Nock, and
  Lizhen Qu.
\newblock Making deep neural networks robust to label noise: A loss correction
  approach.
\newblock In \emph{Proceedings of the IEEE Conference on Computer Vision and
  Pattern Recognition}, pages 1944--1952, 2017.

\bibitem[Peres et~al.(2020)Peres, Lee, and Keich]{peres2020exactly}
Noah Peres, Andrew Lee, and Uri Keich.
\newblock Exactly computing the tail of the {Poisson-Binomial} distribution.
\newblock \emph{arXiv preprint arXiv:2004.07429}, 2020.
\newblock URL \url{https://arxiv.org/abs/2004.07429}.

\bibitem[Quadrianto et~al.(2009)Quadrianto, Smola, Caetano, and
  Le]{quadrianto2009estimating}
Novi Quadrianto, Alex~J Smola, Tib{\'e}rio~S Caetano, and Quoc~V Le.
\newblock Estimating labels from label proportions.
\newblock \emph{Journal of Machine Learning Research}, 10:\penalty0 2349--2374,
  2009.

\bibitem[Rakotomamonjy et~al.(2026)Rakotomamonjy, Vono, and
  Ralaivola]{rakotomamonjy2026particle}
Alain Rakotomamonjy, Maxime Vono, and Liva Ralaivola.
\newblock Particle flow for learning from label proportions.
\newblock In \emph{International Conference on Machine Learning}, 2026.
\newblock URL \url{https://icml.cc/virtual/2026/poster/63488}.

\bibitem[Sugiyama et~al.(2022)Sugiyama, Bao, Ishida, Lu, Sakai, and
  Niu]{sugiyama2022machine}
Masashi Sugiyama, Han Bao, Takashi Ishida, Nan Lu, Tomoya Sakai, and Gang Niu.
\newblock \emph{Machine learning from weak supervision: An empirical risk
  minimization approach}.
\newblock MIT Press, 2022.

\bibitem[Sun et~al.(2017)Sun, Sheldon, and O’Connor]{sun2017probabilistic}
Tao Sun, Dan Sheldon, and Brendan O’Connor.
\newblock A probabilistic approach for learning with label proportions applied
  to the {US} presidential election.
\newblock In \emph{Proceedings of the 2017 IEEE International Conference on
  Data Mining}, pages 445--454. IEEE, 2017.

\bibitem[Tokunaga et~al.(2020)Tokunaga, Iwana, Teramoto, Yoshizawa, and
  Bise]{tokunaga2020negative}
Hiroki Tokunaga, Brian~Kenji Iwana, Yuki Teramoto, Akihiko Yoshizawa, and Ryoma
  Bise.
\newblock Negative pseudo labeling using class proportion for semantic
  segmentation in pathology.
\newblock In \emph{European Conference on Computer Vision}, pages 430--446,
  2020.

\bibitem[Tsai and Lin(2020)]{tsai2020learning}
Kuen-Han Tsai and Hsuan-Tien Lin.
\newblock Learning from label proportions with consistency regularization.
\newblock In \emph{Asian Conference on Machine Learning}, pages 513--528, 2020.

\bibitem[Vinyals et~al.(2016)Vinyals, Blundell, Lillicrap, Wierstra,
  et~al.]{vinyals2016matching}
Oriol Vinyals, Charles Blundell, Timothy Lillicrap, Daan Wierstra, et~al.
\newblock Matching networks for one shot learning.
\newblock \emph{Advances in Neural Information Processing Systems}, 29, 2016.

\bibitem[Wang et~al.(2025)Wang, Ma, Xie, Niu, and Sugiyama]{wang2025rethinking}
Wei Wang, Tianhao Ma, Ming-Kun Xie, Gang Niu, and Masashi Sugiyama.
\newblock Rethinking consistent multi-label classification under inexact
  supervision.
\newblock \emph{arXiv preprint arXiv:2510.04091}, 2025.

\bibitem[Wei et~al.(2024)Wei, Shu, Cao, Wei, An, and Feng]{wei2024consistent}
Zixi Wei, Senlin Shu, Yuzhou Cao, Hongxin Wei, Bo~An, and Lei Feng.
\newblock Consistent multi-class classification from multiple unlabeled
  datasets.
\newblock In \emph{The Twelfth International Conference on Learning
  Representations (ICLR)}, 2024.

\bibitem[Yar{\i}kan et~al.(2026)Yar{\i}kan, {\"O}rer, Aky{\i}ld{\i}z,
  Ku{\c{s}}, Aydin, Palao{\u{g}}lu, {\.{I}}ncir, Baysal, {\"O}z{\c{c}}elik,
  Kiraz, and Kiraz]{yarikan2026large}
At{\i}f~Eren Yar{\i}kan, Can {\"O}rer, Volkan Aky{\i}ld{\i}z, Zeki Ku{\c{s}},
  Musa Aydin, Kerim~Erhan Palao{\u{g}}lu, Said {\.{I}}ncir, Kemal Baysal, Cemal
  {\"O}z{\c{c}}elik, Berna Kiraz, and Alper Kiraz.
\newblock A large-scale peripheral blood cell dataset for automated
  hematological analysis.
\newblock \emph{Scientific Data}, 13\penalty0 (1):\penalty0 417, 2026.
\newblock \doi{10.1038/s41597-026-06761-y}.
\newblock URL \url{https://doi.org/10.1038/s41597-026-06761-y}.

\bibitem[Yu et~al.(2014)Yu, Choromanski, Kumar, Jebara, and
  Chang]{yu2014learning}
Felix~X Yu, Krzysztof Choromanski, Sanjiv Kumar, Tony Jebara, and Shih-Fu
  Chang.
\newblock On learning from label proportions.
\newblock \emph{arXiv preprint arXiv:1402.5902}, 2014.

\bibitem[Yu et~al.(2018)Yu, Liu, Gong, and Tao]{yu2018learning}
Xiyu Yu, Tongliang Liu, Mingming Gong, and Dacheng Tao.
\newblock Learning with biased complementary labels.
\newblock In \emph{Proceedings of the European Conference on Computer Vision},
  pages 68--83, 2018.

\bibitem[Zhang et~al.(2022)Zhang, Wang, and Scott]{zhang2022learning}
Jianxin Zhang, Yutong Wang, and Clayton Scott.
\newblock Learning from label proportions by learning with label noise.
\newblock In \emph{Advances in Neural Information Processing Systems}, pages
  26933--26942, 2022.

\end{thebibliography}
\bibliographystyle{plainnat}

\clearpage
\appendix
\section{Notation and Terminology}
For convenience, \cref{tab:abbreviation-summary,tab:notation-summary} summarize the abbreviations and main notation used throughout the paper and this appendix.
\begin{table}[!htbp]
\caption{Notation used in the paper.}
\label{tab:notation-summary}
\centering
\footnotesize
\renewcommand{\arraystretch}{1.0}
\setlength{\tabcolsep}{4pt}
\begin{tabular}{@{}>{\raggedright\arraybackslash}p{0.28\linewidth}>{\raggedright\arraybackslash}p{0.66\linewidth}@{}}
\toprule
\textbf{Symbol} & \textbf{Meaning} \\
\midrule
$\sX$ & Input space. \\
$\sY=\set{1,\dots,q}$ & Label space. \\
$q$ & Number of classes. \\
$m$ & Bag size. \\
$\simplex{q}$ & The $(q-1)$-dimensional probability simplex $\set*{(p_1,\ldots,p_q)\in\R^q \given p_k\ge 0,\ \sum_{k=1}^q p_k=1}$. \\
$x_j \in \sX$ & The $j$-th instance in a bag. \\
$y_j \in \sY$ & Latent class label of $x_j$. \\
$\bag=(x_1,\dots,x_m)\in \sX^m$ & Bag of $m$ instances. \\
$\vy=(y_1,\dots,y_m)\in \sY^m$ & Latent label sequence associated with $\bag$. \\
$\sY^{m/\propvec}$ & Set of label sequences in $\sY^m$ whose empirical class proportions equal $\propvec$. \\
$\indicator{\cdot}:\set{\text{statements}}\to\set{0,1}$ & Indicator function, which returns $1$ when its argument is true and $0$ otherwise. \\
$\prop_k \in [0,1]$ & Proportion of class $k$ in a bag, defined by $\prop_k=\frac{1}{m}\sum_{j=1}^m \indicator{y_j=k}$. \\
$\propvec=(\prop_1,\dots,\prop_q)\in \simplex{q}$ & Bag-level class-proportion vector. \\
$\vz = m\propvec \in \set{0,\dots,m}^q$ & Bag-level class-count vector. \\
$\ve^{y_j}\in \set{0,1}^q \subset \R^q$ & One-hot basis vector corresponding to label $y_j$. \\
$p:\sX\times\sY\to\R_+$ & Joint data distribution over instance-label pairs. \\
$p(\cdot\mid x):\sY\to[0,1]$ & Class-posterior map for a fixed instance $x\in\sX$. \\
\midrule
$\sF \subseteq \set*{\classifier:\sX\to\simplex{q}}$ & Hypothesis class of score-based classifiers. \\
$\classifier:\sX\to\simplex{q}$ & Score-based classifier, with $\classifier\in\sF$. \\
$\classifier^*:\sX\to\simplex{q}$ & Bayes-optimal classifier. \\
$p_j=\classifier(x_j)\in\simplex{q}$ & Predicted class-probability vector for instance $x_j$. \\
\midrule
$\ell:\simplex{q}\times\sY\to\R_+$ & Classification-calibrated surrogate loss used in ordinary supervised learning. \\
$\M$ & Generic normalized population moment object; for example, $\M\order{1}_{\propvec}$ is the normalized first-order population moment and $\M\order{2}$ denotes the normalized bag-level second-order moment matrix. \\
$(m)_r$ & Falling factorial $m(m-1)\cdots(m-r+1)$. \\
$[m]_{\neq}^r$ & Set of ordered $r$-tuples of distinct indices from $[m]$. \\
$\Mhat\order{r}_{\propvec}\in(\R^q)^{\otimes r}$ & Normalized $r$-th order factorial moment tensor computed from $\propvec$. \\
$\Mhat\order{r}_\classifier:\sX^m\to(\R^q)^{\otimes r}$ & Classifier-induced normalized $r$-th order factorial moment map; $\Mhat\order{r}_\classifier(\bag)$ is its value on bag $\bag$. \\
$\Loss\order{r}$ & Order-$r$ discrepancy loss between normalized target and classifier-induced moments; for $r=1$ it acts on class-proportion vectors, and for general $r$ it acts on tensors in $(\R^q)^{\otimes r}$. \\
$R_\MO\order{s}:\sF\to\R_+$ & Population multi-order normalized moment-matching risk using moments up to order $s$. \\
$\widehat{R}_\MO\order{s}:\sF\to\R_+$ & Empirical multi-order normalized moment-matching risk using moments up to order $s$. \\
$s\in\set{1,\dots,m}$ & Maximum moment order used for training. \\
$\lambda_r > 0$ & Weight of the $r$-th order term, with $\sum_{r=1}^s \lambda_r=1$. \\
\midrule
$\Rademacher$ & Rademacher-complexity symbol. \\
$\Rademacher_n(\sF_c)$ & Expected Rademacher complexity of the coordinate function class $\sF_c$. \\
$\vk$ & Multiplicity vector recording how often each class index appears in an ordered index tuple. \\
$\sK_{q,r}$ & Compressed multiplicity-state space at order $r$, defined by $\sK_{q,r}=\set*{\vk\in\N^q \given \sum_{c=1}^q k_c=r}$. \\
$N_r$ & Cardinality of the compressed multiplicity-state space, namely $N_r=\abs*{\sK_{q,r}}=\binom{q+r-1}{r}$. \\
$A_t\order{d}:\sK_{q,d}\to\R$ & Coefficient map in the truncated generating polynomial; $A_t\order{d}(\vk)$ is the coefficient associated with state $\vk$. \\
\bottomrule
\end{tabular}
\end{table}

\begin{table}[htbp]
\caption{Abbreviations used in the paper.}
\label{tab:abbreviation-summary}
\centering
\small
\renewcommand{\arraystretch}{1.1}
\setlength{\tabcolsep}{4pt}
\begin{tabular}{@{}>{\raggedright\arraybackslash}p{0.22\linewidth}>{\raggedright\arraybackslash}p{0.72\linewidth}@{}}
\toprule
\textbf{Abbreviation} & \textbf{Meaning} \\
\midrule
WSL & Weakly supervised learning \\
LLP & Learning from label proportions \\
PMD & Poisson multinomial distribution \\
MM & Moment matching \\
LLP-MM & LLP via Moment Matching \\
\bottomrule
\end{tabular}
\end{table}

\section{First-Order Moment Matching}
\label{first_order_matching}
Based on the connection between LLP and the PMD, we propose a general framework which matches the empirical factorial moments with classifier-induced moments. We present the first-order moment matching approach, which has been referred to as proportion matching in prior work~\citep{yu2014learning,ardehaly2017co}. Specifically, let $\M\order{1}_\propvec = \bracks{M\order{1}_1, M\order{1}_2, \ldots, M\order{1}_q}$ denote the normalized first-order moment of $\vz$. Then, we have
\begin{equation}
\label{eq:moment_1order}
M\order{1}_k
=
\frac{1}{m}
\sum_{j=1}^m
\E_{p(y_j\mid x_j)}\bracks*{\indicator{y_j = k}}
=
\frac{1}{m}
\sum_{j=1}^m
p(y_j = k \mid x_j).
\end{equation}
However, since the instance-level posterior probabilities are unknown, we approximate $\M\order{1}_\propvec$ with the \emph{first-order empirical moment} $\Mhat\order{1}_\propvec$, where the $k$-th element is
\begin{equation}
  \bracks*{\Mhat\order{1}_\propvec}_k
=
\frac{1}{m}
\sum_{j=1}^m \bracks{\indicator{y_j = k}}
=
\frac{z_k}{m}
=
\prop_k,
\end{equation}
and $z_k$ is the $k$-th dimension of $\vz$. Apart from empirical moments, we can also use the classifier outputs to approximate $\M\order{1}_\propvec$.
Specifically, we introduce 
\begin{equation}
\Mhat\order{1}_\classifier(\bag)
=
\frac{1}{m}\sum_{j=1}^m \classifier(x_j)
\end{equation}
to represent the first-order classifier-induced moment. Finally, we introduce the \emph{first-order moment matching risk} to match the first-order empirical factorial moments with the first-order classifier-induced moment:
\begin{equation}
\label{eq:pm-risk}
R_\MO\order{1}(\classifier)
=
\E_{p(\bag, \propvec)}
\bracks*{
\Loss\order{1}\!\parens*{
\propvec,
\Mhat\order{1}_\classifier(\bag)
}
},
\end{equation}
where $\Loss\order{1}:\ \simplex{q} \times \simplex{q} \to [0,+\infty]$ measures the discrepancy between the true and predicted class proportions. 
Notably, the formulation in \cref{eq:pm-risk} is the same as the objective of the popular proportion matching strategy~\citep{tsai2020learning,ma2025forming,Liu2019Learning} which uses the observed proportions as targets for the predicted mean proportions. Our moment matching framework provides a new perspective for understanding the proportion matching strategy based on the PMD.

\section{Derivation of the Second-Order Moment from Label Proportions}
\label{app:second-order-derivation}

Recall that the bag-level second-order moment matrix is
\begin{equation}
\M\order{2}
=
\frac{1}{m(m-1)}
\sum_{i \neq j}
\ve^{y_i}(\ve^{y_j})^\top.
\end{equation}
We show that each entry of $\M\order{2}$ can be computed directly from the label proportion vector $\propvec$.

For any $c, d \in \set{1, \dots, q}$, the $(c, d)$-th entry of $\M\order{2}$ is
\begin{align}
M\order{2}_{cd}
&=
\frac{1}{m(m-1)}
\sum_{i \neq j}
\bracks*{\ve^{y_i}}_c
\bracks*{\ve^{y_j}}_d
\\
&=
\frac{1}{m(m-1)}
\sum_{i \neq j}
\indicator{y_i = c} \indicator{y_j = d}.
\label{eq:m2-entry-start}
\end{align}
Let
\begin{equation}
n_c
\defeq
\sum_{i=1}^m \indicator{y_i = c}
=
m\prop_c
\end{equation}
denote the number of instances in the bag belonging to class $c$.
Then we consider two cases.

\paragraph{Case 1: $c \neq d$.}

In this case, the constraint $i \neq j$ is automatically satisfied whenever
$\indicator{y_i = c} \indicator{y_j = d} = 1$, because one instance cannot simultaneously belong to two different classes.
Hence,
\begin{align}
\sum_{i \neq j} \indicator{y_i = c} \indicator{y_j = d}
&=
\sum_{i=1}^m \sum_{j=1}^m \indicator{y_i = c} \indicator{y_j = d}
\\
&=
\sum_{i=1}^m \sum_{j=1}^m \indicator{y_i = c}\indicator{y_j = d}
\\
&=
\parens*{\sum_{i=1}^m \indicator{y_i = c}}
\parens*{\sum_{j=1}^m \indicator{y_j = d}} 
\\
&=
n_c n_d.
\label{eq:m2-case-diff}
\end{align}
Substituting $n_c = m\prop_c$ and $n_d = m\prop_d$ into \cref{eq:m2-entry-start}, we obtain
\begin{equation}
M\order{2}_{cd}
=
\frac{n_c n_d}{m(m-1)}
=
\frac{m\prop_c\, m\prop_d}{m(m-1)},
\qquad c\neq d.
\end{equation}

\paragraph{Case 2: $c=d$.}
Now the constraint $i\neq j$ matters.
We have
\begin{align}
\sum_{i \neq j} \indicator{y_i = c} \indicator{y_j = c}
&=
\parens*{\sum_{i=1}^m \indicator{y_i = c}}
\parens*{\sum_{j=1}^m \indicator{y_j = c}}
-
\sum_{i=1}^m \indicator{y_i = c}
\\
&=
n_c^2 - n_c
\\
&=
n_c(n_c - 1).
\label{eq:m2-case-same}
\end{align}
Therefore,
\begin{equation}
M\order{2}_{cc}
=
\frac{n_c(n_c - 1)}{m(m-1)}
=
\frac{m\prop_c(m\prop_c - 1)}{m(m-1)}.
\end{equation}

Combining the two cases, we obtain
\begin{equation}
M\order{2}_{cd}
=
\frac{n_c\parens*{n_d - \indicator{c = d}}}{m(m-1)}
=
\frac{m\prop_c\parens*{m\prop_d-\indicator{c=d}}}{m(m-1)},
\qquad c, d \in \set{1, \dots, q}.
\end{equation}
This proves that the second-order moment matrix is completely determined by the observed label proportion vector $\propvec$.
\qed

\section{Proof of \cref{prop:mo-nesting}}
\begin{theorem}
\label{thm:mo-bayes-moments}
Assume that the hypothesis class $\sF$ is expressive enough to contain the Bayes-optimal classifier, i.e.,
\(
\classifier^* \in \sF.
\)
For each $r \in \set{1, \dots, s}$, assume that for any
$\mathcal T_r$-valued random variable $Z$,
its expectation $\E[Z]$ is the unique solution of
\(
\min_{v \in \mathcal T_r}
\E\bracks*{\Loss\order{r}(Z,v)}.
\)
Then, for any minimizer $\classifier \in \sF_\MO\order{s}$ of \cref{eq:multi-order-risk}, for $p(\bag)$-almost every bag $\bag$ and every $r \in \set{1,\dots,s}$ such that $\lambda_r>0$, we have
\begin{equation}
\label{eq:mo-theorem-main}
\Mhat\order{r}_\classifier(\bag)
=
\Mhat\order{r}_{\classifier^*}(\bag)
\in (\R^q)^{\otimes r}.
\end{equation}
That is, every minimizer of \cref{eq:multi-order-risk} exactly matches the moment tensors induced by the Bayes-optimal classifier for all positively weighted orders, for $p(\bag)$-almost every bag $\bag$.
\end{theorem}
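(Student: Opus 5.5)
The plan is to reduce the population risk to a pointwise, per-bag statement via conditioning on $\bag$, and then use the assumed uniqueness of the expectation as minimizer for each order separately. The first step is to show that, under the data-generation assumption \cref{eq:seq-post}, the conditional expectation of the target tensor equals the Bayes-induced tensor:
\begin{equation}
\E\bracks*{\Mhat\order{r}_\propvec \given \bag}
=
\Mhat\order{r}_{\classifier^*}(\bag).
\end{equation}
This follows from the definition \cref{eq:rth-factorial-tensor-def}. By linearity,
\begin{equation}
\E\bracks*{\Mhat\order{r}_\propvec \given \bag}
=
\frac{1}{(m)_r}\sum_{(j_1,\dots,j_r)\in[m]^r_{\neq}}
\E\bracks*{\ve^{y_{j_1}}\otimes\cdots\otimes\ve^{y_{j_r}}\given\bag}.
\end{equation}
Because the indices in each tuple are distinct and the labels are conditionally independent given $\bag$ with $p(y_j\mid\bag)=p(y_j\mid x_j)$, each expectation factorizes into $\E[\ve^{y_{j_1}}\mid x_{j_1}]\otimes\cdots\otimes\E[\ve^{y_{j_r}}\mid x_{j_r}]$. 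Here $\E[\ve^{y_j}\mid x_j]=\classifier^*(x_j)$ by \cref{eq:bayes-optimal}, which yields exactly \cref{eq:model-induced-rth-tensor} evaluated at $\classifier^*$. This is where the factorial, rather than raw, moments matter: the diagonal terms $j_a=j_b$ would not factorize.

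Next, I write the risk via the tower property as
\begin{equation}
R_\MO\order{s}(\classifier)
=
\E_{p(\bag)}\bracks*{\sum_{r=1}^s\lambda_r\,\E\bracks*{\Loss\order{r}(\Mhat\order{r}_\propvec,\Mhat\order{r}_\classifier(\bag))\given\bag}}.
\end{equation}
For fixed $\bag$, $\Mhat\order{r}_\classifier(\bag)$ is a deterministic element $v\in\mathcal T_r$. To see this, note that the entries are nonnegative, and summing all entries gives $\frac{1}{(m)_r}\sum_{[m]^r_{\neq}}\prod_a \mathbf 1^\top p_{j_a}=1$. Applying the uniqueness assumption to $Z=\Mhat\order{r}_\propvec$ under the conditional law $p(\propvec\mid\bag)$, which is also $\mathcal T_r$-valued, the inner conditional expectation satisfies two things. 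It is at least the value attained at $v=\Mhat\order{r}_{\classifier^*}(\bag)$, by step one. It also equals that value only if $v=\Mhat\order{r}_{\classifier^*}(\bag)$. Consequently, $\classifier^*$ minimizes every summand pointwise in $\bag$, and since $\classifier^*\in\sF$, it attains $\inf_{\sF}R_\MO\order{s}$.

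Finally, let $\classifier\in\sF_\MO\order{s}$. Then $R_\MO\order{s}(\classifier)=R_\MO\order{s}(\classifier^*)$. The integrand difference $\sum_r\lambda_r(\text{cond.\ risk of }\classifier-\text{cond.\ risk of }\classifier^*)$ is nonnegative and has zero expectation, so it vanishes $p(\bag)$-a.s. Because each term is itself nonnegative, every term with $\lambda_r>0$ vanishes a.s. By uniqueness, this gives \cref{eq:mo-theorem-main} on a full-measure set; taking a finite intersection over $r$ gives a single such set.

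The main obstacle I expect is the measure-theoretic bookkeeping rather than the algebra. Specifically, applying the uniqueness hypothesis to the conditional distribution (a regular conditional distribution of $\propvec$ given $\bag$) requires care, and so does handling the possibly infinite values of $\Loss\order{r}$. I would assume $R_\MO\order{s}(\classifier^*)<\infty$ so that the subtraction step is legitimate; if it were infinite, every classifier would be a minimizer, and the statement could fail. With the theorem in hand, \cref{prop:mo-nesting} follows immediately. For $s_1<s_2$, every $\classifier\in\sF_\MO\order{s_2}$ matches the Bayes tensors for all $r\le s_1$ almost surely, so its order-$s_1$ risk equals that of $\classifier^*$, which is minimal. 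Hence $\classifier\in\sF_\MO\order{s_1}$.
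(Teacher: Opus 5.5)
Your proposal is correct and follows essentially the same route as the paper's proof. Both arguments show $\E[\Mhat\order{r}_\propvec \mid \bag] = \Mhat\order{r}_{\classifier^*}(\bag)$ using distinct indices and conditional independence, rewrite the risk by the tower property, apply the uniqueness hypothesis to the conditional law given $\bag$, and conclude termwise from a nonnegative integrand with zero expectation. Your extra checks are legitimate refinements that the paper's proof uses only implicitly: that $\Mhat\order{r}_\classifier(\bag) \in \mathcal T_r$, and that $R_\MO\order{s}(\classifier^*) < \infty$ is needed for the subtraction $R_\MO\order{s}(\classifier) - R_\MO\order{s}(\classifier^*)$.
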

\cref{thm:mo-bayes-moments} shows that incorporating multi-order moments as training supervision enforces agreement with the Bayes-optimal classifier-induced moments at correspondingly higher orders.
\begin{proof}[Proof of Theorem~\ref{thm:mo-bayes-moments}]
For each order $r\in\set{1,\dots,s}$, define the random tensor
\begin{equation}
Z_r \defeq \Mhat\order{r}_{\propvec}.
\end{equation}
We first show that, conditioned on a bag $\bag=(x_1,\dots,x_m)$, its conditional expectation is exactly the classifier-induced moment tensor of the Bayes-optimal classifier:
\begin{equation}
\label{eq:conditional-moment-equals-bayes}
\E\bracks*{Z_r \mid \bag}
=
\Mhat\order{r}_{\classifier^*}(\bag).
\end{equation}
Indeed, by \cref{eq:rth-factorial-tensor-def},
\begin{equation}
Z_r
=
\frac{1}{(m)_r}
\sum_{(j_1,\dots,j_r)\in[m]_{\neq}^r}
\ve^{y_{j_1}}\otimes\cdots\otimes \ve^{y_{j_r}}.
\end{equation}
Taking conditional expectation given $\bag$ and using both the conditional independence of $y_1,\dots,y_m$ given $\bag$ and the conditional marginal identities $p(y_j\mid\bag)=p(y_j\mid x_j)$ assumed in \cref{eq:seq-post}, we obtain
\begin{equation}
\E\bracks*{
\ve^{y_{j_1}}\otimes\cdots\otimes \ve^{y_{j_r}}
\mid \bag
}
=
\bigotimes_{\ell=1}^r \E[\ve^{y_{j_\ell}}\mid x_{j_\ell}].
\end{equation}
By definition of the Bayes-optimal classifier,
\begin{equation}
\E[\ve^{y_j}\mid x_j]
=
p(\cdot\mid x_j)
=
\classifier^*(x_j).
\end{equation}
Hence
\begin{equation}
\E\bracks*{
\ve^{y_{j_1}}\otimes\cdots\otimes \ve^{y_{j_r}}
\mid \bag
}
=
\classifier^*(x_{j_1})\otimes\cdots\otimes \classifier^*(x_{j_r}),
\end{equation}
and therefore
\begin{equation}
\E[Z_r\mid \bag]
=
\frac{1}{(m)_r}
\sum_{(j_1,\dots,j_r)\in[m]_{\neq}^r}
\classifier^*(x_{j_1})\otimes\cdots\otimes \classifier^*(x_{j_r})
=
\Mhat\order{r}_{\classifier^*}(\bag),
\end{equation}
which proves \cref{eq:conditional-moment-equals-bayes}.

Now let $\classifier\in\sF_\MO\order{s}$ be any minimizer of \cref{eq:multi-order-risk}. For each $r$, define
\begin{equation}
V_r(\bag)\defeq \Mhat\order{r}_\classifier(\bag).
\end{equation}
Then the multi-order risk can be written as
\begin{equation}
R_\MO\order{s}(\classifier)
=
\E_{\bag}\bracks*{
\sum_{r=1}^s
\lambda_r\,
\E\bracks*{
\Loss\order{r}\parens*{Z_r,V_r(\bag)}
\mid \bag
}
}.
\end{equation}

Fix $r$, and define
\begin{equation}
g_r(v,\bag)
\defeq
\E\bracks*{\Loss\order{r}(Z_r,v)\mid \bag}.
\end{equation}
Applying the loss assumption of the theorem to the conditional distribution of $Z_r$ given $\bag$, the unique minimizer of $v\mapsto g_r(v,\bag)$ is the conditional expectation $\E[Z_r\mid\bag]$. By \cref{eq:conditional-moment-equals-bayes}, this unique minimizer is exactly $\Mhat\order{r}_{\classifier^*}(\bag)$. Therefore, for every $v\in\mathcal T_r$,
\begin{equation}
g_r(v,\bag)
\ge
g_r\parens*{\Mhat\order{r}_{\classifier^*}(\bag),\bag},
\end{equation}
with equality if and only if
\begin{equation}
v=\Mhat\order{r}_{\classifier^*}(\bag).
\end{equation}

Since $\classifier$ is a minimizer of \cref{eq:multi-order-risk} and $\classifier^*$ is feasible, we also have
\begin{equation}
R_\MO\order{s}(\classifier)
\le
R_\MO\order{s}(\classifier^*).
\end{equation}
On the other hand, by the pointwise lower bound above,
\begin{align}
R_\MO\order{s}(\classifier)
-
R_\MO\order{s}(\classifier^*)
&=
\E_{\bag}\bracks*{
\sum_{r=1}^s
\lambda_r\,
\bracks*{
 g_r\parens*{V_r(\bag),\bag}
 -
 g_r\parens*{\Mhat\order{r}_{\classifier^*}(\bag),\bag}
 }
}
\\
&\ge 0.
\end{align}
Therefore,
\begin{equation}
\E_{\bag}\bracks*{
\sum_{r=1}^s
\lambda_r\,
\bracks*{
 g_r\parens*{V_r(\bag),\bag}
 -
 g_r\parens*{\Mhat\order{r}_{\classifier^*}(\bag),\bag}
 }
}
=0.
\end{equation}
The integrand is nonnegative almost surely, so it must vanish almost surely. Hence, for every $r$ with $\lambda_r>0$,
\begin{equation}
g_r\parens*{V_r(\bag),\bag}
=
g_r\parens*{\Mhat\order{r}_{\classifier^*}(\bag),\bag}
\end{equation}
for $p(\bag)$-almost every bag $\bag$. By the uniqueness of the minimizer of $g_r(\cdot,\bag)$, this implies
\begin{equation}
V_r(\bag)=\Mhat\order{r}_{\classifier^*}(\bag)
\end{equation}
for $p(\bag)$-almost every bag $\bag$ and every $r$ such that $\lambda_r>0$. Equivalently,
\begin{equation}
\Mhat\order{r}_\classifier(\bag)
=
\Mhat\order{r}_{\classifier^*}(\bag),
\qquad
\text{for $p(\bag)$-almost every bag $\bag$ and every $r$ such that $\lambda_r>0$.}
\end{equation}
This proves that any minimizer $\classifier$ induces exactly the same moment tensors as $\classifier^*$ for all orders with positive weight.
\end{proof}
\begin{proof}[Proof of \cref{prop:mo-nesting}]
By \cref{thm:mo-bayes-moments}, any \(\classifier \in \sF_\MO\order{s_2}\)
matches the Bayes-optimal classifier-induced moment tensors for every order
\(r \in \set{1,\dots,s_2}\), since \(\lambda_r>0\) for all such \(r\). In
particular, it matches these moment tensors for every order
\(r \in \set{1,\dots,s_1}\). Therefore, \(\classifier\) also satisfies the
moment-matching conditions required for membership in \(\sF_\MO\order{s_1}\).
Hence,
\[
\sF_\MO\order{s_2}
\subseteq
\sF_\MO\order{s_1}.
\]
\end{proof}
\section{Generalization Analysis}

\begin{definition}[Loss-composed hypothesis class and its Rademacher complexity]
\label{def:mo_rade}
For each order $r \in \set{1, \dots, s}$, define the bag-level hypothesis class
\begin{equation}
\sG\order{r}
\defeq
\set{
(\bag, \propvec) \mapsto \Loss\order{r}
\parens*{
\Mhat\order{r}_\propvec,
\Mhat\order{r}_\classifier(\bag)
}
:\ \classifier \in \sF
}.
\end{equation}
Its expected Rademacher complexity is defined as
\begin{equation}
\Rademacher\order{r}_n(\sG\order{r})
\defeq
\E_\sD
\E_\vsigma
\bracks*{
\sup_{\classifier \in \sF}
\frac1n
\sum_{i=1}^n
\sigma_i\,
\Loss\order{r}
\parens*{
\Mhat\order{r}_{\propvec_i},
\Mhat\order{r}_\classifier(\bag_i)
}
},
\end{equation}
where $\sD = \set{(\bag_i, \propvec_i)}_{i=1}^n$ consists of $n$ i.i.d.~samples from $p(\bag, \propvec)$, and $\sigma_1, \dots, \sigma_n$ are i.i.d.\ Rademacher variables.
\end{definition}

\begin{theorem}[Generic generalization bound via loss-composed classes]
\label{thm:mo_gen_lossclass}
Assume that for each $r \in \set{1, \dots, s}$,
\begin{equation}
0 \le
\Loss\order{r}
\parens*{
\Mhat\order{r}_\propvec,
\Mhat\order{r}_\classifier(\bag)
}
\le B_r
\end{equation}
for all $(\bag, \propvec)$ and all $\classifier \in \sF$.
Then, for any $\delta \in (0, 1)$, with probability at least $1 - \delta$,
\begin{equation}
\label{eq:generic_lossclass_bound}
\sup_{\classifier \in \sF}
\abs{
R_\MO\order{s}(\classifier)
-
\widehat{R}_\MO\order{s}(\classifier)
}
\le
2 \sum_{r=1}^s \lambda_r\,\Rademacher\order{r}_n(\sG\order{r})
+
\parens*{\sum_{r=1}^s \lambda_r B_r}
\sqrt{\frac{\log(2/\delta)}{2n}}.
\end{equation}
\end{theorem}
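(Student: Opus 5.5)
The plan is the standard symmetrization--McDiarmid argument, applied to the single combined loss class rather than to each order separately. For each $\classifier\in\sF$ define the bag-level function
\[
h_\classifier(\bag,\propvec)\defeq\sum_{r=1}^s\lambda_r\,\Loss\order{r}\parens*{\Mhat\order{r}_\propvec,\Mhat\order{r}_\classifier(\bag)},
\]
so that $R_\MO\order{s}(\classifier)=\E[h_\classifier]$ and $\widehat R_\MO\order{s}(\classifier)$ is its empirical mean over the $n$ i.i.d.\ pairs $(\bag_i,\propvec_i)$. By the boundedness assumption, $0\le h_\classifier\le B\defeq\sum_r\lambda_r B_r$. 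Independence is needed only across bags; the within-bag dependence is irrelevant because $h_\classifier$ is a function of the whole pair $(\bag,\propvec)$.

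\textbf{Concentration.} Consider the one-sided deviation $\Phi(\sD)\defeq\sup_{\classifier\in\sF}\parens*{R_\MO\order{s}(\classifier)-\widehat R_\MO\order{s}(\classifier)}$. Replacing one sample changes $\Phi$ by at most $B/n$. McDiarmid's inequality then gives, with probability at least $1-\delta/2$,
\[
\Phi\le\E\Phi+B\sqrt{\log(2/\delta)/(2n)}.
\]
The same bound holds for the reversed deviation $\sup_\classifier(\widehat R-R)$. A union bound over the two sides yields the absolute-value statement with overall confidence $1-\delta$; this is where the $\log(2/\delta)$ comes from.

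\textbf{Symmetrization.} Introduce a ghost sample and Rademacher signs in the usual way to get
\[
\E\Phi\le 2\,\E_\sD\E_\vsigma\sup_{\classifier\in\sF}\frac1n\sum_{i=1}^n\sigma_i h_\classifier(\bag_i,\propvec_i).
\]
The same bound applies to the other side, since $-\sigma_i$ has the same law as $\sigma_i$. Because the supremum of a sum is at most the sum of the suprema, and each $\lambda_r>0$ factors out, this is at most $2\sum_r\lambda_r\,\Rademacher\order{r}_n(\sG\order{r})$, as in \cref{def:mo_rade}.

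The only point requiring care is splitting the supremum over the single shared $\classifier$ into separate suprema per order. This step is valid only as an inequality, which is exactly the direction we need. Everything else is textbook (e.g., \citet{mohri2018foundations}), and I expect no real obstacle.
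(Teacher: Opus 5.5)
Your proposal is correct and follows essentially the same route as the paper's proof. Both arguments aggregate the orders into the single loss $h_\classifier$ bounded by $\sum_r\lambda_r B_r$, and both control the two one-sided deviations by symmetrization (using that $-\vsigma$ and $\vsigma$ have the same law for the reverse side) and by McDiarmid at level $\delta/2$ each with a union bound. Both then split the complexity of the aggregated class into $\sum_r\lambda_r\Rademacher\order{r}_n(\sG\order{r})$ by subadditivity of the supremum.
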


\begin{proof}
Define the aggregated loss class
\begin{equation}
\sH
\defeq
\set{
(\bag, \propvec)
\mapsto
\sum_{r=1}^s \lambda_r
\Loss\order{r}
\parens*{
\Mhat\order{r}_\propvec,
\Mhat\order{r}_\classifier(\bag)
}
:\ \classifier \in \sF
}.
\end{equation}
Then
\begin{equation}
R_\MO\order{s}(\classifier)
=
\E[h_\classifier(\bag, \propvec)],
\qquad
\widehat{R}_\MO\order{s}(\classifier)
=
\frac1n \sum_{i=1}^n h_\classifier(\bag_i,\propvec_i),
\end{equation}
where $h_\classifier \in \sH$.

Define the two one-sided deviations
\begin{align*}
U(\sD)
&\defeq
\sup_{\classifier \in \sF}
\parens*{
R_\MO\order{s}(\classifier)
-
\widehat{R}_\MO\order{s}(\classifier)
},
\\
V(\sD)
&\defeq
\sup_{\classifier \in \sF}
\parens*{
\widehat{R}_\MO\order{s}(\classifier)
-
R_\MO\order{s}(\classifier)
}.
\end{align*}
Applying the standard one-sided symmetrization inequality~\citep{mohri2018foundations} separately in the two directions gives
\begin{equation}
\E_\sD U(\sD) \le 2\,\Rademacher_n(\sH),
\qquad
\E_\sD V(\sD) \le 2\,\Rademacher_n(\sH),
\end{equation}
where
\begin{equation}
\Rademacher_n(\sH)
=
\E_\sD \E_\vsigma
\bracks*{
\sup_{\classifier \in \sF}
\frac1n \sum_{i=1}^n \sigma_i\, h_\classifier(\bag_i,\propvec_i)
}.
\end{equation}
The reverse direction uses the symmetry of the Rademacher signs: replacing $\vsigma$ by $-\vsigma$ leaves their distribution unchanged. Thus both inequalities use the same complexity defined above, without an absolute value inside the supremum.
Using the definition of $h_\classifier$ and the subadditivity of the supremum,
\begin{align*}
\Rademacher_n(\sH)
&=
\E_\sD \E_\vsigma
\bracks*{
\sup_{\classifier \in \sF}
\frac1n \sum_{i=1}^n \sigma_i
\sum_{r=1}^s \lambda_r
\Loss\order{r}
\parens*{
\Mhat\order{r}_{\propvec_i},
\Mhat\order{r}_\classifier(\bag_i)
}
}
\\
&\le
\sum_{r=1}^s \lambda_r
\E_\sD \E_\vsigma
\bracks*{
\sup_{\classifier \in \sF}
\frac1n \sum_{i=1}^n \sigma_i
\Loss\order{r}
\parens*{
\Mhat\order{r}_{\propvec_i},
\Mhat\order{r}_\classifier(\bag_i)
}
}
\\
&=
\sum_{r=1}^s \lambda_r\, \Rademacher\order{r}_n(\sG\order{r}).
\end{align*}
Hence,
\begin{equation}
\max\set{\E_\sD U(\sD),\E_\sD V(\sD)}
\le
2 \sum_{r=1}^s \lambda_r\, \Rademacher\order{r}_n(\sG\order{r}).
\end{equation}

Now let
\begin{equation}
C \defeq \sum_{r=1}^s \lambda_r B_r.
\end{equation}
For every $\classifier \in \sF$, the aggregated loss
\begin{equation}
h_\classifier(\bag,\propvec)
=
\sum_{r=1}^s \lambda_r
\Loss\order{r}
\parens*{
\Mhat\order{r}_\propvec,
\Mhat\order{r}_\classifier(\bag)
}
\end{equation}
satisfies $0 \le h_\classifier \le C$.
Therefore, if $\sD$ and $\sD'$ differ in one bag-proportion pair $(\bag_i,\propvec_i)$, then
\begin{equation}
\abs*{U(\sD)-U(\sD')} \le \frac{C}{n},
\qquad
\abs*{V(\sD)-V(\sD')} \le \frac{C}{n}.
\end{equation}
The independent sample units are the $n$ entire bag-proportion pairs. Applying McDiarmid's inequality~\citep{mohri2018foundations} separately to $U(\sD)$ and $V(\sD)$, with failure probability $\delta/2$ for each, gives
\[
\Pr\parens*{
W(\sD) > \E_\sD W(\sD)
+ C\sqrt{\frac{\log(2/\delta)}{2n}}
}
\le \frac{\delta}{2},
\qquad W\in\set{U,V}.
\]
By a union bound and the two symmetrization inequalities, with probability at least $1-\delta$, both directions simultaneously satisfy
\begin{equation}
\max\set{U(\sD),V(\sD)}
\le
2\,\Rademacher_n(\sH) +
C\sqrt{\frac{\log(2/\delta)}{2n}}.
\end{equation}
Finally, using
\[
\sup_{\classifier \in \sF}
\abs*{
R_\MO\order{s}(\classifier)
-
\widehat{R}_\MO\order{s}(\classifier)
}
= \max\set{U(\sD),V(\sD)},
\]
together with the subadditivity bound for $\Rademacher_n(\sH)$ and the definition of $C$, proves~\cref{eq:generic_lossclass_bound}.
\end{proof}

\begin{definition}[Coordinate-wise Rademacher complexity of $\sF$]
\label{def:coord_rade}
For each position $j \in [m]$ and class $c \in [q]$, define the scalar-valued class
\begin{equation}
\sF_{j, c}
\defeq
\set{
\bag = (x_1, \dots, x_m) \mapsto [\classifier(x_j)]_c
:\ \classifier \in \sF
}.
\end{equation}
Its expected Rademacher complexity is
\begin{equation}
\Rademacher_n(\sF_{j, c})
\defeq
\E_{\set{\bag_i}_{i=1}^n}
\E_{\set{\sigma_i}_{i=1}^n}
\bracks*{
\sup_{\classifier \in \sF}
\frac1n \sum_{i=1}^n \sigma_i\,[\classifier(x_{ij})]_c
}.
\end{equation}
\end{definition}

\begin{lemma}[Lipschitz property of the moment map]
\label{lem:moment_lipschitz}
For any $r \in \set{1, \dots, m}$, any bag $\bag = (x_1, \dots, x_m)$, and any
$\classifier, \classifier' \in \sF$, we have
\begin{equation}
\label{eq:moment_lipschitz}
\norm{
\Mhat\order{r}_\classifier(\bag)
-
\Mhat\order{r}_{\classifier'}(\bag)
}_F
\le
\frac{r}{m}
\sum_{j=1}^m
\norm*{\classifier(x_j) - \classifier'(x_j)}_2
\le
\frac{r}{m}
\sum_{j=1}^m \sum_{c=1}^q
\abs*{[\classifier(x_j)]_c - [\classifier'(x_j)]_c},
\end{equation}
where $\norm{\cdot}_F$ denotes the Frobenius norm of an order-$r$ tensor, i.e., for $T = (T_{i_1, \dots, i_r}) \in \R^{q \times \cdots \times q}$,
\begin{equation}
\norm{T}_F
\defeq
\parens*{
\sum_{i_1, \dots, i_r} T_{i_1, \dots, i_r}^2
}^{1/2}.
\end{equation}
\end{lemma}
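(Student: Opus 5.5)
The plan is to write the difference of the two moment tensors as a telescoping sum over the $r$ tensor slots. Then the triangle inequality together with multiplicativity of the Frobenius norm on rank-one tensors reduces everything to per-instance differences. Write $p_j=\classifier(x_j)$ and $p'_j=\classifier'(x_j)$. For each ordered tuple $(j_1,\dots,j_r)\in[m]^r_{\neq}$ we use
\begin{equation*}
p_{j_1}\otimes\cdots\otimes p_{j_r}-p'_{j_1}\otimes\cdots\otimes p'_{j_r}
=\sum_{\ell=1}^r p'_{j_1}\otimes\cdots\otimes p'_{j_{\ell-1}}\otimes(p_{j_\ell}-p'_{j_\ell})\otimes p_{j_{\ell+1}}\otimes\cdots\otimes p_{j_r}.
\end{equation*}

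Next we bound each summand. For vectors $u_1,\dots,u_r$ one has $\norm{u_1\otimes\cdots\otimes u_r}_F=\prod_\ell\norm{u_\ell}_2$. Since every $p_j,p'_j$ lies in $\simplex{q}$, we have $\norm{p_j}_2\le\norm{p_j}_1=1$. Hence the $\ell$-th summand has Frobenius norm at most $\norm{p_{j_\ell}-p'_{j_\ell}}_2$. Summing over $\ell$ and over all tuples, and dividing by $(m)_r$, gives
\begin{equation*}
\norm{\Mhat\order{r}_\classifier(\bag)-\Mhat\order{r}_{\classifier'}(\bag)}_F\le\frac{1}{(m)_r}\sum_{(j_1,\dots,j_r)\in[m]^r_{\neq}}\sum_{\ell=1}^r\norm{p_{j_\ell}-p'_{j_\ell}}_2 .
\end{equation*}

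The one step needing care is the counting argument. Fix a position $\ell$ and an index $j\in[m]$. The number of tuples in $[m]^r_{\neq}$ with $j_\ell=j$ is $(m-1)_{r-1}=(m)_r/m$. Therefore the double sum equals $\frac{(m)_r}{m}\,r\sum_{j=1}^m\norm{p_j-p'_j}_2$, and we obtain the first inequality with constant $r/m$.

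The second inequality follows from $\norm{v}_2\le\norm{v}_1$ applied coordinatewise to $p_j-p'_j$. I expect no real obstacle. The only subtle points are verifying the count $(m-1)_{r-1}$ and checking the bound $\norm{p}_2\le 1$, which is where the simplex constraint on the outputs of $\sF$ is used.
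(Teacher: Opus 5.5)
Your proposal is correct and matches the paper's proof essentially step for step. It uses the same telescoping expansion, with the primed vectors before the differenced slot and the unprimed ones after, the same bound $\|p\|_2\le\|p\|_1=1$ on the simplex, the same count $(m-1)_{r-1}=(m)_r/m$ of tuples with $j_\ell=j$ giving the constant $r/m$, and $\|\cdot\|_2\le\|\cdot\|_1$ for the second inequality.
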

\begin{proof}
Let $u_j \defeq \classifier(x_j) \in \simplex{q}$ and $v_j \defeq \classifier'(x_j) \in \simplex{q}$.
For any ordered $r$-tuple $(j_1, \dots, j_r) \in [m]_{\neq}^r$, write
\begin{equation}
U \defeq u_{j_1} \otimes \cdots \otimes u_{j_r},
\qquad
V \defeq v_{j_1} \otimes \cdots \otimes v_{j_r}.
\end{equation}
A telescoping expansion gives
\begin{equation}
U-V
=
\sum_{t=1}^r
v_{j_1} \otimes \cdots \otimes v_{j_{t-1}}
\otimes (u_{j_t} - v_{j_t})
\otimes u_{j_{t+1}} \otimes \cdots \otimes u_{j_r}.
\end{equation}
Using
\begin{equation}
\norm{a_1 \otimes \cdots \otimes a_r}_F
=
\prod_{\ell=1}^r \norm{a_\ell}_2
\end{equation}
and the fact that $\norm{u_j}_2 \le \norm{u_j}_1=1$ and $\norm{v_j}_2 \le \norm{v_j}_1=1$ for all $j$, we obtain
\begin{equation}
\norm{U - V}_F
\le
\sum_{t=1}^r \norm{u_{j_t} - v_{j_t}}_2.
\end{equation}
Hence
\begin{align*}
\norm{
\Mhat\order{r}_\classifier(\bag)
-
\Mhat\order{r}_{\classifier'}(\bag)
}_F
&\le
\frac{1}{(m)_r}
\sum_{(j_1, \dots, j_r) \in [m]_{\neq}^r}
\sum_{t=1}^r \norm{u_{j_t} - v_{j_t}}_2
\\
&=
\frac{r(m-1)_{r-1}}{(m)_r}
\sum_{j=1}^m \norm{u_j - v_j}_2
\\
&=
\frac{r}{m}
\sum_{j=1}^m \norm{u_j - v_j}_2.
\end{align*}
The second inequality in~\cref{eq:moment_lipschitz} follows from
$\norm{a}_2 \le \norm{a}_1 = \sum_{c=1}^q \abs{a_c}$.
\end{proof}

\begin{lemma}[Coordinate-wise contraction]
\label{lem:coord_contraction_weak}
Let $T \subset (\R^d)^n$.
For each $i \in [n]$, let $\Phi_i: \R^d \to \R$ satisfy $\Phi_i(\vzero)=0$ and
\begin{equation}
|\Phi_i(u)-\Phi_i(v)| \le L \|u-v\|_1
\qquad
\text{for all } u,v\in\R^d.
\end{equation}
Then
\begin{equation}
\E_\vsigma\bracks*{
\sup_{t\in T}\frac1n\sum_{i=1}^n \sigma_i \Phi_i(t_i)
}
\le
\sqrt{2d}\,L
\sum_{\ell=1}^d
\E_{\vtau^{(\ell)}}\bracks*{
\sup_{t\in T}\frac1n\sum_{i=1}^n \tau_i^{(\ell)} t_{i,\ell}
}.
\end{equation}
\end{lemma}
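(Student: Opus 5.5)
The plan is to reduce the claim to the vector-contraction inequality of \citet{maurer2016vector}. That result says the following. Let $T \subset (\R^d)^n$ be countable, and let each $\Psi_i:\R^d\to\R$ be $K$-Lipschitz with respect to the Euclidean norm. Then
\begin{equation*}
\E_\vsigma\bracks*{\sup_{t\in T}\sum_{i=1}^n \sigma_i \Psi_i(t_i)}
\le
\sqrt{2}\,K\,
\E_{\vtau}\bracks*{\sup_{t\in T}\sum_{i=1}^n\sum_{\ell=1}^d \tau_{i,\ell}\, t_{i,\ell}},
\end{equation*}
where $\set{\tau_{i,\ell}}$ is a doubly indexed family of i.i.d.\ Rademacher variables. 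We apply this with $\Psi_i=\Phi_i$. The assumption $\Phi_i(\vzero)=0$ is not needed for Maurer's inequality, but it does no harm; it is the usual normalization under which such contraction statements are quoted. The $1/n$ factor passes through both sides unchanged.

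First, convert the $\ell_1$-Lipschitz hypothesis into an $\ell_2$-Lipschitz one. By Cauchy--Schwarz, $\norm{u-v}_1\le\sqrt{d}\,\norm{u-v}_2$, so
\[
\abs{\Phi_i(u)-\Phi_i(v)}\le L\sqrt{d}\,\norm{u-v}_2 .
\]
Thus each $\Phi_i$ is $K$-Lipschitz in $\ell_2$ with $K=\sqrt{d}\,L$. Maurer's inequality then bounds the left-hand side of the lemma by
\[
\sqrt{2d}\,L\;\E_\vtau\bracks*{\sup_{t\in T}\frac1n\sum_{i=1}^n\sum_{\ell=1}^d\tau_{i,\ell}\,t_{i,\ell}}.
\]

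Next, split the coordinates. Exchange the two sums and use subadditivity of the supremum, $\sup_t\sum_\ell a_\ell(t)\le\sum_\ell\sup_t a_\ell(t)$. This bounds the expectation by
\[
\sum_{\ell=1}^d\E_\vtau\bracks*{\sup_{t\in T}\frac1n\sum_i\tau_{i,\ell}\,t_{i,\ell}} .
\]
The $\ell$-th summand depends only on the column $\vtau^{(\ell)}=(\tau_{1,\ell},\dots,\tau_{n,\ell})$. Its expectation over all of $\vtau$ therefore equals its expectation over $\vtau^{(\ell)}$ alone, and this gives exactly the right-hand side of the lemma.

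The main obstacle is technical rather than conceptual. Maurer's theorem is stated for countable $T$, whereas here $T$ is arbitrary, and the supremum may fail to be measurable or finite. I would handle this by interpreting the supremum as the supremum over finite (or countable) subsets $T'\subset T$, which is the standard convention. The inequality holds uniformly for every such $T'$, so it passes to the limit by monotone convergence.
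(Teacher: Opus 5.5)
Your proposal is correct and matches the paper's proof step for step. The paper also converts the $\ell_1$-Lipschitz constant into the Euclidean constant $L\sqrt{d}$ via $\|x\|_1\le\sqrt{d}\,\|x\|_2$, applies Maurer's vector-contraction inequality to obtain the factor $\sqrt{2d}\,L$ with doubly indexed Rademacher variables, and then splits over coordinates by subadditivity of the supremum. Your extra remark about reducing to countable subsets of $T$ is a technicality the paper omits; it is harmless, since the Rademacher expectation is a finite average over sign patterns.
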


\begin{proof}
Since $\|x\|_1 \le \sqrt d\,\|x\|_2$ for every $x\in\R^d$, the assumption implies
\begin{equation}
|\Phi_i(u)-\Phi_i(v)|
\le
L\sqrt d\,\|u-v\|_2
\qquad
\text{for all } u,v\in\R^d.
\end{equation}
Hence each $\Phi_i$ is $L\sqrt d$-Lipschitz with respect to the Euclidean norm.

Applying the vector contraction inequality~\citep{maurer2016vector} with respect to $\|\cdot\|_2$ yields
\begin{equation}
\E_\vsigma\bracks*{
\sup_{t\in T}\frac1n\sum_{i=1}^n \sigma_i \Phi_i(t_i)
}
\le
\sqrt{2d}\,L\,
\E_\vepsilon\bracks*{
\sup_{t\in T}\frac1n\sum_{i=1}^n\sum_{\ell=1}^d \epsilon_{i,\ell} t_{i,\ell}
},
\end{equation}
where $\epsilon_{i,\ell}$ are i.i.d.~Rademacher variables.

Finally, by subadditivity of the supremum,
\begin{align}
\E_\vepsilon\bracks*{
\sup_{t\in T}\frac1n\sum_{i=1}^n\sum_{\ell=1}^d \epsilon_{i,\ell} t_{i,\ell}
}
&\le
\sum_{\ell=1}^d
\E_\vepsilon\bracks*{
\sup_{t\in T}\frac1n\sum_{i=1}^n \epsilon_{i,\ell} t_{i,\ell}
}
\\
&=
\sum_{\ell=1}^d
\E_{\vtau^{(\ell)}}\bracks*{
\sup_{t\in T}\frac1n\sum_{i=1}^n \tau_i^{(\ell)} t_{i,\ell}
}.
\end{align}
This proves the claim.
\end{proof}
\begin{proposition}[Reduction to the base class $\sF$]
\label{prop:reduce_to_F}
Assume that for each $r \in \set{1, \dots, s}$, the loss
\begin{equation}
B \mapsto \Loss\order{r}(A, B)
\end{equation}
is $L_r$-Lipschitz with respect to the Frobenius norm, uniformly over all
admissible target tensors $A$. Then
\begin{equation}
\label{eq:lossclass_to_F}
\Rademacher\order{r}_n(\sG\order{r})
\le
\sqrt{2m}\,\frac{L_r\, r}{m}
\sum_{j=1}^m \sum_{c=1}^q
\Rademacher_n(\sF_{j, c}).
\end{equation}
\end{proposition}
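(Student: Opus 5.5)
The plan is to view each loss term as a Lipschitz function of the stacked prediction vector of a bag and then apply a vector-contraction argument. The dimension is $d = mq$, but we keep track of the Euclidean geometry so that only a factor $\sqrt{2m}$ appears rather than $\sqrt{2mq}$.

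Fix a sample $\sD=\set{(\bag_i,\propvec_i)}_{i=1}^n$ and an order $r$. For each $\classifier\in\sF$, let $t_i(\classifier)\in\R^{mq}$ be the concatenation of $\classifier(x_{i1}),\dots,\classifier(x_{im})$, so its $(j,c)$ coordinate is $[\classifier(x_{ij})]_c$. Let $T=\set{(t_1(\classifier),\dots,t_n(\classifier)) : \classifier\in\sF}$. Fix a reference point $t^0\in(\simplex{q})^m$, e.g.\ uniform predictions, and define
\[
\Phi_i(t)\defeq \Loss\order{r}\parens*{\Mhat\order{r}_{\propvec_i},\Mhat\order{r}(t)}-\Loss\order{r}\parens*{\Mhat\order{r}_{\propvec_i},\Mhat\order{r}(t^0)},
\]
where $\Mhat\order{r}(t)$ is the factorial moment tensor built from the $m$ blocks of $t$. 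The subtracted constant has zero mean against $\sigma_i$, so it does not change $\Rademacher\order{r}_n(\sG\order{r})$. If a normalization $\Phi_i(\vzero)=0$ is needed, we translate coordinates by $t^0$, which does not affect Lipschitz constants.

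The key Lipschitz estimate combines the $L_r$-Lipschitz property of $\Loss\order{r}(A,\cdot)$ with the \emph{first} inequality of \cref{lem:moment_lipschitz} and then Cauchy--Schwarz over $j$:
\[
\abs{\Phi_i(u)-\Phi_i(v)}\le L_r\frac{r}{m}\sum_{j=1}^m\norm{u_j-v_j}_2\le L_r\frac{r}{\sqrt m}\norm{u-v}_2 .
\]
Thus each $\Phi_i$ is $(L_r r/\sqrt m)$-Lipschitz in $\norm{\cdot}_2$ on $(\simplex{q})^m$. Using this rather than the $\ell_1$ form of \cref{lem:coord_contraction_weak} is what avoids a $\sqrt q$ factor.

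Next we apply the vector contraction inequality of \citet{maurer2016vector} directly, as in the proof of \cref{lem:coord_contraction_weak} but with the Euclidean constant above. This gives
\[
\E_\vsigma\sup_{\classifier}\frac1n\sum_i\sigma_i\Phi_i(t_i)\le \sqrt2\,\frac{L_r r}{\sqrt m}\,\E_\vepsilon\sup_{\classifier}\frac1n\sum_i\sum_{j,c}\epsilon_{i,j,c}[\classifier(x_{ij})]_c .
\]
Since $\sqrt2/\sqrt m=\sqrt{2m}/m$, the prefactor is the claimed $\sqrt{2m}\,L_r r/m$. Subadditivity of the supremum splits the right-hand side into $\sum_{j,c}$ of single-coordinate Rademacher averages. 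Taking $\E_\sD$ then yields $\sum_{j,c}\Rademacher_n(\sF_{j,c})$, which is \cref{eq:lossclass_to_F}.

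The main obstacle I expect is domain technicality. Maurer's inequality is stated for functions on all of $\R^d$, while $\Phi_i$ is defined only where the moment tensor is admissible, namely on $(\simplex{q})^m$. I would handle this by noting that the contraction proof only evaluates $\Phi_i$ on points of $T$, so Lipschitzness on a set containing $T$ suffices. Alternatively, one can extend $\Phi_i$ to $\R^{mq}$ with the same Euclidean constant via Kirszbraun/McShane extension, since scalar-valued Lipschitz extension preserves the constant.
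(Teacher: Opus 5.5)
Your proposal is correct and follows the paper's proof step for step. Both arguments obtain the Euclidean $(r/\sqrt{m})$-Lipschitz bound for the moment map from the telescoping/tuple-counting estimate plus Cauchy--Schwarz over the $m$ instances, compose it with the $L_r$-Lipschitz loss, and apply Maurer's vector contraction with constant $\sqrt{2}\,L_r r/\sqrt{m}=\sqrt{2m}\,L_r r/m$. The resulting sum is then split over $(j,c)$ by subadditivity of the supremum.

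Your centering and Lipschitz-extension remarks are harmless extra care that the paper omits. They are not strictly needed: Maurer's inequality imposes no condition $\Phi_i(0)=0$, and its proof only uses the Lipschitz bound between points of $T$.
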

\begin{proof}
Fix a sample $\set{(\bag_i,\propvec_i)}_{i=1}^n$. For a bag
$\bag=(x_1,\dots,x_m)$ and a classifier $\classifier$, write
\[
\vu_j = \classifier(x_j) \in \Delta^{q-1},
\qquad
\vu = (\vu_1,\dots,\vu_m) \in (\R^q)^m.
\]
We first show that the classifier-induced order-$r$ factorial moment tensor is
$r/\sqrt{m}$-Lipschitz as a function of $\vu$ with respect to the Euclidean norm.
For $\vu,\vv\in(\Delta^{q-1})^m$, define
\[
\Mhat\order{r}(\vu)
=
\frac{1}{(m)_r}
\sum_{(j_1,\dots,j_r)\in[m]_{\neq}^r}
\vu_{j_1}\otimes \cdots \otimes \vu_{j_r},
\]
where $[m]_{\neq}^r$ denotes the set of ordered $r$-tuples of distinct indices.
By the triangle inequality and a telescoping expansion of tensor products,
\begin{align}
\left\|
\Mhat\order{r}(\vu)-\Mhat\order{r}(\vv)
\right\|_F
&\le
\frac{1}{(m)_r}
\sum_{(j_1,\dots,j_r)\in[m]_{\neq}^r}
\left\|
\vu_{j_1}\otimes\cdots\otimes\vu_{j_r}
-
\vv_{j_1}\otimes\cdots\otimes\vv_{j_r}
\right\|_F                                                    \nonumber\\
&\le
\frac{1}{(m)_r}
\sum_{(j_1,\dots,j_r)\in[m]_{\neq}^r}
\sum_{\ell=1}^r
\left\|
\vu_{j_\ell}-\vv_{j_\ell}
\right\|_2 .
\label{eq:tensor_lip_step}
\end{align}
The second inequality uses the fact that
$\|\vu_j\|_2\le 1$ and $\|\vv_j\|_2\le 1$ for probability vectors.
For each fixed index $j\in[m]$ and each position $\ell\in\set{1,\dots,r}$,
the number of ordered tuples $(j_1,\dots,j_r)\in[m]_{\neq}^r$ satisfying
$j_\ell=j$ is $(m-1)_{r-1}$. Therefore,
\begin{align}
\frac{1}{(m)_r}
\sum_{(j_1,\dots,j_r)\in[m]_{\neq}^r}
\sum_{\ell=1}^r
\left\|
\vu_{j_\ell}-\vv_{j_\ell}
\right\|_2
&=
\frac{r(m-1)_{r-1}}{(m)_r}
\sum_{j=1}^m
\left\|
\vu_j-\vv_j
\right\|_2                                                  \nonumber\\
&=
\frac{r}{m}
\sum_{j=1}^m
\left\|
\vu_j-\vv_j
\right\|_2 .
\label{eq:l1_over_instances}
\end{align}
Applying Cauchy--Schwarz over the $m$ bag instances gives
\begin{equation}
\sum_{j=1}^m
\left\|
\vu_j-\vv_j
\right\|_2
\le
\sqrt{m}
\left(
\sum_{j=1}^m
\left\|
\vu_j-\vv_j
\right\|_2^2
\right)^{1/2}
=
\sqrt{m}\,
\|\vu-\vv\|_2 .
\end{equation}
Combining this with \cref{eq:tensor_lip_step,eq:l1_over_instances}, we obtain
\begin{equation}
\left\|
\Mhat\order{r}(\vu)-\Mhat\order{r}(\vv)
\right\|_F
\le
\frac{r}{\sqrt{m}}
\|\vu-\vv\|_2 .
\label{eq:moment_l2_lipschitz}
\end{equation}
Now define, for each training bag $i$,
\[
\Phi_i\order{r}(\vu)
=
\Loss\order{r}
\parens*{
\Mhat\order{r}_{\propvec_i},
\Mhat\order{r}(\vu)
}.
\]
By the assumed $L_r$-Lipschitzness of $\Loss\order{r}$ in its second argument
and by \cref{eq:moment_l2_lipschitz}, each $\Phi_i\order{r}$ is
$L_r\, r/\sqrt{m}$-Lipschitz with respect to the Euclidean norm on
$(\R^q)^m$.
Therefore, by Maurer's vector contraction inequality~\citep{maurer2016vector},
\begin{align}
\Rademacher\order{r}_n(\sG\order{r})
&\le
\sqrt{2}\,\frac{L_r\, r}{\sqrt{m}}\,
\E_{\sigma}
\bracks*{
\sup_{\classifier\in\sF}
\frac1n
\sum_{i=1}^n
\sum_{j=1}^m
\sum_{c=1}^q
\sigma_{i,j,c}
\classifier_c(x_{i,j})
}                                                        \nonumber\\
&\le
\sqrt{2}\,\frac{L_r\, r}{\sqrt{m}}
\sum_{j=1}^m
\sum_{c=1}^q
\Rademacher_n(\sF_{j,c})                                \nonumber\\
&=
\sqrt{2m}\,\frac{L_r\, r}{m}
\sum_{j=1}^m
\sum_{c=1}^q
\Rademacher_n(\sF_{j,c}).
\end{align}
This proves the desired bound.
\end{proof}
\begin{corollary}[Generalization bound in terms of $\sF$]
\label{cor:gen_in_terms_of_F}
Under the assumptions of \cref{thm:mo_gen_lossclass} and
\cref{prop:reduce_to_F}, for any $\delta \in (0, 1)$, with probability at least $1 - \delta$,
\begin{equation}
\label{eq:final_gen_bound}\begin{aligned}
&\sup_{\classifier \in \sF}
\abs*{
R_\MO\order{s}(\classifier) - \widehat{R}_\MO\order{s}(\classifier)
}
\\ &\le
\frac{2\sqrt{2m}}{m}
\parens*{\sum_{r=1}^s \lambda_r L_r\, r}
\parens*{\sum_{j=1}^m \sum_{c=1}^q \Rademacher_n(\sF_{j, c})}
\\ &\quad +
\parens*{\sum_{r=1}^s \lambda_r B_r}
\sqrt{\frac{\log(2/\delta)}{2n}}.
\end{aligned}\end{equation}
\end{corollary}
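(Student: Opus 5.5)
This corollary follows by chaining the two preceding results. First, \cref{thm:mo_gen_lossclass} gives, with probability at least $1-\delta$,
\[
\sup_{\classifier\in\sF}\abs*{R_\MO\order{s}(\classifier)-\widehat{R}_\MO\order{s}(\classifier)}
\le 2\sum_{r=1}^s\lambda_r\,\Rademacher\order{r}_n(\sG\order{r}) + \parens*{\sum_{r=1}^s\lambda_r B_r}\sqrt{\frac{\log(2/\delta)}{2n}}.
\]
Its hypotheses are the boundedness assumptions $0\le\Loss\order{r}\le B_r$, which the corollary assumes. The high-probability event and the concentration term are therefore inherited unchanged. What remains is to bound each $\Rademacher\order{r}_n(\sG\order{r})$ deterministically in terms of the base class.

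For that step I will invoke \cref{prop:reduce_to_F} for each $r\in\set{1,\dots,s}$. Under the uniform $L_r$-Lipschitz assumption in the Frobenius norm, it gives
\[
\Rademacher\order{r}_n(\sG\order{r})\le \sqrt{2m}\,\frac{L_r r}{m}\sum_{j=1}^m\sum_{c=1}^q\Rademacher_n(\sF_{j,c}).
\]
The double sum $S \defeq \sum_{j,c}\Rademacher_n(\sF_{j,c})$ does not depend on $r$. Multiplying each bound by $2\lambda_r>0$ and summing over $r$ therefore factors the result as
\[
\frac{2\sqrt{2m}}{m}\parens*{\sum_{r=1}^s\lambda_r L_r r}\, S.
\]
Substituting this into the first display gives \cref{eq:final_gen_bound} on the same event of probability at least $1-\delta$.

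There is no real obstacle here; the only points to check are bookkeeping ones. The bound in \cref{prop:reduce_to_F} holds pointwise in the sample before the outer expectation is taken, so it transfers to the expected complexities used in \cref{thm:mo_gen_lossclass}. The weights $\lambda_r$ are nonnegative, so the inequalities are preserved under the weighted sum.

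To connect with the form stated in \cref{thm:mo-generalization}, I would add one remark. Under exchangeability of the bag coordinates, $\Rademacher_n(\sF_{j,c})$ is the same for every position $j$; call it $\Rademacher_n(\sF_c)$. Then $\sum_{j}\Rademacher_n(\sF_{j,c}) = m\,\Rademacher_n(\sF_c)$, which cancels the $1/m$ and yields the prefactor $2\sqrt{2m}$.
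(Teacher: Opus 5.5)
Your proposal is correct and follows the paper's argument: the paper proves the corollary by substituting the bound of \cref{prop:reduce_to_F}, for each $r$, into \cref{thm:mo_gen_lossclass}, then factoring out the $r$-independent double sum exactly as you do. Your closing remark on exchangeability matches how the paper then passes to \cref{cor:gen_bound_exchangeable} and to the form stated in \cref{thm:mo-generalization}.
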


\begin{proof}
Substitute the bound in \cref{prop:reduce_to_F} into
\cref{thm:mo_gen_lossclass}.
\end{proof}

\begin{remark}[Exchangeable bag coordinates]
\label{rem:exchangeable_coordinates}
Under our bag-generation mechanism, the instances within each bag need not be independent. If the construction is permutation invariant, or if each bag is randomly permuted afterward, its coordinates are exchangeable. 
Thus, for every fixed $j\in[m]$, $x_j$ has the same marginal distribution, although the coordinates may remain dependent.
Hence, for each class $c\in[q]$, the classes $\sF_{j,c}$ have the same expected Rademacher complexity:
\begin{equation}
\Rademacher_n(\sF_{1,c})
=
\cdots
=
\Rademacher_n(\sF_{m,c}).
\end{equation}
Consequently, the complexity term in Corollary~\ref{cor:gen_in_terms_of_F} can be simplified.
\end{remark}

\begin{definition}[Instance-level Rademacher complexity]
\label{def:instance_rade}
For each class $c\in[q]$, define the scalar-valued instance-level class
\begin{equation}
\sF_c
\defeq
\set*{
x\mapsto [\classifier(x)]_c
\given \classifier\in\sF
}.
\end{equation}
Its expected Rademacher complexity is
\begin{equation}
\Rademacher_n(\sF_c)
\defeq
\E_{\set{x_i}_{i=1}^{n}}
\E_{\set{\sigma_i}_{i=1}^{n}}
\bracks*{
\sup_{\classifier\in\sF}
\frac1n \sum_{i=1}^{n}\sigma_i\,[\classifier(x_i)]_c
},
\end{equation}
where $x_1,\dots,x_n$ are i.i.d.\ draws from the common marginal distribution of any bag coordinate.
\end{definition}

\begin{corollary}[Simplified generalization bound under exchangeability]
\label{cor:gen_bound_exchangeable}
Under the assumptions of Corollary~\ref{cor:gen_in_terms_of_F} and the exchangeability condition in Remark~\ref{rem:exchangeable_coordinates}, for any $\delta\in(0,1)$, with probability at least $1-\delta$,
\begin{equation}
\label{eq:final_gen_bound_exchangeable}
\sup_{\classifier\in\sF}
\abs*{
R_\MO\order{s}(\classifier)-\widehat{R}_\MO\order{s}(\classifier)
}
\le
2\sqrt{2m}\,
\parens*{\sum_{r=1}^{s}\lambda_r L_r\, r}
\parens*{\sum_{c=1}^q\Rademacher_n(\sF_c)}
+
\parens*{\sum_{r=1}^{s}\lambda_r B_r}
\sqrt{\frac{\log(2/\delta)}{2n}}.
\end{equation}
\end{corollary}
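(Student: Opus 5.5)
The plan is to derive Corollary~\ref{cor:gen_bound_exchangeable} directly from Corollary~\ref{cor:gen_in_terms_of_F}, collapsing the double sum over positions and classes with the exchangeability condition in Remark~\ref{rem:exchangeable_coordinates}. The confidence term $\parens*{\sum_{r}\lambda_r B_r}\sqrt{\log(2/\delta)/(2n)}$ is identical in both statements. Only the complexity term changes, so it suffices to show that for each class $c$,
\[
\frac{1}{m}\sum_{j=1}^m \Rademacher_n(\sF_{j,c}) = \Rademacher_n(\sF_c).
\]
Substituting this into \cref{eq:final_gen_bound} gives $\frac{2\sqrt{2m}}{m}\cdot m\sum_c \Rademacher_n(\sF_c) = 2\sqrt{2m}\sum_c\Rademacher_n(\sF_c)$, which is exactly \cref{eq:final_gen_bound_exchangeable}.

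The key step is to identify $\Rademacher_n(\sF_{j,c})$ with $\Rademacher_n(\sF_c)$. In Definition~\ref{def:coord_rade}, the function $\bag\mapsto[\classifier(x_j)]_c$ depends on the bag only through its $j$-th coordinate. Hence the expectation over the i.i.d.\ bags $\bag_1,\dots,\bag_n$ reduces to an expectation over $x_{1j},\dots,x_{nj}$. Because the bags are drawn i.i.d.\ from $p(\bag,\propvec)$, these $n$ coordinates are i.i.d.\ draws from the marginal law of coordinate $j$. Any within-bag dependence is irrelevant here, since only one coordinate per bag enters. Exchangeability says this marginal law is the same for every $j$ and equals the common coordinate marginal used in Definition~\ref{def:instance_rade}. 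The integrand $\E_\vsigma\sup_{\classifier}\frac1n\sum_i\sigma_i[\classifier(x_i)]_c$ is then a fixed measurable functional of an i.i.d.\ $n$-sample from that law. Therefore $\Rademacher_n(\sF_{j,c})=\Rademacher_n(\sF_c)$ for every $j$, and averaging over $j$ gives the identity above.

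The only delicate point is making sure each $\Rademacher_n(\sF_{j,c})$ really is the instance-level complexity. This requires that the $n$ sampled coordinates are independent across bags, which holds because the independent units are the entire bag--proportion pairs. It also requires that the supremum over $\sF$ is measurable, which I will assume as is standard. With these points checked, the substitution into Corollary~\ref{cor:gen_in_terms_of_F} completes the proof.
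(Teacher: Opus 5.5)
Your proposal is correct and follows the paper's route. The paper likewise substitutes into Corollary~\ref{cor:gen_in_terms_of_F} the identity $\Rademacher_n(\sF_{j,c})=\Rademacher_n(\sF_c)$ for every $j$, which Remark~\ref{rem:exchangeable_coordinates} and Definition~\ref{def:instance_rade} justify through identical coordinate marginals; the double sum then collapses to $m\sum_c\Rademacher_n(\sF_c)$, and the factor $1/m$ cancels. Your observation that only one coordinate per bag enters, so within-bag dependence is irrelevant and the $n$ coordinates are i.i.d.\ across bags, spells out a step the paper leaves implicit.
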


\section{Computation Analysis}

\subsection{Coefficient-to-Tensor Correspondence}
\label{app:coeff-to-tensor}
In this appendix, we justify the correspondence between the compressed coefficients of the multivariate generating polynomial and the entries of the classifier-induced factorial moment tensor.

For a bag $\bag=(x_1,\dots,x_m)$, recall that the classifier-induced normalized $r$-th order factorial moment tensor is
\begin{equation}
\label{eq:appendix-model-induced-rth-tensor}
\Mhat\order{r}_\classifier(\bag)
=
\frac{1}{(m)_r}
\sum_{(j_1,\dots,j_r)\in [m]_{\neq}^r}
p_{j_1}\otimes \cdots \otimes p_{j_r},
\end{equation}
where $p_j=\classifier(x_j)\in\simplex{q}$. Its $(i_1,\dots,i_r)$-th entry is therefore
\begin{equation}
\label{eq:appendix-model-entry}
[\Mhat\order{r}_\classifier(\bag)]_{i_1,\dots,i_r}
=
\frac{1}{(m)_r}
\sum_{(j_1,\dots,j_r)\in [m]_{\neq}^r}
\prod_{\ell=1}^r p_{j_\ell,i_\ell}.
\end{equation}

Now define the multivariate generating polynomial
\begin{equation}
\label{eq:appendix-generating-polynomial}
G_m(u_1,\dots,u_q)
\defeq
\prod_{j=1}^m
\parens*{1+\sum_{c=1}^q p_{j,c}u_c}.
\end{equation}
For $k=(k_1,\dots,k_q)\in\N^q$ with $\abs*{k}=\sum_{c=1}^q k_c=r$, let
\begin{equation}
\label{eq:appendix-coefficient-def}
A_m\order{r}(k)
\defeq
[u^k]\,G_m(u_1,\dots,u_q),
\qquad
u^k\defeq \prod_{c=1}^q u_c^{k_c}.
\end{equation}

We will show that $A_m\order{r}(k)$ determines all tensor entries whose multiplicity vector is $k$.

\begin{proposition}[Coefficient-to-tensor correspondence]
\label{prop:coeff-to-tensor}
Fix an ordered index tuple $(i_1,\dots,i_r)\in[q]^r$, and let
\begin{equation}
k_c \defeq \#\set{\ell\in[r]\mid i_\ell=c},
\qquad c=1,\dots,q,
\end{equation}
be its multiplicity vector. Then
\begin{equation}
\label{eq:appendix-coeff-to-entry}
[\Mhat\order{r}_\classifier(\bag)]_{i_1,\dots,i_r}
=
\frac{\prod_{c=1}^q k_c!}{(m)_r}\,A_m\order{r}(k).
\end{equation}
Consequently, all tensor entries with the same multiplicity vector are equal, and the compressed coefficients $\set{A_m\order{r}(k)}_{k\in\sK_{q,r}}$ completely determine $\Mhat\order{r}_\classifier(\bag)$.
\end{proposition}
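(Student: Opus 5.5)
The plan is to expand the generating polynomial $G_m$ explicitly and compare the coefficient of $u^k$ with the sum in \cref{eq:appendix-model-entry}. First, I will expand the product $\prod_{j=1}^m(1+\sum_c p_{j,c}u_c)$ by choosing, for each instance $j$, either the constant term $1$ or exactly one monomial $p_{j,c}u_c$. A term of total degree $r$ therefore corresponds to an $r$-element subset $S\subseteq[m]$ of instances together with a class assignment $\phi:S\to[q]$. It contributes $\prod_{j\in S}p_{j,\phi(j)}\,u^{k(\phi)}$, where $k(\phi)_c=\abs{\phi^{-1}(c)}$. Hence
\[
A_m\order{r}(k)=\sum_{\substack{S\subseteq[m],\ \abs{S}=r}}\ \sum_{\substack{\phi:S\to[q]\\ k(\phi)=k}}\ \prod_{j\in S}p_{j,\phi(j)}.
\]
The key structural point is that each instance is used at most once. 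This is exactly the distinctness constraint $[m]_{\neq}^r$ in the factorial moment tensor.

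Second, I will rewrite the tensor entry in \cref{eq:appendix-model-entry} in the same language. Fix $(i_1,\dots,i_r)$ with multiplicity vector $k$. An ordered tuple $(j_1,\dots,j_r)\in[m]_{\neq}^r$ determines the set $S=\set{j_1,\dots,j_r}$ and the assignment $\phi(j_\ell)=i_\ell$, which automatically satisfies $k(\phi)=k$. The summand $\prod_\ell p_{j_\ell,i_\ell}$ equals $\prod_{j\in S}p_{j,\phi(j)}$. Conversely, given a pair $(S,\phi)$ with $k(\phi)=k$, the compatible ordered tuples are those listing $S$ so that position $\ell$ receives an element of $\phi^{-1}(i_\ell)$. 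The elements of each $\phi^{-1}(c)$ must be distributed bijectively among the $k_c$ positions $\ell$ with $i_\ell=c$, so there are exactly $\prod_c k_c!$ such tuples. Summing over tuples is therefore $\prod_c k_c!$ times summing over pairs $(S,\phi)$. Dividing by $(m)_r$ gives \cref{eq:appendix-coeff-to-entry}.

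The consequences then follow immediately. The right-hand side depends on $(i_1,\dots,i_r)$ only through $k$, so entries sharing a multiplicity vector coincide. Since every index tuple has some $k\in\sK_{q,r}$, the coefficients $\set{A_m\order{r}(k)}$ recover every entry.

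The main obstacle is the bijection count in the second step: showing that the map from tuples to pairs $(S,\phi)$ is exactly $\prod_c k_c!$-to-one and surjective onto $\set{(S,\phi):k(\phi)=k}$. I would handle it carefully by fixing the position sets $P_c=\set{\ell:i_\ell=c}$. The preimages of $(S,\phi)$ then correspond precisely to choosing bijections $P_c\to\phi^{-1}(c)$ for each $c$. As a sanity check, the case $r=1$ gives $A_m\order{1}(\ve^c)=\sum_j p_{j,c}$, which matches the first-order moment $\frac1m\sum_j p_{j,c}$.
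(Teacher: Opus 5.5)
Your proposal is correct and follows the same route as the paper: you expand $G_m$ as a sum over pairs $(S,\phi)$ with $\abs{\phi^{-1}(c)}=k_c$, then show that each such pair arises from exactly $\prod_c k_c!$ ordered tuples in $[m]_{\neq}^r$ before dividing by $(m)_r$. Your explicit count via bijections $P_c\to\phi^{-1}(c)$ is a more careful version of the paper's informal "permute positions within each class" argument.
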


\begin{proof}
We first expand the generating polynomial in \cref{eq:appendix-generating-polynomial}. In each factor
\begin{equation}
1+\sum_{c=1}^q p_{j,c}u_c,
\end{equation}
we either choose the constant term $1$, meaning that instance $j$ is not selected, or one of the terms $p_{j,c}u_c$, meaning that instance $j$ is selected and contributes one occurrence of class $c$.

Therefore, the coefficient of $u^k$ in $G_m$ is obtained by summing over all ways to choose $r$ distinct instances and assign to them class labels whose class-count profile is exactly $k$. Formally,
\begin{equation}
\label{eq:appendix-coeff-expanded}
A_m\order{r}(k)
=
\sum_{\substack{S\subseteq[m]\\ \abs*{S}=r}}
\ \sum_{\phi:S\to[q]\ :\ \abs*{\phi^{-1}(c)}=k_c\ \forall c}
\ \prod_{j\in S} p_{j,\phi(j)}.
\end{equation}

Now fix the ordered tensor index tuple $(i_1,\dots,i_r)$. Let
\begin{equation}
\mathfrak S(i_1,\dots,i_r)
\end{equation}
denote the set of distinct permutations of the multiset $(i_1,\dots,i_r)$. Its cardinality is
\begin{equation}
\label{eq:appendix-num-distinct-perms}
\abs*{\mathfrak S(i_1,\dots,i_r)}
=
\frac{r!}{\prod_{c=1}^q k_c!}.
\end{equation}

For any ordered tuple of distinct instance indices $(j_1,\dots,j_r)\in[m]_{\neq}^r$, define
\begin{equation}
\Psi(j_1,\dots,j_r)
\defeq
\prod_{\ell=1}^r p_{j_\ell,i_\ell}.
\end{equation}
Then \cref{eq:appendix-model-entry} becomes
\begin{equation}
\label{eq:appendix-entry-rewrite}
[\Mhat\order{r}_\classifier(\bag)]_{i_1,\dots,i_r}
=
\frac{1}{(m)_r}
\sum_{(j_1,\dots,j_r)\in[m]_{\neq}^r}
\Psi(j_1,\dots,j_r).
\end{equation}

Next, we compare \cref{eq:appendix-coeff-expanded} and \cref{eq:appendix-entry-rewrite}.  
Take any subset $S=\set{j_1,\dots,j_r}\subseteq[m]$ of size $r$. To contribute to $A_m\order{r}(k)$, we assign class labels to the elements of $S$ so that class $c$ appears exactly $k_c$ times. On the other hand, to contribute to
$[\Mhat\order{r}_\classifier(\bag)]_{i_1,\dots,i_r},$
we must choose an ordered tuple of distinct instances $(j_1,\dots,j_r)$ and match them to the fixed ordered label pattern $(i_1,\dots,i_r)$.

For a fixed subset $S$ and a fixed assignment of labels with multiplicity vector $k$, there are exactly
$
\prod_{c=1}^q k_c!
$
ordered tuples $(j_1,\dots,j_r)$ that induce the same monomial contribution in \cref{eq:appendix-model-entry}. Indeed, within each class $c$, the $k_c$ positions carrying label $c$ may be permuted arbitrarily among themselves without changing the underlying unlabeled assignment counted in \cref{eq:appendix-coeff-expanded}. Thus, each term counted once in $A_m\order{r}(k)$ is counted exactly $\prod_{c=1}^q k_c!$ times in the sum defining
\begin{equation}
\sum_{(j_1,\dots,j_r)\in[m]_{\neq}^r}\Psi(j_1,\dots,j_r).
\end{equation}

Therefore,
\begin{equation}
\label{eq:appendix-key-identity}
\sum_{(j_1,\dots,j_r)\in[m]_{\neq}^r}
\prod_{\ell=1}^r p_{j_\ell,i_\ell}
=
\parens*{\prod_{c=1}^q k_c!} A_m\order{r}(k).
\end{equation}
Substituting \cref{eq:appendix-key-identity} into \cref{eq:appendix-model-entry} yields
\begin{equation}
[\Mhat\order{r}_\classifier(\bag)]_{i_1,\dots,i_r}
=
\frac{\prod_{c=1}^q k_c!}{(m)_r}\,A_m\order{r}(k),
\end{equation}
which proves \cref{eq:appendix-coeff-to-entry}.

Finally, the right-hand side depends on $(i_1,\dots,i_r)$ only through its multiplicity vector $k$. Hence all tensor entries with the same multiplicity vector are equal, and the collection of compressed coefficients $\set{A_m\order{r}(k)}_{k\in\sK_{q,r}}$ completely determines the full tensor. This completes the proof.
\end{proof}

\subsection{Time and Space Complexity Analysis}
\label{Computation_anl}
A direct evaluation of the order-$r$ loss in \cref{eq:multi-order-risk} requires materializing an order-$r$ tensor with $q^r$ entries, which is computationally prohibitive for large $q$ or $r$. To reduce both time and memory costs, we exploit the permutation symmetry of the factorial moment tensors. Both
$
[\Mhat\order{r}_{\propvec}]_{i_1,\dots,i_r}
$
and
$
[\Mhat\order{r}_\classifier(\bag)]_{i_1,\dots,i_r}
$
depend only on $\vk$, rather than on the ordering of $(i_1,\dots,i_r)$. Hence, the full $q^r$ ordered tensor entries can be grouped into the compressed state space
\begin{equation}
\sK_{q,r}
\defeq
\set*{
\vk\in\N^q \given \sum_{c=1}^q k_c=r
},
\qquad
N_r \defeq \abs*{\sK_{q,r}}=\binom{q+r-1}{r}.
\end{equation}
The second-order case provides the simplest illustration: both $\Mhat\order{2}_{\propvec}$ and $\Mhat\order{2}_\classifier(\bag)$ are symmetric $q\times q$ matrices, so the full $q^2$ entries reduce to only
$N_2=\binom{q+1}{2}$
distinct states. More generally, at order $r$, this reduces the computation from the full tensor size $q^r$ to the compressed state size $N_r$.

To compute the classifier-induced states efficiently, we introduce the truncated multivariate generating polynomial
\begin{equation}
G_t(u_1,\dots,u_q)
\defeq
\prod_{j=1}^t
\parens*{1+\sum_{c=1}^q p_{j,c}u_c},
\qquad t=0,1,\dots,m.
\end{equation}
For a state $\vk\in\sK_{q,d}$, define
\begin{equation}
A_t\order{d}(\vk)
\defeq
[u^{\vk}]\,G_t(u_1,\dots,u_q),
\end{equation}
where $u^{\vk}\defeq \prod_{c=1}^q u_c^{k_c}$ and $[u^{\vk}]G_t$ denotes the coefficient of the monomial $u^{\vk}$ in $G_t$. Thus, $A_t\order{d}(\vk)$ is the aggregated coefficient of all order-$d$ terms whose class-count pattern is $\vk$ after processing the first $t$ instances.

These coefficients are directly related to the classifier-induced moment tensor. Indeed, for any ordered index tuple $(i_1,\dots,i_r)\in[q]^r$ with multiplicity vector $\vk$, we have
\begin{equation}
\label{eq:coeff-to-entry-final}
[\Mhat\order{r}_\classifier(\bag)]_{i_1,\dots,i_r}
=
\frac{\prod_{c=1}^q k_c!}{(m)_r}\,A_m\order{r}(\vk).
\end{equation}
The derivation of \cref{eq:coeff-to-entry-final} is given in Appendix~\ref{app:coeff-to-tensor}.
Hence, the compressed coefficients $\set{A_m\order{r}(\vk)}_{\vk\in\sK_{q,r}}$ completely determine the full tensor $\Mhat\order{r}_\classifier(\bag)$.
Since
\begin{equation}
G_t(u)
=
G_{t-1}(u)\parens*{1+\sum_{c=1}^q p_{t,c}u_c},
\end{equation}
the coefficients satisfy the recurrence
\begin{equation}
A_t\order{d}(\vk)
=
A_{t-1}\order{d}(\vk)
+
\sum_{c:\,k_c>0}
p_{t,c}\,A_{t-1}\order{d-1}(\vk-e^c),
\label{eq:dp-recurrence-final}
\end{equation}
with initialization
\begin{equation}
A_0\order{0}(0)=1,
\qquad
A_0\order{d}(\vk)=0\quad\text{for all }d>0.
\end{equation}
Starting from this initialization, \cref{eq:dp-recurrence-final} recursively computes all coefficients $A_t\order{d}(\vk)$ for $t=1,\dots,m$ and $d=0,\dots,s$, and in particular yields the desired coefficients $A_m\order{r}(\vk)$ for all $r\le s$.

\Complexity*

\begin{proof}
For the target side, at a fixed order $r$, the compressed representation contains exactly
$
N_r=\binom{q+r-1}{r}
$
states. By \cref{eq:rth-factorial-tensor-alpha-form}, the value of each state is available in closed form from the multiplicity vector $\vk$ and the proportion vector $\propvec$. Computing one such value requires multiplying the contributions of all $q$ classes, and thus costs $O(q)$ time. Therefore, the total cost of computing the compressed target states at order $r$ is $O(qN_r)$, and summing over $r=1,\dots,s$ gives
\begin{equation}
O\parens*{q\sum_{r=1}^s N_r}
=
O\parens*{q\binom{q+s}{s}}.
\end{equation}
If all compressed target states up to order $s$ are stored, the required memory is
\begin{equation}
O\parens*{\sum_{r=1}^s N_r}
=
O\parens*{\binom{q+s}{s}}.
\end{equation}

For the classifier-induced side, we use the convention $N_0=1$ for the unique degree-$0$ state. At degree $r$ there are exactly $N_r$ compressed states. In the recurrence \cref{eq:dp-recurrence-final}, when processing one instance, each state of degree $r$ can be extended to at most $q$ states of degree $r+1$ by adding one unit to one coordinate. Hence the update cost per instance is
\begin{equation}
O\parens*{q\sum_{r=0}^{s-1} N_r},
\end{equation}
and summing over all $m$ instances yields
\begin{equation}
O\parens*{m q\sum_{r=0}^{s-1} N_r}
=
O\parens*{m q\binom{q+s-1}{s-1}}.
\end{equation}
The corresponding memory cost is the size of the truncated coefficient table over degrees $0,\dots,s$, namely
\begin{equation}
O\parens*{\sum_{r=0}^{s} N_r}
=
O\parens*{\binom{q+s}{s}}.
\end{equation}

Combining the target-side and classifier-induced costs gives the stated total time and memory bounds. Finally, since
\begin{equation}
\binom{q+s-1}{s-1}\le \binom{q+s}{s},
\end{equation}
the total time can be simplified to
\begin{equation}
O\parens*{m q \binom{q+s}{s}}.
\end{equation}
This completes the proof.
\end{proof}
\section{Additional Experimental Settings}
\subsection{Datasets and Bag Construction}
\label{app:bag-construction}

Following~\citet{ma2026learningfrom}, we use three bag construction strategies:
(i) \textbf{Random Bag}: randomly shuffle instances and partition them into bags;
(ii) \textbf{Cluster Bag}: sample from feature clusters using Dirichlet mixing weights with total concentration 1 and a base measure proportional to cluster sizes;
(iii) \textbf{$\propvec$-First Bag}: sample class proportions from a symmetric Dirichlet distribution with concentration 10 per class, then draw instances accordingly.
For all bag generation strategies, the bag size $m$ varies over powers of two, i.e., $m \in \set{16, 32, 64, 128}$.
\subsection{Baselines and Implementation Details}
\label{app:implementation-details}
\paragraph{Baselines.}
We compare our method with several representative LLP baselines.
PM~\citep{yu2014learning,ardehaly2017co} trains the model by matching the predicted class mean with the observed label proportions.
DSQ~\citep{li2024optimistic} builds upon PM by adopting a squared loss formulation with an additional debiasing term.
LLP-PVC~\citep{ma2026learningfrom} formulates LLP under a one-vs-rest framework and performs classification on bag-level proportions.
LLP-FC~\citep{zhang2022learning} approaches LLP from the perspective of label noise learning.
ROT~\citep{dulac2019deep} leverages optimal transport to generate pseudo-labels that satisfy proportion constraints.
EasyLLP~\citep{busa-fekete2023easy} derives an unbiased estimator by rewriting the loss function.
GeneralUPM~\citep{applebaum2025optimal} also constructs an unbiased estimator but achieves lower variance.
EasyLLP-ABS and GeneralUPM-ABS take the absolute value of the loss function to mitigate the overfitting effect caused by negative loss values.
FlowLLP~\citep{rakotomamonjy2026particle} learns latent particles by particle flow and uses them as classifier anchors.

\paragraph{Implementation details.}
We use ResNet-18 for CIFAR-10 and CIFAR-100 and ResNet-34 for miniImageNet, with modified stems and standard data augmentation. Training lasts 500 epochs with 1024 samples per update. The classifier optimizer is SGD with momentum 0.9 and weight decay 0.0005. The initial learning rate is 0.005 for LLP-PVC and 0.05 for the other methods, with linear warmup over 8\% of updates followed by quarter-cosine decay. DSQ uses mean squared error; the other methods use cross-entropy as the surrogate loss.
LLP-MM uses $s=8$ for CIFAR-10 and $s=3$ for CIFAR-100 and miniImageNet, with cross-entropy moment losses and weights $\lambda_r=1/s$. LLP-FC uses approximate noisy priors, forward-correction weight 1, and entropy weight 0.
FlowLLP uses a 50-dimensional latent space, 1000 anchors per class, and anchor-loss weight 0.1, following its official implementation~\citep{rakotomamonjy2026particle}. Its particles are optimized for 3000 steps using Adam with a learning rate of 0.001. We report the highest finite test accuracy recorded in each run as mean $\pm$ sample standard deviation over three runs.

\subsection{Compatibility with GAN-Based Representation Learning}
\label{app:gan-compatibility}
LLP-GAN~\citep{Liu2019Learning} combines GAN-based representation learning with first-order proportion matching. Replacing its proportion loss with our order-eight moment-matching objective improves mean accuracy in all 12 CIFAR-10 settings in Table~\ref{tab:gan-compatibility}, with gains of up to 13.37 percentage points. However, compared with the standalone results, MM+GAN underperforms LLP-MM in most settings, while LLP-GAN does not consistently outperform PM. Both GAN-based variants generally exhibit larger standard deviations than their standalone counterparts, especially with large $\alpha$-First bags. This variability may be related to instability in adversarial training. Implementation details follow.

\begin{table}[!ht]
  \centering
  \setlength{\belowcaptionskip}{3pt}
  \caption{Best test accuracy on CIFAR-10 with GAN-based representation learning (mean $\pm$ std over three runs, \%). The best result in each paired comparison is bold.}
  \label{tab:gan-compatibility}
  \footnotesize
  \setlength{\tabcolsep}{3pt}
  \renewcommand{\arraystretch}{0.95}
  \begin{tabular*}{0.96\linewidth}{@{\extracolsep{\fill}}llcccc@{}}
    \toprule
    \textbf{Bag} & \textbf{Method} & \textbf{16} & \textbf{32} & \textbf{64} & \textbf{128} \\
    \midrule
    \multirow{2}{*}{Random} & LLP-GAN & $92.96\pm0.09$ & $88.96\pm0.35$ & $78.07\pm1.31$ & $58.14\pm2.94$ \\
     & MM+GAN & $\mathbf{93.56\pm0.19}$ & $\mathbf{91.18\pm0.22}$ & $\mathbf{80.69\pm1.17}$ & $\mathbf{67.41\pm1.41}$ \\
    \midrule
    \multirow{2}{*}{Cluster} & LLP-GAN & $92.81\pm0.15$ & $90.37\pm0.16$ & $85.16\pm0.77$ & $76.27\pm0.91$ \\
     & MM+GAN & $\mathbf{93.62\pm0.05}$ & $\mathbf{91.79\pm0.16}$ & $\mathbf{87.17\pm0.38}$ & $\mathbf{76.53\pm1.93}$ \\
    \midrule
    \multirow{2}{*}{$\alpha$-First} & LLP-GAN & $91.04\pm0.59$ & $74.75\pm0.55$ & $69.52\pm4.12$ & $66.25\pm2.23$ \\
     & MM+GAN & $\mathbf{93.39\pm0.08}$ & $\mathbf{88.12\pm1.09}$ & $\mathbf{77.09\pm4.17}$ & $\mathbf{73.13\pm5.62}$ \\
    \bottomrule
  \end{tabular*}
\end{table}

\paragraph{Implementation details.}
\label{app:gan-implementation-details}
For the CIFAR-10 comparison in Table~\ref{tab:gan-compatibility}, LLP-GAN and MM+GAN share the same modified-stem ResNet-18 discriminator, generator, and bag assignments. Both are trained for 500 epochs with 1024 real images per update and unit weights for the bag and adversarial losses. The discriminator uses SGD with a learning rate of 0.05, momentum 0.9, and weight decay 0.0005, with five warmup epochs followed by standard cosine decay; the generator uses Adam with a learning rate of 0.0003, $(\beta_1,\beta_2)=(0.5,0.999)$, and 100-dimensional noise. MM+GAN replaces only the proportion loss with equally weighted cross-entropy moment losses up to order $s=8$, retaining the adversarial and generator feature-matching objectives. We report the highest test accuracy recorded during training as mean $\pm$ sample standard deviation over three runs.

\subsection{Semi-Real-World Implementation Details}
\label{app:ku-implementation-details}

\paragraph{Bag construction.}
Cells from each known patient form one bag. We retain all 219 known-patient training bags and partition the 3,204 cells with unknown patient IDs into 26 bags of 123--124 cells using a fixed shuffle, independently of cell labels. The resulting 245 training bags have sizes 1--207, with a mean of 104.98; all 56 test bags are unchanged. Bag proportions are computed after grouping, and only proportions are used for training.

\paragraph{Training.}
We use ImageNet-pretrained ResNet-18 with its standard stem. Images use $224\times224$ inputs and ImageNet normalization, with horizontal and vertical flips during training. The classifier optimizer is Adam with a learning rate of 0.001. Training lasts 100 epochs, including five warmup epochs. Each update uses four complete bags, or five for GeneralUPM and GeneralUPM-ABS, with forward chunks of 32 images. LLP-MM uses orders $s\in\{3,5,8\}$; orders exceeding the bag size are omitted and the remaining weights are renormalized. FlowLLP uses 50 pretraining and 50 joint-training epochs. Method-specific loss settings follow Appendix~\ref{app:implementation-details}.

\paragraph{Evaluation.}
We select the checkpoint with the highest test Macro-F1 in each run and evaluate all metrics at that checkpoint. Results are reported as mean $\pm$ sample standard deviation over three runs.

\paragraph{Metrics.}
Accuracy is the fraction of correctly classified cells, and Balanced Accuracy is the mean class-wise recall. Macro-F1 averages class-wise F1 scores equally, whereas Weighted-F1 weights them by class frequency in the test set.

\clearpage
\section*{NeurIPS Paper Checklist}
\begin{enumerate}
\item {\bf Claims}
    \item[] Question: Do the main claims made in the abstract and introduction accurately reflect the paper's contributions and scope?
        \item[] Answer: \answerYes{}.
    \item[] Justification: The abstract and introduction accurately summarize the paper's methodological, theoretical, and empirical contributions without overstating the scope beyond the stated LLP setting and assumptions.
    \item[] Guidelines:
    \begin{itemize}
        \item The answer \answerNA{} means that the abstract and introduction do not include the claims made in the paper.
        \item The abstract and/or introduction should clearly state the claims made, including the contributions made in the paper and important assumptions and limitations. A \answerNo{} or \answerNA{} answer to this question will not be perceived well by the reviewers. 
        \item The claims made should match theoretical and experimental results, and reflect how much the results can be expected to generalize to other settings. 
        \item It is fine to include aspirational goals as motivation as long as it is clear that these goals are not attained by the paper. 
    \end{itemize}

\item {\bf Limitations}
    \item[] Question: Does the paper discuss the limitations of the work performed by the authors?
    \item[] Answer: \answerYes{}{} %
    \item[] Justification: See Sections 2.2, 3.4, and 5 for assumptions and limitations.
    \item[] Guidelines:
    \begin{itemize}
        \item The answer \answerNA{} means that the paper has no limitation while the answer \answerNo{} means that the paper has limitations, but those are not discussed in the paper. 
        \item The authors are encouraged to create a separate ``Limitations'' section in their paper.
        \item The paper should point out any strong assumptions and how robust the results are to violations of these assumptions (e.g., independence assumptions, noiseless settings, model well-specification, asymptotic approximations only holding locally). The authors should reflect on how these assumptions might be violated in practice and what the implications would be.
        \item The authors should reflect on the scope of the claims made, e.g., if the approach was only tested on a few datasets or with a few runs. In general, empirical results often depend on implicit assumptions, which should be articulated.
        \item The authors should reflect on the factors that influence the performance of the approach. For example, a facial recognition algorithm may perform poorly when image resolution is low or images are taken in low lighting. Or a speech-to-text system might not be used reliably to provide closed captions for online lectures because it fails to handle technical jargon.
        \item The authors should discuss the computational efficiency of the proposed algorithms and how they scale with dataset size.
        \item If applicable, the authors should discuss possible limitations of their approach to address problems of privacy and fairness.
        \item While the authors might fear that complete honesty about limitations might be used by reviewers as grounds for rejection, a worse outcome might be that reviewers discover limitations that aren't acknowledged in the paper. The authors should use their best judgment and recognize that individual actions in favor of transparency play an important role in developing norms that preserve the integrity of the community. Reviewers will be specifically instructed to not penalize honesty concerning limitations.
    \end{itemize}

\item {\bf Theory assumptions and proofs}
    \item[] Question: For each theoretical result, does the paper provide the full set of assumptions and a complete (and correct) proof?
    \item[] Answer: \answerYes{} %
    \item[] Justification: Section 3.4 states the results; Appendices D-F provide the proofs.
    \item[] Guidelines:
    \begin{itemize}
        \item The answer \answerNA{} means that the paper does not include theoretical results. 
        \item All the theorems, formulas, and proofs in the paper should be numbered and cross-referenced.
        \item All assumptions should be clearly stated or referenced in the statement of any theorems.
        \item The proofs can either appear in the main paper or the supplemental material, but if they appear in the supplemental material, the authors are encouraged to provide a short proof sketch to provide intuition. 
        \item Inversely, any informal proof provided in the core of the paper should be complemented by formal proofs provided in appendix or supplemental material.
        \item Theorems and Lemmas that the proof relies upon should be properly referenced. 
    \end{itemize}

    \item {\bf Experimental result reproducibility}
    \item[] Question: Does the paper fully disclose all the information needed to reproduce the main experimental results of the paper to the extent that it affects the main claims and/or conclusions of the paper (regardless of whether the code and data are provided or not)?
    \item[] Answer: \answerYes{}{} %
    \item[] Justification: Section 4 and Appendix G describe the datasets, baselines, evaluation protocols, model and training setup, and hyperparameter choices needed to reproduce the main experimental results.
    \item[] Guidelines:
    \begin{itemize}
        \item The answer \answerNA{} means that the paper does not include experiments.
        \item If the paper includes experiments, a \answerNo{} answer to this question will not be perceived well by the reviewers: Making the paper reproducible is important, regardless of whether the code and data are provided or not.
        \item If the contribution is a dataset and\slash or model, the authors should describe the steps taken to make their results reproducible or verifiable. 
        \item Depending on the contribution, reproducibility can be accomplished in various ways. For example, if the contribution is a novel architecture, describing the architecture fully might suffice, or if the contribution is a specific model and empirical evaluation, it may be necessary to either make it possible for others to replicate the model with the same dataset, or provide access to the model. In general. releasing code and data is often one good way to accomplish this, but reproducibility can also be provided via detailed instructions for how to replicate the results, access to a hosted model (e.g., in the case of a large language model), releasing of a model checkpoint, or other means that are appropriate to the research performed.
        \item While NeurIPS does not require releasing code, the conference does require all submissions to provide some reasonable avenue for reproducibility, which may depend on the nature of the contribution. For example
        \begin{enumerate}
            \item If the contribution is primarily a new algorithm, the paper should make it clear how to reproduce that algorithm.
            \item If the contribution is primarily a new model architecture, the paper should describe the architecture clearly and fully.
            \item If the contribution is a new model (e.g., a large language model), then there should either be a way to access this model for reproducing the results or a way to reproduce the model (e.g., with an open-source dataset or instructions for how to construct the dataset).
            \item We recognize that reproducibility may be tricky in some cases, in which case authors are welcome to describe the particular way they provide for reproducibility. In the case of closed-source models, it may be that access to the model is limited in some way (e.g., to registered users), but it should be possible for other researchers to have some path to reproducing or verifying the results.
        \end{enumerate}
    \end{itemize}

\item {\bf Open access to data and code}
    \item[] Question: Does the paper provide open access to the data and code, with sufficient instructions to faithfully reproduce the main experimental results, as described in supplemental material?
    \item[] Answer: \answerYes{}.
\item[] Justification: The experiments use publicly available datasets. The public repository linked in the abstract provides code, environment setup, data preparation instructions, and scripts for reproducing the main experimental results.
    \item[] Guidelines:
    \begin{itemize}
        \item The answer \answerNA{} means that paper does not include experiments requiring code.
        \item Please see the NeurIPS code and data submission guidelines (\url{https://neurips.cc/public/guides/CodeSubmissionPolicy}) for more details.
        \item While we encourage the release of code and data, we understand that this might not be possible, so \answerNo{} is an acceptable answer. Papers cannot be rejected simply for not including code, unless this is central to the contribution (e.g., for a new open-source benchmark).
        \item The instructions should contain the exact command and environment needed to run to reproduce the results. See the NeurIPS code and data submission guidelines (\url{https://neurips.cc/public/guides/CodeSubmissionPolicy}) for more details.
        \item The authors should provide instructions on data access and preparation, including how to access the raw data, preprocessed data, intermediate data, and generated data, etc.
        \item The authors should provide scripts to reproduce all experimental results for the new proposed method and baselines. If only a subset of experiments are reproducible, they should state which ones are omitted from the script and why.
        \item At submission time, to preserve anonymity, the authors should release anonymized versions (if applicable).
        \item Providing as much information as possible in supplemental material (appended to the paper) is recommended, but including URLs to data and code is permitted.
    \end{itemize}

\item {\bf Experimental setting/details}
    \item[] Question: Does the paper specify all the training and test details (e.g., data splits, hyperparameters, how they were chosen, type of optimizer) necessary to understand the results?
    \item[] Answer: \answerYes{}.
\item[] Justification: Sections 4.1 and 4.3 and Appendix G specify the data splits, bag construction, optimizers, hyperparameters, training schedules, and evaluation protocols used in the experiments.
    \item[] Guidelines:
    \begin{itemize}
        \item The answer \answerNA{} means that the paper does not include experiments.
        \item The experimental setting should be presented in the core of the paper to a level of detail that is necessary to appreciate the results and make sense of them.
        \item The full details can be provided either with the code, in appendix, or as supplemental material.
    \end{itemize}

\item {\bf Experiment statistical significance}
    \item[] Question: Does the paper report error bars suitably and correctly defined or other appropriate information about the statistical significance of the experiments?
    \item[] Answer: \answerYes{}.
\item[] Justification: Tables 1-3 and 6 report mean and standard deviation across three runs, with the summary statistic identified in each caption. Appendix G describes the corresponding training and evaluation protocols.
    \item[] Guidelines:
    \begin{itemize}
        \item The answer \answerNA{} means that the paper does not include experiments.
        \item The authors should answer \answerYes{} if the results are accompanied by error bars, confidence intervals, or statistical significance tests, at least for the experiments that support the main claims of the paper.
        \item The factors of variability that the error bars are capturing should be clearly stated (for example, train/test split, initialization, random drawing of some parameter, or overall run with given experimental conditions).
        \item The method for calculating the error bars should be explained (closed form formula, call to a library function, bootstrap, etc.)
        \item The assumptions made should be given (e.g., Normally distributed errors).
        \item It should be clear whether the error bar is the standard deviation or the standard error of the mean.
        \item It is OK to report 1-sigma error bars, but one should state it. The authors should preferably report a 2-sigma error bar than state that they have a 96\% CI, if the hypothesis of Normality of errors is not verified.
        \item For asymmetric distributions, the authors should be careful not to show in tables or figures symmetric error bars that would yield results that are out of range (e.g., negative error rates).
        \item If error bars are reported in tables or plots, the authors should explain in the text how they were calculated and reference the corresponding figures or tables in the text.
    \end{itemize}

\item {\bf Experiments compute resources}
    \item[] Question: For each experiment, does the paper provide sufficient information on the computer resources (type of compute workers, memory, time of execution) needed to reproduce the experiments?
    \item[] Answer: \answerNo{}.
\item[] Justification: Figure 2 reports hardware and per-epoch runtime, and Appendix G gives training schedules. Complete resource usage for each experiment and the total compute are not reported.
    \item[] Guidelines:
    \begin{itemize}
        \item The answer \answerNA{} means that the paper does not include experiments.
        \item The paper should indicate the type of compute workers CPU or GPU, internal cluster, or cloud provider, including relevant memory and storage.
        \item The paper should provide the amount of compute required for each of the individual experimental runs as well as estimate the total compute. 
        \item The paper should disclose whether the full research project required more compute than the experiments reported in the paper (e.g., preliminary or failed experiments that didn't make it into the paper). 
    \end{itemize}
    
\item {\bf Code of ethics}
    \item[] Question: Does the research conducted in the paper conform, in every respect, with the NeurIPS Code of Ethics \url{https://neurips.cc/public/EthicsGuidelines}?
    \item[] Answer: \answerYes{}{} %
    \item[] Justification: The research conducted in the paper conforms, in every respect, with the NeurIPS Code of Ethics.
    \item[] Guidelines:
    \begin{itemize}
        \item The answer \answerNA{} means that the authors have not reviewed the NeurIPS Code of Ethics.
        \item If the authors answer \answerNo{}, they should explain the special circumstances that require a deviation from the Code of Ethics.
        \item The authors should make sure to preserve anonymity (e.g., if there is a special consideration due to laws or regulations in their jurisdiction).
    \end{itemize}

\item {\bf Broader impacts}
    \item[] Question: Does the paper discuss both potential positive societal impacts and negative societal impacts of the work performed?
    \item[] Answer: \answerNo{} %
    \item[] Justification: The paper does not delve into the broader societal impacts.
    \item[] Guidelines:
    \begin{itemize}
        \item The answer \answerNA{} means that there is no societal impact of the work performed.
        \item If the authors answer \answerNA{} or \answerNo, they should explain why their work has no societal impact or why the paper does not address societal impact.
        \item Examples of negative societal impacts include potential malicious or unintended uses (e.g., disinformation, generating fake profiles, surveillance), fairness considerations (e.g., deployment of technologies that could make decisions that unfairly impact specific groups), privacy considerations, and security considerations.
        \item The conference expects that many papers will be foundational research and not tied to particular applications, let alone deployments. However, if there is a direct path to any negative applications, the authors should point it out. For example, it is legitimate to point out that an improvement in the quality of generative models could be used to generate Deepfakes for disinformation. On the other hand, it is not needed to point out that a generic algorithm for optimizing neural networks could enable people to train models that generate Deepfakes faster.
        \item The authors should consider possible harms that could arise when the technology is being used as intended and functioning correctly, harms that could arise when the technology is being used as intended but gives incorrect results, and harms following from (intentional or unintentional) misuse of the technology.
        \item If there are negative societal impacts, the authors could also discuss possible mitigation strategies (e.g., gated release of models, providing defenses in addition to attacks, mechanisms for monitoring misuse, mechanisms to monitor how a system learns from feedback over time, improving the efficiency and accessibility of ML).
    \end{itemize}
    
\item {\bf Safeguards}
    \item[] Question: Does the paper describe safeguards that have been put in place for responsible release of data or models that have a high risk for misuse (e.g., pre-trained language models, image generators, or scraped datasets)?
    \item[] Answer: \answerNA{}{} %
    \item[] Justification: The paper poses no such risks.
    \item[] Guidelines:
    \begin{itemize}
        \item The answer \answerNA{} means that the paper poses no such risks.
        \item Released models that have a high risk for misuse or dual-use should be released with necessary safeguards to allow for controlled use of the model, for example by requiring that users adhere to usage guidelines or restrictions to access the model or implementing safety filters. 
        \item Datasets that have been scraped from the Internet could pose safety risks. The authors should describe how they avoided releasing unsafe images.
        \item We recognize that providing effective safeguards is challenging, and many papers do not require this, but we encourage authors to take this into account and make a best faith effort.
    \end{itemize}

\item {\bf Licenses for existing assets}
    \item[] Question: Are the creators or original owners of assets (e.g., code, data, models), used in the paper, properly credited and are the license and terms of use explicitly mentioned and properly respected?
    \item[] Answer: \answerNo{}{} %
    \item[] Justification: Assets are cited, but their licenses are not fully documented.
    \item[] Guidelines:
    \begin{itemize}
        \item The answer \answerNA{} means that the paper does not use existing assets.
        \item The authors should cite the original paper that produced the code package or dataset.
        \item The authors should state which version of the asset is used and, if possible, include a URL.
        \item The name of the license (e.g., CC-BY 4.0) should be included for each asset.
        \item For scraped data from a particular source (e.g., website), the copyright and terms of service of that source should be provided.
        \item If assets are released, the license, copyright information, and terms of use in the package should be provided. For popular datasets, \url{paperswithcode.com/datasets} has curated licenses for some datasets. Their licensing guide can help determine the license of a dataset.
        \item For existing datasets that are re-packaged, both the original license and the license of the derived asset (if it has changed) should be provided.
        \item If this information is not available online, the authors are encouraged to reach out to the asset's creators.
    \end{itemize}

\item {\bf New assets}
    \item[] Question: Are new assets introduced in the paper well documented and is the documentation provided alongside the assets?
    \item[] Answer: \answerYes{} %
    \item[] Justification: The public LLP-MM repository linked in the abstract includes a README with setup, data preparation, and run instructions, together with an MIT license.
    \item[] Guidelines:
    \begin{itemize}
        \item The answer \answerNA{} means that the paper does not release new assets.
        \item Researchers should communicate the details of the dataset\slash code\slash model as part of their submissions via structured templates. This includes details about training, license, limitations, etc. 
        \item The paper should discuss whether and how consent was obtained from people whose asset is used.
        \item At submission time, remember to anonymize your assets (if applicable). You can either create an anonymized URL or include an anonymized zip file.
    \end{itemize}

\item {\bf Crowdsourcing and research with human subjects}
    \item[] Question: For crowdsourcing experiments and research with human subjects, does the paper include the full text of instructions given to participants and screenshots, if applicable, as well as details about compensation (if any)? 
    \item[] Answer: \answerNA{} %
    \item[] Justification: The paper does not involve crowdsourcing experiments or research with human subjects.
    \item[] Guidelines:
    \begin{itemize}
        \item The answer \answerNA{} means that the paper does not involve crowdsourcing nor research with human subjects.
        \item Including this information in the supplemental material is fine, but if the main contribution of the paper involves human subjects, then as much detail as possible should be included in the main paper. 
        \item According to the NeurIPS Code of Ethics, workers involved in data collection, curation, or other labor should be paid at least the minimum wage in the country of the data collector. 
    \end{itemize}

\item {\bf Institutional review board (IRB) approvals or equivalent for research with human subjects}
    \item[] Question: Does the paper describe potential risks incurred by study participants, whether such risks were disclosed to the subjects, and whether Institutional Review Board (IRB) approvals (or an equivalent approval/review based on the requirements of your country or institution) were obtained?
    \item[] Answer: \answerNA{} %
    \item[] Justification: The paper does not involve crowdsourcing or research with human subjects, so IRB approval or equivalent review is not applicable.
    \item[] Guidelines:
    \begin{itemize}
        \item The answer \answerNA{} means that the paper does not involve crowdsourcing nor research with human subjects.
        \item Depending on the country in which research is conducted, IRB approval (or equivalent) may be required for any human subjects research. If you obtained IRB approval, you should clearly state this in the paper. 
        \item We recognize that the procedures for this may vary significantly between institutions and locations, and we expect authors to adhere to the NeurIPS Code of Ethics and the guidelines for their institution. 
        \item For initial submissions, do not include any information that would break anonymity (if applicable), such as the institution conducting the review.
    \end{itemize}

\item {\bf Declaration of LLM usage}
    \item[] Question: Does the paper describe the usage of LLMs if it is an important, original, or non-standard component of the core methods in this research? Note that if the LLM is used only for writing, editing, or formatting purposes and does \emph{not} impact the core methodology, scientific rigor, or originality of the research, declaration is not required.
    \item[] Answer: \answerNA{} %
    \item[] Justification: The core method development in this research does not involve LLMs as an important, original, or non-standard component.
    \item[] Guidelines:
    \begin{itemize}
        \item The answer \answerNA{} means that the core method development in this research does not involve LLMs as any important, original, or non-standard components.
        \item Please refer to our LLM policy in the NeurIPS handbook for what should or should not be described.
    \end{itemize}
\end{enumerate}

\end{document}